\documentclass{hcig}

\usepackage{tocloft}
\usepackage{hyperref}
\usepackage{natbib}
\usepackage{url}
\usepackage[T1]{fontenc}
\usepackage[utf8]{inputenc}

\usepackage{amsmath}
\usepackage{amssymb}
\usepackage{amsfonts}
\usepackage{mathtools}
\usepackage{amsthm}
\usepackage{array}
\usepackage{bbold}
\usepackage{bm}
\usepackage{float}
\usepackage{esvect}
\usepackage{latexsym}
\usepackage{wrapfig}

\usepackage{graphicx}
\usepackage{tikz}
\usepackage{enumitem}
\usepackage{subcaption}

\usepackage{algorithm}
\usepackage{algpseudocode} 
\usepackage{setspace}

\usepackage{booktabs}
\usepackage{multirow}

\theoremstyle{definition}
\newtheorem{definition}{Definition}
\newtheorem{assumption}{Assumption}
\newtheorem{lemma}{Lemma}
\newtheorem{theorem}{Theorem}

\algrenewcommand\algorithmicrequire{\textbf{Input:}}
\algrenewcommand\algorithmicensure{\textbf{Output:}}

\def\defn{\,\triangleq\,}
\def\argmin{\mathop{\mathsf{arg\,min}}} 
\def\argmax{\mathop{\mathsf{arg\,max}}}
\def\lim{\mathop{\mathsf{lim}}} 
\def\min{\mathop{\mathsf{min}}}
\def\max{\mathop{\mathsf{max}}}
\def\sup{\mathop{\mathsf{sup}}}

\def\prox{\mathsf{prox}}
\def\log{\mathsf{log\,}}
\def\exp{\mathsf{exp}}

\def\FI{\mathsf{FI}}
\def\KL{\mathsf{KL}}

\def\ebm{{\bm{e}}}

\def\kbm{{\bm{k}}}

\def\xbm{{\bm{x}}}
\def\ybm{{\bm{y}}}
\def\zbm{{\bm{z}}}
\def\Xbm{{\bm{X}}}

\def\zerobm{\bm{0}}
\def\onebm{\bm{1}}

\def\Abm{{\bm{A}}}
\def\Bbm{{\bm{B}}}

\def\Hbm{{\bm{H}}}
\def\Ibm{{\bm{I}}}

\def\Kbm{{\bm{K}}}

\def\Mbm{{\bm{M}}}

\def\Tbm{{\bm{T}}}

\def\Wbm{{\bm{W}}}
\def\Xbm{{\bm{X}}}

\def\R{\mathbb{R}}
\def\E{\mathbb{E}}

\def\Xbf{{\mathbf{X}}}

\def\Ncal{{\mathcal{N}}}

\def\Lcal{{\mathcal{L}}}

\DeclareMathAlphabet{\mathsfit}{T1}{\sfdefault}{\mddefault}{\sldefault}

\def\Ssfit{{\mathsfit{S}}}
\def\Dsfit{{\mathsfit{D}}}

\def\Tsfit{\mathsfit{T}}

\def\epsilon{\varepsilon}

\def\dXt{{d\Xt}}
\def\dBt{{d\Bbm_t}}
\def\Xt{{\xbm_t}}

\def\Bt{{\Bbm_t}}

\def\sigmat{\sigma_t}

\def\Wn{\Wbm_n}

\def\Xn{{\xbm_n}}

\def\Xnpo{{\xbm_{n+1}}}

\def\Znpo{{\zbm_{n+1}}}

\def\sigman{\sigma_n}

\def\psigma{p_\sigma}

\def\proposed{PiX-MC}
\def\aproposed{APiX-MC}
\def\mbproposed{multi-block PiX-MC}
\def\ambproposed{multi-block APiX-MC}

\algrenewcommand\algorithmicrequire{\textbf{Input:}}
\algrenewcommand\algorithmicensure{\textbf{Output:}}

\title{Picard Proximal Monte Carlo for Parallel Bayesian Imaging with Score-Based Generative Priors}

\author{Deliang Wei$^1$, Evan Bell$^1$, Wenhan Guo$^1$, Yifan Chen$^2$, and Yu Sun$^{1,}$\textsuperscript{\Letter}}
\address{$^1$Johns Hopkins University \quad $^2$University of California, Los Angeles\\\smallskip
{\footnotesize \textsuperscript{\Letter}Corresponding author: ysun214@jh.edu}}

\headertitle{PiX-MC}
\headerauthors{Wei et al.}

\begin{document}

\maketitle
\thispagestyle{firstpagestyle}

\begin{abstract}
Bayesian imaging inverse problems often require sampling from high-dimensional posterior distributions. While recent score-based and diffusion models provide expressive Bayesian priors, their sampling procedures remain inherently sequential and computationally expensive for large-scale imaging applications. We propose \textit{\textbf{\proposed}}, a time-parallel posterior sampling framework based on proximal Langevin dynamics and Picard iteration. The proximal-likelihood formulation exploits the fact that many imaging likelihoods admit efficient, problem-specific proximal operators, while Picard refinement exposes parallelism across discretization nodes and naturally supports multi-GPU implementation. To further improve practical scalability and sampling performance, we develop multi-block and annealed variants of the proposed framework. We establish convergence guarantees under transparent assumptions, accommodating non-log-concave posteriors, imperfect learned score models, multi-block implementations, and annealing schedules. Experiments on a diverse collection of imaging inverse problems demonstrate that {\proposed} substantially reduces wall-clock time while preserving reconstruction quality. On a $512\times512\times80$ sparse-view computed tomography (CT) problem, annealed multi-block {\proposed} achieves up to a $50\times$ runtime speedup over the standard Langevin sampler using eight GPUs.
\end{abstract}

\section{Introduction}

The recovery of an unknown image $\xbm \in \R^d$ from a set of sparse and noisy measurements $\ybm \in \R^c$ is a fundamental problem in computational imaging. 
It is often mathematically formulated as an \emph{inverse problem} of the following system
\begin{equation}
\ybm = \Abm(\xbm) + \ebm,
\end{equation}
where the forward operator $\Abm: \R^d \rightarrow \R^c$ models the imaging system response and 
$\ebm \in \R^c$ denotes measurement noise. 
Due to the sparsity and noisiness of $\ybm$, multiple plausible $\xbm$ can explain the same set of measurements, leading to significant uncertainty in the recovered image.
Bayesian sampling methods offer a principled framework for addressing this ambiguity. 
Rather than seeking a single point estimate, they aim to characterize the full posterior distribution $\pi(\xbm|\ybm)$ given an image prior $p(\xbm)$.
This capability is particularly beneficial in high-stakes biomedical and scientific imaging applications, where different imaging results can lead to different scientific conclusions, clinical decisions, or downstream actions~\citep{begoli2019the,kam2019uncertainty}.

Traditionally, the image prior is handcrafted, including total variation and sparsity-promoting penalties~\citep{rudin1992nonlinear, sauer1992bayesian, gu2014weighted}; yet such priors often fall short in capturing the complexity of real-world image distributions. 
Recently, \emph{score-based diffusion models (SDMs)} have emerged as a powerful class of generative models capable of modeling such complexity, inspiring a growing body of work on SDM-based posterior sampling for imaging inverse problems. 
One popular line of work incorporates measurements as guidance to steer the generative process toward the posterior distribution~\citep{ho2021classifierfree, chung2022diffusion}; this approach is convenient to implement on top of pre-trained models and has been demonstrated across a wide range of imaging applications~\citep{chung2022score, song2024solving, he2025diffusion}. 
However, such guidance-based methods generally lack theoretical guarantees on whether the generated samples indeed follow the true posterior, significantly limiting their interpretability and reliability in high-stakes settings.

In light of this, another class of methods integrates SDMs with classical \emph{Markov chain Monte Carlo (MCMC)} frameworks, providing more rigorous posterior sampling guarantees~\citep{brooks2011handbook}. 
Popular instantiations include combining Langevin dynamics with the score network of an SDM~\citep{jalal2021robust,sun2024provable}, and incorporating a full SDM as an empirical proximal sampler in the split Gibbs process~\citep{coeurdoux2024plug,wu2024principled}. 
Despite their algorithmic differences, these methods share a common sequential structure: samples must be evolved through long iterative trajectories, each requiring repeated evaluations of both the score model and the forward operator. 
As a result, the computational cost scales poorly with the problem dimension and the complexity of the forward model, making large-scale imaging applications particularly challenging. 
This limitation is increasingly pronounced as modern machine learning systems are routinely deployed on \emph{multiple graphics processing unit (multi-GPU)} platforms that offer substantial 
\begin{wrapfigure}{r}{0.6\textwidth}
  \vspace{-10pt}
  \begin{center}
    \includegraphics[width=0.6\textwidth]{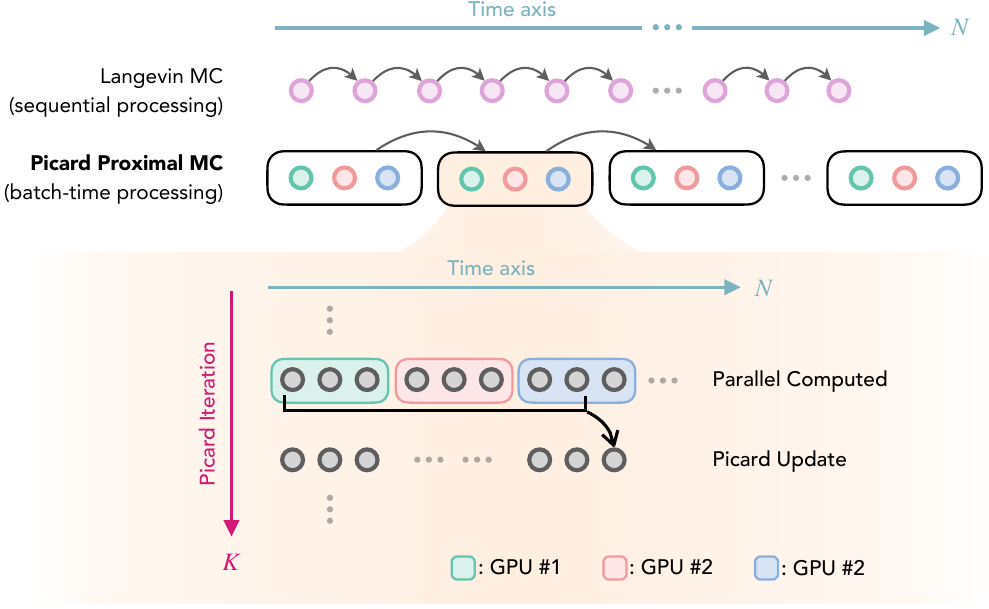}
  \end{center}
  \vspace{-10pt}
  \caption{Schematic illustration of the proposed Picard Proximal Monte Carlo ({\proposed}) method. {\proposed} employs time-batch processing and exploits Picard iteration to enable parallel computation across time nodes.}
  \label{fig:1}
\end{wrapfigure}
parallel computational resources; yet existing samplers are not designed to exploit such parallelism.
While efforts have explored parallelism for solvers applicable to inverse problems~\citep{lian2015asynchronous,zhou2018distributed,sun2021asyncred}, they are largely restricted to deterministic optimization frameworks, limiting the direct applicability of these techniques to posterior sampling.

In this work, we bridge this gap by proposing \emph{\textbf{Pi}card pro\textbf{X}imal \textbf{M}onte \textbf{C}arlo (\textbf{\proposed})}, a time-parallel posterior sampling framework for scalable Bayesian imaging. 
{\proposed} is built upon a forward-backward splitting of Langevin dynamics that decouples the prior and likelihood, enabling modular integration of a \emph{score-based generative prior} and a likelihood update via \emph{proximal mapping}.
To overcome the sequential bottleneck, {\proposed} further leverages \emph{Picard iteration} to parallelize the sampling trajectory across time steps. 
A schematic illustration is provided in Figure~\ref{fig:1}.
The key contributions of this work are as follows:
\begin{itemize}
\item We formulate the sampling trajectory of {\proposed} as a fixed-point problem and solve it via Picard iteration. 
Rather than parallelizing across image pixels, {\proposed} introduces parallelism across time steps, enabling concurrent processing over multiple GPUs. 
Unlike existing Picard-based methods~\citep{shih2023parallel, anari2024fast}, {\proposed} incorporates a forward--backward formulation that imposes measurement consistency through a proximal update. 
This design is particularly attractive for computational imaging, as it naturally exploits the structure of many imaging forward operators.
We further develop \emph{annealed} and \emph{multi-block} variants of {\proposed}. 
The former employs temperature annealing to accelerate the underlying sequential sampling trajectory, while the latter partitions the sampling trajectory into multiple blocks, allowing {\proposed} to still operate efficiently under limited GPU budgets.

\item We present a comprehensive theoretical analysis of the proposed suite of {\proposed} algorithms. 
Our analysis bounds the time-averaged relative Fisher information by an $O(1/T)$ transient term together with explicit approximation-error terms.
The theory accommodates imperfect learned score networks and weakly convex imaging likelihoods, covering practical settings encountered in a wide range of imaging problems. 
Furthermore, our results provide a detailed error decomposition that explicitly characterizes the individual contributions of forward--backward splitting, Picard iterations, and time discretization, offering interpretable insight into the sources of sampling error.

\item We validate {\proposed} on a diverse set of computational imaging inverse problems, including both linear and nonlinear forward models. 
In addition, we consider large-scale problems ranging from $1024\times1024$ image deblurring to $512\times512\times80$ sparse-view computed tomography (CT), covering both 2D and 3D imaging settings. 
These experiments are conducted on multi-GPU platforms to reflect realistic deployment conditions.
The results demonstrate that {\proposed} generalizes across these settings while substantially reducing wall-clock time without sacrificing reconstruction quality. On the large-scale 3D CT task, {\aproposed} achieves up to a $50\times$ runtime speedup over the standard Langevin sampler using eight GPUs.

\end{itemize}

\section{Background and Related Works}\label{sec:background}
We begin by reviewing regularized image reconstruction from a Bayesian perspective.
We then introduce SDMs and discuss posterior sampling methods for imaging inverse problems.
Finally, we review related parallelization approaches for image reconstruction.

\subsection{Bayesian Image Reconstruction}
Classical approaches to imaging inverse problems are formulated as regularized optimization problems, seeking a solution that simultaneously fits the measurements and conforms to prior knowledge about the image. 
Within the Bayesian framework, this corresponds to \emph{maximum a posteriori (MAP)} estimation
\begin{equation}
\label{Eq:MAP}
\hat{\xbm} = \argmax_{\xbm\in\R^d} \ell(\ybm|\xbm)\, p(\xbm) 
= \argmin_{\xbm\in\R^d} \big\{ L(\xbm) + V(\xbm) \big\},
\end{equation}
where $\ell(\ybm|\xbm)$ is the likelihood of the measurements, $L(\xbm) = -\log \ell(\ybm|\xbm)$ the data-fidelity term, and $V(\xbm) = -\log p(\xbm)$ is the regularizer. 
A common choice for the data-fidelity is the least-squares loss $L(\xbm) = \frac{1}{2\beta^2}\|\ybm - \Abm(\xbm)\|_2^2$, which arises from the Gaussian likelihood $\ybm|\xbm \sim \mathcal{N}(\Abm(\xbm), \beta^2\Ibm)$ with noise variance $\beta^2>0$. 
Popular regularizers include transform-domain sparsity penalties such as $V(\xbm)=\tau\|\Tbm\xbm\|_1$~\citep{candes2006robust, donoho2006compressed}, where $\Tbm$ is a sparsifying transform and $\tau > 0$ controls the regularization strength.

Proximal methods~\citep{boyd2011} are widely adopted for solving~\eqref{Eq:MAP} in computational imaging.
Their popularity stems from the ability to handle nonsmooth regularizers, such as transform-domain sparsity~\citep{beck.teboulle2009fast}, while exploiting the composite structure of~\eqref{Eq:MAP}. 
Moreover, proximal algorithms naturally accommodate the structure of imaging forward operators, often enabling efficient implementations when $\Abm$ admits fast transforms or closed-form inverses~\citep{afonso2010fast}. 
The fundamental building block is the \emph{proximal operator}
\begin{equation}
\label{Eq:ProximalOperator}
\prox_{\eta F}(\zbm) \defn \argmin_{\xbm \in \R^d} \left\{
\frac{1}{2}\|\xbm-\zbm\|_2^2 + \eta F(\xbm)\right\},
\end{equation}
where the quadratic term encourages the output to remain close to the input $\zbm$, and $\eta>0$ controls the influence of $F$. 
The proximal operator admits an interpretation as a \emph{backward} or \emph{implicit} gradient step, as its optimality condition yields
\begin{equation}\nonumber
\xbm \in \zbm - \eta \partial F(\xbm) \quad \Leftrightarrow \quad 
\xbm = (I + \eta \partial F)^{-1}(\zbm),
\end{equation}
where $\partial F(\xbm)$ denotes the subdifferential of $F$ at $\xbm$, and $(I + \eta \partial F)^{-1}$ is known as the \emph{resolvent} of $\partial F$. 
A common proximal algorithm is \emph{forward-backward splitting (FBS)}, which combines an explicit gradient step with an implicit proximal step. 
For instance, given a smooth $V$ and well-structured $\Abm$, the update of FBS reads as
\begin{align}
\label{Eq:PGM}
\Xnpo &= \prox_{\gamma L}\big(\Xn - \gamma\nabla V(\Xn)\big) \nonumber\\
&= (I + \gamma \partial L)^{-1}\circ(I - \gamma\nabla V)(\Xn).
\end{align}
where $\circ$ denotes function composition, and $\gamma>0$ denotes the stepsize.
Note that the prior and likelihood information are incorporated separately through the gradient and proximal steps, respectively.
Classic convex optimization theory has established convergence of FBS to a minimizer of~\eqref{Eq:MAP} under various conditions~\citep{nesterov2004, beck.teboulle2009convex}. 

\subsection{Score-Based Diffusion Models as Learned Image Priors}\label{sec: 2.2 score}

SDMs offer a powerful framework for learning image priors from data.
Rather than modeling the image distribution $p$ directly, SDMs learn the score of a Gaussian-smoothed version of $p$. 
Specifically, let $\xbm \sim p$ be a clean image and let $\zbm = \xbm + \sigma\bm{\zeta}$ with $\bm{\zeta} \sim \Ncal(\bm{0}, \Ibm)$ be a noisy observation at noise level $\sigma > 0$. 
Denote by $p_\sigma = p * \mathcal{N}(\bm{0}, \sigma^2\Ibm)$ the distribution of $\zbm$. 
By Tweedie's formula~\citep{efron2011tweedie}, the score of $p_\sigma$ admits the closed-form expression
\begin{equation}
\label{Eq:Tweedie}
\nabla \log p_\sigma(\zbm) = \left(\E[\xbm|\zbm] - \zbm\right) / \sigma^2,
\end{equation}
where $\E[\xbm|\zbm]$ is the minimum mean-squared error (MMSE) estimator of the clean image given the noisy observation. Equation~\eqref{Eq:Tweedie} establishes a direct connection between score estimation and Gaussian denoising.
This observation motivates denoising score matching (DSM)~\citep{vincent2011a}, which learns a neural network approximation of the smoothed score, \emph{i.e.} $\Ssfit_\theta(\,\cdot\,;\sigma)\approx-\nabla\log \psigma$
where $\theta$ denotes the weights of the network.
A common DSM training objective is
\begin{equation}
\Lcal(\theta) = \E\left[\left\|\Ssfit_\theta(\zbm\,;\sigma) - \bm{\zeta}/\sigma\right\|^2\right],\quad\text{with}\quad \zbm = \xbm + \sigma\bm{\zeta},
\end{equation}
which can be optimized efficiently using a dataset of clean images. 
As $\sigma\rightarrow 0$, the smoothed density $p_\sigma$ approaches $p$, yielding $\Ssfit_\theta(\,\cdot\,;\sigma)\approx-\nabla\log p = \nabla V$.
As a result, the learned score network provides a data-driven approximation of the gradient of the prior potential.

With the learned score network $\Ssfit_\theta(\cdot;\sigma)$, one can construct sampling algorithms that generate samples from the learned prior. 
A popular approach is based on \emph{Langevin dynamics}~\citep{parisi1981correlation, song2019generative}, which evolves the sample according to the following stochastic differential equation (SDE)
\begin{equation}
\label{Eq:ALD}
\dXt=-\Ssfit_\theta(\Xt,\sigmat)\,dt+\sqrt{2}\,\dBt,
\end{equation}
where $\sigmat$ denotes a time-dependent noise level and $\Bt\in\R^d$ is standard Brownian motion. 
While one could choose $\sigmat$ to be a small constant, empirical studies have shown that gradually decreasing $\sigmat$ toward zero during sampling leads to substantially improved performance~\citep{neal2001, song2019generative}. 
This strategy, commonly known as \emph{annealing}, allows the sampler to first explore the global structure of the distribution at high noise levels before progressively refining samples at lower noise levels.
An alternative family of SDMs formulates sampling through a reverse-time diffusion process~\citep{ho2020denoising, song2020score, karras2022elucidating}. 
Let $p_t$ denote the density of a forward noising process governed by $\dXt = u(t)\,\Xt\,dt + v(t)\,\dBt$, where $u(t)$ and $v(t)$ denote the drift and diffusion coefficients, respectively~\citep{karras2022elucidating}.
The corresponding reverse-time dynamics is given by
\begin{equation}
\label{Eq:Reverse}
\dXt=\left[u(t)\Xt+v(t)^2\Ssfit_\theta(\Xt,\sigmat)\right]dt + v(t)\,\dBt,
\end{equation}
where $\sigmat$ corresponds to the noise level associated with $p_t$, which is predetermined by the forward process.
Starting from Gaussian noise, simulating~\eqref{Eq:Reverse} progressively transforms the sample toward the target image distribution.
Both Langevin-based and reverse-diffusion formulations have demonstrated remarkable empirical success in generating high-quality images across a wide range of domains, including natural images~\citep{song2020improved, dhariwal2021diffusion}, medical imaging~\citep{kazerouni2023diffusion,dorjsembe2024conditional}, and scientific imaging applications~\citep{feng2023score, sun2024provable}.
This strong generative capability makes score-based diffusion models particularly attractive as image priors for solving inverse problems, as discussed in the next section.

\subsection{Posterior Sampling Methods for Imaging}

Existing posterior sampling methods can largely be viewed as constructing stochastic processes whose stationary or terminal distribution approximates the target posterior $\pi(\xbm|\ybm)$. 
These methods can be roughly categorized into three families according to the type of their stochastic process: Langevin-based methods, split Gibbs samplers, and reverse-time diffusion approaches.

\paragraph{Langevin-Based Methods} The underlying idea of this family is to construct a Langevin diffusion process whose invariant distribution coincides with the posterior.
Following~\eqref{Eq:ALD}, the drift is decomposed into a likelihood gradient $\nabla L$ and a learned score $\Ssfit_\theta$, and a forward Euler discretization yields the \textit{\textbf{Langevin Monte Carlo (L-MC)}} iteration
\begin{equation}
\label{Eq:LMC}
\Xnpo = \Xn - \gamma\big(\nabla L(\Xn) + \Ssfit_\theta(\Xn; 
\sigman)\big) + \sqrt{2\gamma}\,\Wn,
\end{equation}
where $\gamma>0$ denotes the step size and where $\Wbm_{n}=(\Bbm_{(n+1)\gamma}-\Bbm_{n\gamma})/\sqrt{\gamma}
\sim\mathcal N(0,\Ibm)$ is a $d$-dimensional standard Gaussian vector.
Numerous variants of L-MC have been developed for imaging inverse problems. 
Recent works have incorporated customized annealing strategies for the likelihood gradient $\nabla L$ and the score network $\Ssfit_\theta$ to accelerate empirical convergence~\citep{jalal2021robust, 
sun2024provable}, explored score-based priors in latent spaces~\citep{holden2022bayesian, coeurdoux2024normalizing, spagnoletti2025latino}, and extended Langevin sampling to non-Gaussian likelihoods with highly nonconvex landscapes~\citep{melidonis2023efficient}. 
These methods have been successfully applied across a broad range of imaging tasks~\citep{jalal2021robust,coeurdoux2024normalizing,sun2024provable}. 
On the theoretical side, substantial effort has been devoted to establishing quantitative convergence guarantees for both continuous-time and discretized Langevin dynamics; see, e.g.,~\citep{gross1975logarithmic,durmus2019analysis}. 
While classical analyses often rely on strong log-concavity assumptions, recent advances have significantly broadened the theoretical understanding beyond this regime~\citep{balasubramanian2022towards}. 
Many of these results are formulated through functional inequalities satisfied by the target distribution, including logarithmic Sobolev~\citep{vempala2019rapid} and Poincaré inequalities~\citep{chewi2025analysis}.

One closely related line of work exploits the composite structure of the posterior potential through proximal updates. When the prior potential $V$ is nonsmooth, \cite{durmus2018efficient} proposed to replace the prior score with the gradient of the \emph{Moreau-Yosida envelope}, yielding the following update
\begin{equation}
\label{Eq:MYULA}
\Xnpo = \Xn - \gamma\left(\nabla L(\Xn) + \frac{1}{\lambda}\big(
\Xn - \prox_{\lambda V}(\Xn)\big)\right) + \sqrt{2\gamma}\,\Wn,
\end{equation}
where $\frac{1}{\lambda}\big(\xbm_n - \prox_{\lambda V}(\xbm_n)\big)$ is the gradient of the Moreau-Yosida envelope of $V$. 
Subsequent works have replaced the proximal operator with pretrained deep denoisers, giving rise to plug-and-play Moreau-Yosida Langevin samplers~\citep{laumont2022bayesian}.
Another line of work considers an asymmetric discretization of the Langevin dynamics, applying a backward step to one term and a forward step to the other~\citep{durmus2019analysis, wibisono2019proximal, salim2019stochastic}. 
This yields a \emph{proximal Langevin} update of the form
\begin{align}
\label{Eq:PLA}
\Xnpo &= \Xn - \gamma\big(\nabla L(\Xnpo) + \nabla V(\Xn)\big) + \sqrt{2\gamma}\,\Wn, \nonumber\\
&= \prox_{\gamma L}\Big(\Xn - \gamma\nabla V(\Xn)+\sqrt{2\gamma}\,\Wn\Big).
\end{align}
which applies a proximal step on the negative log-likelihood $L$ and a gradient step on the negative log-prior $V$. 
This forward-backward structure draws a natural connection to FBS in~\eqref{Eq:PGM}.
\proposed~complements this line of research by integrating Picard iteration into the proximal Langevin framework, enabling parallel trajectory evaluation for large-scale imaging inverse problems.

\paragraph{Split Gibbs Samplers}
An alternative approach to posterior sampling is based on variable splitting and Gibbs sampling~\citep{vono2019split, pereyra2023split, bouman2023generative, faye2024regularization}. 
These methods introduce an auxiliary variable $\zbm$ and couple it with $\xbm$ through a quadratic penalty $\frac{1}{2\rho}\|\xbm-\zbm\|^2$, where $\rho>0$ controls the coupling strength.
Sampling is then performed by alternating between the conditional distributions
\begin{subequations}
\begin{align}
\Xnpo&\sim\exp\left(-L(\xbm)-\frac{1}{2\rho}\|\xbm-\zbm_n\|^2\right)\\
\Znpo&\sim\exp\left(-V(\zbm)-\frac{1}{2\rho}\|\zbm-\xbm_{n+1}\|^2\right).\label{Eq:PriorSampler}
\end{align}
\end{subequations}
The resulting decoupling allows the likelihood and prior terms to be handled independently, making it possible to exploit the structure of the forward model while incorporating sophisticated learned priors in a modular fashion. 
Recent works have employed diffusion models to approximate the prior conditional distribution in~\eqref{Eq:PriorSampler}, establishing convergence to the augmented posterior under various conditions~\citep{wu2024principled, coeurdoux2024plug, xu2024provably}, with extensions to blind inverse problem settings~\citep{hu2026prism}.
Despite these advances, the conditional updates remain computationally demanding, as each Gibbs iteration typically requires solving an additional sampling subproblem.

\paragraph{Guided Reverse Diffusion} Rather than constructing a Markov chain that directly targets the posterior distribution, guided reverse-diffusion methods modify the reverse diffusion trajectory using measurement information. 
Starting from a learned generative diffusion process, the reverse dynamics are guided toward measurement-consistent samples through likelihood-based corrections~\citep{chung2022diffusion} or approximations of the posterior score~\citep{ho2021classifierfree}. 
This viewpoint underlies a broad class of diffusion posterior sampling and guided diffusion methods, which have demonstrated strong empirical performance across a wide range of imaging tasks~\citep{chung2022score,chung2023parallel,kawar2022denoising,rout2023solving,liu2023dolce,song2024solving,yang2025ct}. 
Compared with Langevin-based approaches, however, these methods typically rely on approximations of the time-dependent posterior score and therefore do not always admit a straightforward characterization in terms of a target invariant distribution.

\subsection{Related Parallel Imaging Methods \& Picard Iteration}
Most existing parallel methods consider a general optimization formulation applicable to the MAP problem in~\eqref{Eq:MAP}. 
They accelerate computation by exploiting parallelism within individual iterations, including stochastic gradient methods~\citep{recht2011hogwild, lian2017can}, block-coordinate updates~\citep{fercoq2015accelerated, richtarik2016parallel}, and asynchronous procedures~\citep{peng2016arock, sun2017asynchronous, hannah2019a2bcd, sun2021asyncred}. 
Some of these techniques have also been extended to sampling~\citep{salim2019stochastic, salim2020primal, ding2021random}. 
However, they do not reduce the intrinsic sequential structure of iterative algorithms, which remains the fundamental computational bottleneck for large-scale posterior sampling.

A more recent line of work seeks to parallelize the sampling trajectory itself. 
In image generation, \emph{Picard iteration} has been used to evaluate multiple denoising steps simultaneously~\citep{shih2023parallel, so2025pcm}. 
Consider a dynamical system $d\xbm_t=f_t(\xbm_t)\,dt$, $t\in[0,T]$, with initial condition $\xbm_0$.
Picard iteration generates a sequence of trajectories according to
\begin{equation}
\xbm_t^{(0)}\equiv \xbm_0, \quad \xbm_t^{(k+1)} = \xbm_0+\displaystyle\int_0^tf_s(\xbm_s^{(k)})\,ds, \quad \text{for } k=0,\ldots,K-1.
\end{equation}
Note that each point $\xbm_t^{(k+1)}$ depends only on the previous trajectory $(\xbm_s^{(k)})_{s\in[0,t]}$, rather than on $\xbm_s^{(k+1)}$ for $s<t$. 
This formulation breaks the sequential dependency across time and allows $\xbm_t^{(k+1)}$ to be computed in parallel across time nodes after discretization.
On the theoretical side, \cite{anari2024fast} established parallelization guarantees for Langevin dynamics under functional inequalities such as logarithmic Sobolev, with subsequent works extending these ideas to diffusion models~\citep{chen2024accelerating}, diagonal trajectory updates~\citep{zhou2025parallel}, and higher-order Langevin schemes~\citep{mahajan2025fast}. 
Despite these advances, existing trajectory-parallel methods are designed primarily for either unconditional generative models or posterior distributions with explicit unnormalized densities. They do not directly accommodate the structure of Bayesian imaging posteriors, which often combine proximal likelihood updates, learned image priors, and annealing schedules.
{\proposed} significantly extends this line of work to posterior sampling for imaging inverse problems, and provides a comprehensive theoretical analysis that accommodates all these components.

\section{Method: Picard Proximal Monte Carlo ({\proposed})}
\label{sec:method}

This section presents the proposed {\proposed} framework. 
We first construct a proximal-likelihood drift motivated by the FBS-style update, which induces a sequential proximal Langevin dynamics. 
We then reformulate the resulting dynamics using Picard iteration, enabling parallel computation across sequential time steps. 
We further establish finite-horizon bounds on the time-averaged relative Fisher information of {\proposed} under transparent assumptions.
Finally, we introduce multi-block and annealed variants of {\proposed} to further improve scalability and imaging performance. 
The analysis extends to both variants under
their stated variant-specific assumptions.

\subsection{Proximal-Likelihood Drift via Forward-Backward Splitting}
We consider the FBS update in \eqref{Eq:PGM} to construct our drift term.
To distinguish the FBS splitting parameter from the Langevin stepsize $\gamma$ introduced later, we denote the former by $\eta>0$ throughout this section. Replacing $\nabla V(\xbm)$ with a pretrained score network $\Ssfit_\theta(\xbm;\sigma)$ yields

\begin{equation}
\Xnpo=\prox_{\eta L}\bigl(\Xn-\eta \Ssfit_\theta(\Xn;\sigma)\bigr),
\end{equation}
or equivalently,
\begin{equation}
\frac{\Xnpo-\Xn}{\eta}=-\frac{1}{\eta}\Big(\Xn-\prox_{\eta L}\bigl(\Xn-\eta \Ssfit_\theta(\Xn;\sigma)\bigr)\Big).
\end{equation}
The incremental representation motivates the following ordinary differential equation (ODE)
\begin{equation}\label{eq: ode and T}
d\xbm_t=-\Tsfit_{\eta,\sigma}(\xbm_t)dt, \quad \text{where}\quad\Tsfit_{\eta,\sigma}\defn\frac{1}{\eta}\Big(I-\prox_{\eta L}\circ\big(I-\eta \Ssfit_\theta(\cdot;\sigma)\big)\Big).
\end{equation}
When the noise level $\sigma$ is fixed throughout the iteration, we omit its dependence and simply write $\Ssfit_\theta$ and $\Tsfit_\eta$. 
The drift $\Tsfit_\eta$ combines the learned prior score model $\Ssfit_\theta$ with a proximal likelihood update, enabling efficient incorporation of likelihood terms through their problem-specific proximal operators. 
We next characterize the discrepancy between $\Tsfit_\eta$ and the ideal posterior drift $-\nabla \log\pi(\xbm|\ybm) \defn \nabla L(\xbm)+\nabla V(\xbm)$.
To proceed, we impose the following transparent assumptions.

\begin{assumption}\label{ass: L and V 1}
The negative log-likelihood $L$ is $\alpha_L$-weakly convex and $\beta_L$-smooth.
The negative log-prior $V$ is $\alpha_V$-weakly convex and $\beta_V$-smooth. In addition, assume
$0<Z:=\int_{\R^d}\exp[-L(\xbm)-V(\xbm)]\,d\xbm<\infty$ and define
$\pi(d\xbm):=Z^{-1}\exp[-L(\xbm)-V(\xbm)]\,d\xbm$.
\end{assumption}
The conditions on the likelihood are standard and hold for many linear and nonlinear imaging models; see \cite{renaud2026provably} for examples. 
Moreover, if $L$ is $\alpha_L$-weakly convex and $0<\eta<1/\alpha_L$, then the proximal operator $\prox_{\eta L}=(I+\eta\nabla L)^{-1}$ is well defined, single valued, and $1/(1-\eta\alpha_L)$-Lipschitz; see Proposition 2 of \cite{gribonval2020characterization}.
We note that the smoothness of the likelihood is commonly required in the analysis of proximal Langevin algorithms~\citep{wibisono2019proximal, salim2019stochastic}.

\begin{assumption}\label{ass: accurate score 2}
Given an arbitrary $\sigma>0$, the score network $\Ssfit_\theta(\xbm;\sigma)$ satisfies:
\begin{enumerate}[label=(\roman*)]
\item There exists $\delta>0$ such that, for all $\xbm \in \R^d$, we have 
$$\|\Ssfit_\theta(\xbm;\sigma)-\nabla V(\xbm)\| \leq \delta;$$
\item There exists $L_\Ssfit>0$ such that for any $\xbm_1,\xbm_2\in \R^d$, 
\begin{equation}
\|\Ssfit_\theta(\xbm_1;\sigma)- \Ssfit_\theta(\xbm_2;\sigma)\| \le L_\Ssfit \|\xbm_1-\xbm_2\|.
\end{equation}
\end{enumerate} 
\end{assumption}
    
\noindent
Assumption~\ref{ass: accurate score 2}(i) allows for imperfectly learned score networks with $\delta$-accuracy to $\nabla V$, while
Assumption~\ref{ass: accurate score 2}(ii) is widely used for convergence analysis with score networks~\citep{yang2022convergence,lee2022convergence,chen2023probability,sun2024provable, renaud2026provably, renaud2026stability}.  
Note that, under Assumptions \ref{ass: L and V 1}--\ref{ass: accurate score 2}, the proximal drift $\Tsfit_\eta$ is naturally $\Lambda_\eta$-Lipschitz
\begin{equation}
\Lambda_\eta\defn\displaystyle L_\Ssfit + \beta_L\frac{1+\eta L_\Ssfit}{1-\eta \alpha_L}.
\end{equation}
See Lemma \ref{lemma: lipschitz drifts} in Appendix \ref{app:tool} for details.
The following lemma summarizes the key approximation property of the drift $\Tsfit_\eta$ used throughout the subsequent analysis.

\begin{lemma}\label{lemma: technical lemma drift}
Let Assumptions~\ref{ass: L and V 1} and~\ref{ass: accurate score 2}(i) hold.
Let $\eta$ be chosen to satisfy $1-\eta\alpha_L>0$ and $1-3\eta^2\beta_L^2>0$.
Then, for any fixed $\sigma>0$ and any $\xbm\in\R^d$,
\begin{equation}\label{eq: technical drift 2}
\|\Tsfit_\eta(\xbm)-\big(\nabla L(\xbm)+\nabla V(\xbm))\|^2\le\frac{2\delta^2}{1-3\eta^2\beta_L^2}+\frac{6\eta^2\beta_L^2}{1-3\eta^2\beta_L^2}\|\nabla L(\xbm)+\nabla V(\xbm)\|^2.
\end{equation}
\end{lemma}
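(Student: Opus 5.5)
Write $\zbm\defn \xbm-\eta\Ssfit_\theta(\xbm;\sigma)$ and $\ubm\defn\prox_{\eta L}(\zbm)$ (with $\ubm$ introduced only as notation here), so that $\Tsfit_\eta(\xbm)=(\xbm-\ubm)/\eta$. Since $0<\eta<1/\alpha_L$, the remark after Assumption~\ref{ass: L and V 1} guarantees that $\prox_{\eta L}=(I+\eta\nabla L)^{-1}$ is single valued. The optimality condition then gives $\ubm+\eta\nabla L(\ubm)=\zbm$, and therefore the exact identity
\begin{equation*}
\Tsfit_\eta(\xbm)=\frac{\xbm-\ubm}{\eta}=\Ssfit_\theta(\xbm;\sigma)+\nabla L(\ubm).
\end{equation*}
Consequently, the error $\Tsfit_\eta(\xbm)-(\nabla L(\xbm)+\nabla V(\xbm))$ splits as $\big(\Ssfit_\theta(\xbm;\sigma)-\nabla V(\xbm)\big)+\big(\nabla L(\ubm)-\nabla L(\xbm)\big)$. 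The first term is bounded by $\delta$ through Assumption~\ref{ass: accurate score 2}(i). The second is bounded by $\beta_L\|\ubm-\xbm\|=\eta\beta_L\|\Tsfit_\eta(\xbm)\|$ through the $\beta_L$-smoothness of $L$.

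Write $D\defn\Tsfit_\eta(\xbm)-g$ with $g\defn\nabla L(\xbm)+\nabla V(\xbm)$. The previous step gives the self-referential bound $\|D\|\le\delta+\eta\beta_L\|D+g\|$. Squaring with the three-term inequality $(a+b+c)^2\le 3(a^2+b^2+c^2)$ is wasteful on the $\delta$ term. To match the target constants, I would instead square the decomposition as follows. Set $e\defn \Ssfit_\theta(\xbm;\sigma)-\nabla V(\xbm)$ and $r\defn\nabla L(\ubm)-\nabla L(\xbm)$. Then $\|D\|^2\le 2\|e\|^2+2\|r\|^2\le 2\delta^2+2\eta^2\beta_L^2\|D+g\|^2$, and $\|D+g\|^2\le \frac{3}{2}\|D\|^2+3\|g\|^2$ by Young's inequality with parameter $1/2$. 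This yields
\begin{equation*}
\|D\|^2\le 2\delta^2+3\eta^2\beta_L^2\|D\|^2+6\eta^2\beta_L^2\|g\|^2.
\end{equation*}
This is exactly the shape of \eqref{eq: technical drift 2}.

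The final step is to absorb the $3\eta^2\beta_L^2\|D\|^2$ term into the left-hand side. This is allowed because $1-3\eta^2\beta_L^2>0$ and $\|D\|<\infty$; finiteness holds since $\ubm$ is well defined. Dividing by $1-3\eta^2\beta_L^2$ gives the claim.

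The main obstacle is choosing the splitting constants so that they reproduce $2\delta^2$ and $6\eta^2\beta_L^2$ with denominator $1-3\eta^2\beta_L^2$. The pairing above, $(2,2)$ for $e,r$ and $(3/2,3)$ for $D,g$, is the one I would try first. If the bookkeeping does not land exactly, any consistent Young's inequality split gives a bound of the same form, and the constants can be adjusted accordingly.
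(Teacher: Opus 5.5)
Your proof is correct and is essentially the paper's argument. You use the same identity $\Tsfit_\eta(\xbm)=\nabla L(\bm{u})+\Ssfit_\theta(\xbm;\sigma)$, with $\bm{u}$ your proximal point, the same two-term split $\|D\|^2\le 2\delta^2+2\eta^2\beta_L^2\|\Tsfit_\eta(\xbm)\|^2$, and an absorption step using $1-3\eta^2\beta_L^2>0$. The only difference is bookkeeping: the paper closes the loop by substituting the bound $\|\Tsfit_\eta(\xbm)\|^2\le 3(\delta^2+\|g\|^2)/(1-3\eta^2\beta_L^2)$ from Lemma~\ref{lemma: technical lemma FBS} (which it reuses later), whereas you absorb directly on $D$ via $\|D+g\|^2\le\frac{3}{2}\|D\|^2+3\|g\|^2$. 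Your constants land exactly, so the closing hedge is unnecessary.
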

\begin{proof}
See proof in Appendix \ref{app:tool}.
\end{proof}
Lemma~\ref{lemma: technical lemma drift} shows that $\Tsfit_\eta(\xbm)$ approximates the ideal posterior drift $\nabla L(\xbm) + \nabla V(\xbm)$, with the approximation error controlled by the score estimation error $\delta$ and the splitting error induced by the FBS-type treatment.
Adding Brownian motion $\Bt$ to the ODE in \eqref{eq: ode and T} yields the following continuous-time SDE
\begin{equation}\label{eq: sde with T}
    d\xbm_t = -\Tsfit_{\eta}(\xbm_t)dt + \sqrt{2}d\Bt.
\end{equation}
By applying the forward Euler--Maruyama discretization, we obtain the \emph{pro\textbf{X}imal \textbf{MC} (\textbf{X-MC})}
\begin{equation}\label{eq: XMC}
    \Xnpo = \Xn - \gamma \Tsfit_{\eta}(\Xn) + \sqrt{2\gamma}\Wn,
\end{equation}
which serves as the sequential baseline for the proposed {\proposed}, introduced in the next section.
It is also worth noting that, in the exact-score case $\Ssfit_\theta=\nabla V$, a change of variables~\citep{lau2022non} rewrites X-MC in a form similar to the proximal Langevin algorithm~\citep{wibisono2019proximal}; see also~\eqref{Eq:PLA}.
Let $\zbm_n=\prox_{\eta L}(\Xn-\eta\nabla V(\Xn))$.
We have
\begin{equation}
\begin{array}{rl}
&\Xnpo = \Xn - \gamma \Tsfit_{\eta}(\Xn) + \sqrt{2\gamma}\Wn\vspace{0.5ex}\\
\Leftrightarrow\; & (I-\eta\nabla V)^{-1}\circ(I+\eta\nabla L)(\zbm_{n+1}) =\displaystyle\left(1-\frac{\gamma}{\eta}\right)\Xn + \frac{\gamma}{\eta}\zbm_n+\sqrt{2\gamma}\Wn\vspace{0.5ex}\\
\Leftrightarrow\; &(I+\eta\nabla L)(\zbm_{n+1}) =\displaystyle (I-\eta\nabla V)\circ\left[\left(1-\frac{\gamma}{\eta}\right)\Xn + \frac{\gamma}{\eta}\zbm_n+\sqrt{2\gamma}\Wn\right],
\end{array}
\end{equation}
where $(I-\eta \nabla V)^{-1}$ is well-defined by setting $\eta<1/\beta_V$.
Letting $\gamma = \eta$, 
we arrive at
\begin{equation}
\zbm_{n+1} =\displaystyle \prox_{\gamma L}\left(\zbm_n-\gamma \nabla V\bigl(\zbm_n+\sqrt{2\gamma}\Wn\bigr)+\sqrt{2\gamma}\Wn\right).
\end{equation}
This formulation shows that X-MC effectively applies the proximal Langevin update to $\zbm_n$, up to the residual noise term $\sqrt{2\gamma}\Wn$ appearing inside the evaluation of $\nabla V$.
From this perspective, X-MC recovers the proximal Langevin algorithm in this special case, apart from this additional perturbation in the gradient.

\subsection{{\proposed}: Algorithm Formulation and Convergence Analysis}
\label{sec:3.2}

Starting in this section, we introduce the family of {\proposed} algorithms.
Our presentation follows a two-part structure: we first introduce the formulation of each algorithm, and then establish its companion convergence analysis, with additional assumptions and lemmas introduced as needed.
This structure will be maintained in subsequent sections.

\begin{algorithm}[t!]
\caption{{\proposed} (single block)}
\begin{algorithmic}[1]
\setstretch{1.2}
\Require Initial distribution $\mu_0$, drift $\Tsfit_\eta$, stepsize $\gamma>0$, number of time steps $N$, and number of Picard iterations $K$.
\Ensure $\xbm_N$
\State Sample $\xbm_0\sim\mu_0$, and independently of $\xbm_0$, sample $\Wbm_1,\ldots,\Wbm_N\overset{\mathrm{i.i.d.}}{\sim}\mathcal{N}(0,\Ibm)$
\State Set $\Bbm_0\leftarrow 0, \Bbm_{n\gamma}\leftarrow
\sum\limits_{j=1}^n \sqrt{\gamma}\Wbm_j$, for $n=1,\ldots, N$
\State Set $\xbm_n^{(0)}\gets \xbm_0$, for $n=0,\ldots,N$, and $\xbm_{0}^{(k)}\gets \xbm_{0}$ for $k=0,\ldots,K$
\For{$k=0,\dots,K-1$}
\State $\Tsfit_{\eta}(\xbm_{0}^{(k)}),\ldots,\Tsfit_{\eta}(\xbm_{N-1}^{(k)}) \;\leftarrow\; \mathsf{ParallelCompute}(\xbm_{0}^{(k)},\ldots,\xbm_{N-1}^{(k)})$
\For{$n=1,\dots,N$} 
\State $\xbm^{(k+1)}_{n} \;\leftarrow\; \xbm_0-\gamma\sum\limits_{i=0}^{n-1}\Tsfit_{\eta}(\xbm_{i}^{(k)})+\sqrt{2}\Bbm_{n\gamma}$
\EndFor
\EndFor
\State Return $\xbm_N=\xbm_N^{(K)}$
\end{algorithmic}
\label{alg: pix}
\end{algorithm}

\paragraph{Algorithm formulation.}
We apply Picard iteration to parallelize the computation of the X-MC trajectory across time nodes.
Specifically, the Picard iteration for the continuous-time SDE in \eqref{eq: sde with T} is given by
\begin{equation}\label{eq: picard sde}
    \xbm_t^{(k+1)} = \xbm_0-\displaystyle\int_0^t\Tsfit_\eta(\xbm_s^{(k)})ds+\sqrt{2}\Bbm_t,
\end{equation}
where we note that all trajectory iterates share the same realization of the Brownian process $(\Bbm_t)_{t\in[0,T]}$.
Let $t_n=n\gamma$ for $n=0,\ldots,N$, where $N\gamma=T$.
Discretizing \eqref{eq: picard sde} using left-point Riemann sums yields the {\proposed} update
\begin{align}\label{eq: picard discrete}
&\xbm_n^{(0)}=\xbm_0, \quad \xbm_0^{(k)}=\xbm_0, \quad \xbm_n^{(k+1)}=\xbm_0-\gamma\sum_{i=0}^{n-1}\Tsfit_\eta(\xbm_i^{(k)})+\sqrt{2}\Bbm_{n\gamma} \\
& \text{with}\quad n=1,\ldots,N,\quad k=0,\ldots,K.\nonumber
\end{align}
Here, $\Bbm_{n\gamma}=\sum_{i=1}^{n}\sqrt{\gamma}\Wbm_i$ denotes the cumulative sum of the discrete Brownian increments.
Unlike the sequential update rule of X-MC, {\proposed} has the entire previous trajectory $\{\xbm_n^{(k)}\}_{n=0}^{N}$ available when computing $\{\xbm_n^{(k+1)}\}_{n=0}^{N}$. 
Hence, the drift evaluations $\Tsfit_\eta(\xbm_0^{(k)}),\ldots, \\ \Tsfit_\eta(\xbm_{N-1}^{(k)})$ can be computed in parallel across time nodes and accumulated for the next trajectory refinement.
Suppose sufficient parallel computational resources are available such that each time node can be processed concurrently; the computational cost of {\proposed} then no longer scales with $N$ but with the number of Picard iterations $K$.
This is equivalent to viewing the entire trajectory as a single time block that is fully parallelized.
Algorithm~\ref{alg: pix} summarizes the algorithmic details of {\proposed} under this single-block condition.
In \S\ref{sec: 3.3 multi}, we will introduce a multi-block version that better balances computational resource constraints against the degree of parallelization.

\paragraph{Convergence analysis.}
The goal of the presented analysis is to quantify the distance between the evolving probability distribution of the {\proposed} iterates $\xbm_n^{(k)}$ and the target posterior distribution.
To this end, we use the relative Fisher information to measure the distance between two probability distributions.
\begin{definition}
Let $\mu$ and $\pi$ denote two probability distributions such that $\mu\ll\pi$.
The Fisher information (FI) of $\mu$ relative to $\pi$ is defined as
\begin{equation}
    \FI(\mu\|\pi)=\int \left\|\nabla\log \frac{d\mu}{d\pi}\right\|^2 d\mu.
\end{equation}
\end{definition}
If $\mu$ and $\pi$ have positive and smooth densities, $\FI(\mu \,\|\, \pi) = 0$ if and only if $\mu = \pi$, motivating its use as a criterion for measuring convergence between distributions.
From an optimization perspective, Langevin dynamics can be viewed as a gradient flow in the space of probability distributions equipped with the Wasserstein metric, with FI playing the role of the squared gradient norm.
Thus, relative FI provides a stationarity criterion for the algorithm.

\begin{lemma}\label{lemma: error decay no block maintext}
Let $\mu_0$ be the initial distribution of $\xbm_0$ and suppose Assumptions \ref{ass: L and V 1}-\ref{ass: accurate score 2} hold.
Set the parameters $\eta, \gamma>0$ sufficiently small such that
\begin{equation}
    1-\eta \alpha_L>0, \quad 1-3\eta^2\beta_L^2>0, \quad q_{\mathsf{sing}} \defn \gamma N\Lambda_\eta<1.
\end{equation}
Define the error across two successive Picard refinements at the $k$-th iteration as
\begin{equation}\varepsilon_k\defn\max\limits_{n=1,2,\dots,N}\E\left[\|\xbm^{(k)}_{n}-\xbm^{(k-1)}_{n}\|^2\right].
\end{equation}
Then, the Picard iteration error for {\proposed} satisfies
\begin{equation}
\varepsilon_K\lesssim q_{\mathsf{sing}}^{2K-2}\bigl(1+\FI(\mu_0\Vert\pi)\bigr).
\end{equation}
\end{lemma}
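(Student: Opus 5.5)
The plan is the standard Picard contraction argument, using the shared Brownian path, followed by a bound on the first refinement in terms of $\FI(\mu_0\Vert\pi)$.

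\emph{Step 1: contraction.} Subtract the updates \eqref{eq: picard discrete} at iterations $k+1$ and $k$. Since $\xbm_0$ and $\Bbm_{n\gamma}$ are common to all Picard iterates, they cancel, leaving
\begin{equation*}
\xbm_n^{(k+1)}-\xbm_n^{(k)}=-\gamma\sum_{i=0}^{n-1}\Big(\Tsfit_\eta(\xbm_i^{(k)})-\Tsfit_\eta(\xbm_i^{(k-1)})\Big).
\end{equation*}
Apply Cauchy--Schwarz to the sum, then the $\Lambda_\eta$-Lipschitz property of $\Tsfit_\eta$ (Lemma~\ref{lemma: lipschitz drifts}), and note that the $i=0$ term vanishes. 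This gives
\begin{equation*}
\E\|\xbm_n^{(k+1)}-\xbm_n^{(k)}\|^2\le \gamma^2 n\Lambda_\eta^2\sum_{i=1}^{n-1}\E\|\xbm_i^{(k)}-\xbm_i^{(k-1)}\|^2\le(\gamma N\Lambda_\eta)^2\varepsilon_k .
\end{equation*}
Taking the maximum over $n$ yields $\varepsilon_{k+1}\le q_{\mathsf{sing}}^2\varepsilon_k$. Iterating from $k=1$ gives $\varepsilon_K\le q_{\mathsf{sing}}^{2K-2}\varepsilon_1$.

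\emph{Step 2: bounding $\varepsilon_1$.} Because $\xbm_n^{(0)}=\xbm_0$, we have
\begin{equation*}
\xbm_n^{(1)}-\xbm_n^{(0)}=-\gamma n\Tsfit_\eta(\xbm_0)+\sqrt2\Bbm_{n\gamma}.
\end{equation*}
Hence
\begin{equation*}
\varepsilon_1\le 2\gamma^2N^2\E_{\mu_0}\|\Tsfit_\eta(\xbm_0)\|^2+4dN\gamma ,
\end{equation*}
where the second term is a constant absorbed into $\lesssim$ (it depends on $d$ and $T$). Lemma~\ref{lemma: technical lemma drift} then gives
\begin{equation*}
\|\Tsfit_\eta\|^2\lesssim \delta^2+\|\nabla L+\nabla V\|^2=\delta^2+\|\nabla\log\pi\|^2 ,
\end{equation*}
so it remains to bound $\E_{\mu_0}\|\nabla\log\pi\|^2$ by $\FI(\mu_0\Vert\pi)$ plus constants.

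\emph{Step 3: the main obstacle.} Write $\nabla\log\pi=\nabla\log\mu_0-\nabla\log(\mu_0/\pi)$, so that
\begin{equation*}
\E_{\mu_0}\|\nabla\log\pi\|^2\le 2\FI(\mu_0\Vert\pi)+2\E_{\mu_0}\|\nabla\log\mu_0\|^2 .
\end{equation*}
The last term, the Fisher information of $\mu_0$ itself, is finite for reasonable initializations such as a Gaussian and can be treated as a constant.

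Alternatively, integrate by parts to avoid $\nabla\log\mu_0$ entirely. With $\rho=\mu_0/\pi$, Young's inequality gives
\begin{equation*}
\E_{\mu_0}\|\nabla\log\pi\|^2=\E_{\mu_0}\Delta(L+V)-\E_{\mu_0}\langle\nabla\log\rho,\nabla\log\pi\rangle\le d\beta+\tfrac12\FI+\tfrac12\E_{\mu_0}\|\nabla\log\pi\|^2 ,
\end{equation*}
where $\beta=\beta_L+\beta_V$ and we used $\Delta(L+V)\le d\beta$ by smoothness. Rearranging,
\begin{equation*}
\E_{\mu_0}\|\nabla\log\pi\|^2\le 2d\beta+\FI(\mu_0\Vert\pi).
\end{equation*}
Justifying this integration by parts is the delicate point: it needs boundary terms to vanish, so I would assume enough decay or regularity of $\mu_0$. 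With this bound,
\begin{equation*}
\varepsilon_1\lesssim 1+\FI(\mu_0\Vert\pi),
\end{equation*}
with constants depending on $\gamma N$, $d$, $\delta$, $\beta_L$ and $\beta_V$. Combining with Step 1 gives the claimed bound on $\varepsilon_K$.
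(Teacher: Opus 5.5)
Your proposal is correct and follows the paper's proof essentially step for step. It uses the same Cauchy--Schwarz plus $\Lambda_\eta$-Lipschitz contraction $\varepsilon_{k+1}\le q_{\mathsf{sing}}^2\varepsilon_k$ and the same direct computation of $\varepsilon_1$ from $\xbm_n^{(0)}=\xbm_0$. Your integration-by-parts bound $\E_{\mu_0}\|\nabla L+\nabla V\|^2\le \FI(\mu_0\Vert\pi)+2d(\beta_L+\beta_V)$ is precisely Lemma~\ref{lemma: Lipschitz and FI}, which the paper simply cites from Chewi et al., so the boundary-term and finiteness issues you flag are handled by that reference. The remaining differences only change constants: you bound the cross term by $\|a+b\|^2\le 2\|a\|^2+2\|b\|^2$ rather than using independence of $\xbm_0$ and $\Bbm_{n\gamma}$, and you use Lemma~\ref{lemma: technical lemma drift} with the triangle inequality instead of Lemma~\ref{lemma: technical lemma FBS}. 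Your first Step~3 route via $\E_{\mu_0}\|\nabla\log\mu_0\|^2$ is unnecessary.
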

\begin{proof}
See Appendix \ref{app:thm1} for the detailed proof and range of the hyperparameters.
\end{proof}
Lemma~\ref{lemma: error decay no block maintext} shows that the Picard iteration error $\varepsilon_K$ decays geometrically with the total number of Picard iterations.
As shown by the numerical results in Fig.~\ref{fig:decay curves} and \S\ref{sec:experiments}, this geometric rate leads to rapid stabilization of the trajectories within only a few iterations.
We next present the main theorem for {\proposed}.

\begin{theorem}
\label{thm:main-no-block-simple}
Let $(\mu_t)_{t\geq0}$ denote the law of the continuous interpolation of $\{\Xn^{(K)}\}_{n=0}^{N}$ generated by
{\proposed}, where $N > 0$ is the total number of time steps. 
Assume Assumptions~\ref{ass: L and V 1}-\ref{ass: accurate score 2} hold, and the parameters $\eta,\gamma>0$ are chosen sufficiently small.
Then, the following inequality holds for {\proposed}:
\begin{equation}
\begin{array}{rl}
&\displaystyle\Big(0.75-\mathcal O(\eta^2)-\mathcal O(\gamma^2)\Big)\;\frac{1}{T}\int_0^T\FI(\mu_t\Vert\pi)dt \vspace{0.5ex} \\
\leq&\displaystyle\frac{1}{T}\KL(\mu_0\Vert\pi)+\underbrace{\mathcal O(\eta^2)}_{\substack{\text{Splitting}\\ \text{discrepancy}}}
+\underbrace{\mathcal O(\delta^2)}_{\substack{\text{Score mismatch}\\\text{error}}}+\underbrace{\mathcal O(\gamma)}_{\substack{\text{Discretization}\\\text{error}}}+\underbrace{\mathcal{O}(q_{\mathsf{sing}}^{2K-2})\big(1+\FI(\mu_0\Vert\pi)\big)}_{\substack{\text{Finite Picard}\\\text{refinement error}}}.
\end{array}
\end{equation}
\end{theorem}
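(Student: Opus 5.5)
The plan is to follow the interpolation argument of \cite{vempala2019rapid} and \cite{balasubramanian2022towards}, but with the drift evaluated on the previous Picard trajectory. Define the continuous interpolation on each interval $t\in[t_n,t_{n+1})$ by
\[
\xbm_t=\xbm_n^{(K)}-(t-t_n)\,\Tsfit_\eta(\xbm_n^{(K-1)})+\sqrt{2}(\Bbm_t-\Bbm_{t_n}).
\]
This agrees with \eqref{eq: picard discrete} at grid points, since $\xbm_{n+1}^{(K)}-\xbm_n^{(K)}=-\gamma\Tsfit_\eta(\xbm_n^{(K-1)})+\sqrt{2}(\Bbm_{t_{n+1}}-\Bbm_{t_n})$. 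Conditioning on $\xbm_n^{(K-1)}$ (jointly with $\xbm_n^{(K)}$), the Fokker--Planck equation gives
\[
\frac{d}{dt}\KL(\mu_t\Vert\pi)=-\FI(\mu_t\Vert\pi)+\E\Big\langle \nabla\log\pi(\xbm_t)+\E\big[\Tsfit_\eta(\xbm_n^{(K-1)})\,\big|\,\xbm_t\big],\,\nabla\log\tfrac{\mu_t}{\pi}(\xbm_t)\Big\rangle .
\]
Applying Young's inequality with weight $1/4$ then gives
\[
\frac{d}{dt}\KL\le -\tfrac34\FI(\mu_t\Vert\pi)+\E\big\|\Tsfit_\eta(\xbm_n^{(K-1)})-(\nabla L+\nabla V)(\xbm_t)\big\|^2 .
\]
This is where the constant $0.75$ comes from.

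Next I split the error term into three pieces:
\[
\Tsfit_\eta(\xbm_n^{(K-1)})-\Tsfit_\eta(\xbm_n^{(K)}),\qquad \Tsfit_\eta(\xbm_n^{(K)})-\Tsfit_\eta(\xbm_t),\qquad \Tsfit_\eta(\xbm_t)-(\nabla L+\nabla V)(\xbm_t).
\]
\begin{enumerate}
\item The first piece is bounded by $\Lambda_\eta^2\,\E\|\xbm_n^{(K)}-\xbm_n^{(K-1)}\|^2\le\Lambda_\eta^2\varepsilon_K$, using Lemma~\ref{lemma: lipschitz drifts}. Lemma~\ref{lemma: error decay no block maintext} then bounds this by $\mathcal O(q_{\mathsf{sing}}^{2K-2})(1+\FI(\mu_0\Vert\pi))$, which is the Picard term.
\item The second piece is bounded by $\Lambda_\eta^2\,\E\|\xbm_t-\xbm_n^{(K)}\|^2\le \Lambda_\eta^2\big(2\gamma^2\E\|\Tsfit_\eta(\xbm_n^{(K-1)})\|^2+4d\gamma\big)$. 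The $4d\gamma$ part gives the $\mathcal O(\gamma)$ discretization term.
\item The third piece is handled by Lemma~\ref{lemma: technical lemma drift}. It gives $\frac{2\delta^2}{1-3\eta^2\beta_L^2}+\mathcal O(\eta^2)\,\E\|\nabla\log\pi(\xbm_t)\|^2$, supplying the score and splitting terms.
\end{enumerate}

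The main obstacle is controlling the second moments of the scores, $\E\|\nabla\log\pi(\xbm_t)\|^2$ and $\E\|\Tsfit_\eta(\xbm_n^{(K-1)})\|^2$, in terms of the Fisher information so they can be absorbed on the left. For the first I would use the standard identity
\[
\E_{\mu}\|\nabla\log\pi\|^2\le \FI(\mu\Vert\pi)+2d\beta,\qquad \beta=\max(\beta_L,\beta_V),
\]
which follows from integration by parts and $\nabla^2(-\log\pi)\preceq\beta\Ibm$. This yields the $\mathcal O(\eta^2)\FI$ loss on the left and an $\mathcal O(\eta^2)$ constant on the right.

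For the second, the difficulty is that the drift is evaluated at the lagged point $\xbm_n^{(K-1)}$, whose law differs from $\mu_t$. I would bound
\[
\|\Tsfit_\eta(\xbm_n^{(K-1)})\|\le \|\Tsfit_\eta(\xbm_t)\|+\Lambda_\eta\big(\|\xbm_n^{(K-1)}-\xbm_n^{(K)}\|+\|\xbm_n^{(K)}-\xbm_t\|\big).
\]
Then I use Lemma~\ref{lemma: technical lemma drift} together with the identity above for $\|\Tsfit_\eta(\xbm_t)\|$. The self-referential $\|\xbm_n^{(K)}-\xbm_t\|$ term is absorbed because it carries a factor $\gamma^2\Lambda_\eta^2<1$. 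This produces the $\mathcal O(\gamma^2)\FI$ loss on the left.

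Finally, integrate over $[0,T]$, drop $\KL(\mu_T\Vert\pi)\ge0$, and divide by $T$. This gives the stated inequality, with the $\varepsilon_K$ terms bounded uniformly in $n$ by Lemma~\ref{lemma: error decay no block maintext}.
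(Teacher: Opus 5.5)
Your proposal is correct. It shares the paper's skeleton: the same interpolation $\xbm_t=\xbm_n^{(K)}-(t-t_n)\Tsfit_\eta(\xbm_n^{(K-1)})+\sqrt{2}(\Bbm_t-\Bbm_{t_n})$, the same conditional-drift dissipation bound with $\tfrac34$ from Young, then Lemmas~\ref{lemma: technical lemma drift}, \ref{lemma: Lipschitz and FI} and \ref{lemma: error decay no block maintext}, and finally integration over $[0,T]$. Where you differ is in how you decompose $\E\|\Tsfit_\eta(\xbm_n^{(K-1)})-(\nabla L+\nabla V)(\xbm_t)\|^2$.

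\textbf{The paper's decomposition.} The paper never moves $\Tsfit_\eta$. It uses the $(\beta_L+\beta_V)$-Lipschitz continuity of $\nabla L+\nabla V$ to pass from $\xbm_t$ to the lagged point $\xbm_n^{(K-1)}$. It applies Lemma~\ref{lemma: technical lemma drift} \emph{at} $\xbm_n^{(K-1)}$ and then transports $\E\|\nabla L+\nabla V\|^2$ back to $\xbm_t$. It closes a self-referential bound on $\E\|\xbm_t-\xbm_n^{(K-1)}\|^2$, which is the origin of the constant $\rho$.

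\textbf{Your decomposition.} You use the $\Lambda_\eta$-Lipschitz continuity of $\Tsfit_\eta$ (Lemma~\ref{lemma: lipschitz drifts}) to move the drift from $\xbm_n^{(K-1)}$ through $\xbm_n^{(K)}$ to $\xbm_t$. You apply Lemma~\ref{lemma: technical lemma drift} directly at $\xbm_t\sim\mu_t$, and close the self-referential bound on $\E\|\Tsfit_\eta(\xbm_n^{(K-1)})\|^2$ instead.

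\textbf{What each route buys.} Yours is slightly more direct:
\begin{itemize}
\item the splitting term lands on $\FI(\mu_t\Vert\pi)$ without a second transport;
\item the Picard residual enters simply as $\Lambda_\eta^2\varepsilon_K$;
\item the smallness $\gamma\Lambda_\eta\ll1$ that you need is largely enforced already by $q_{\mathsf{sing}}=N\gamma\Lambda_\eta<1$ unless $N$ is very small.
\end{itemize}
The paper's route instead keeps three quantities in terms of the posterior smoothness $(\beta_L+\beta_V)^2$: the discretization constant, the Picard prefactor, and the $\mathcal O(\gamma^2)$ loss on the left. The network Lipschitz constant $L_\Ssfit$ then enters only through $q_{\mathsf{sing}}$. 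In your version these constants scale with $\Lambda_\eta^2$ instead, which can be much larger for a poorly conditioned learned score.

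\textbf{Minor slips.} None of these affects the $\mathcal O(\cdot)$ statement.
\begin{itemize}
\item The bound $\E_\mu\|\nabla\log\pi\|^2\le\FI(\mu\Vert\pi)+2d\beta$ needs $\beta=\beta_L+\beta_V$, not $\max(\beta_L,\beta_V)$, since only $\nabla^2(L+V)\preceq(\beta_L+\beta_V)\Ibm$ is available.
\item The inner product in your dissipation identity carries the wrong sign. It should read $-\E\langle\nabla\log\pi(\xbm_t)+\E[\Tsfit_\eta(\xbm_n^{(K-1)})\mid\xbm_t],\nabla\log\tfrac{\mu_t}{\pi}(\xbm_t)\rangle$. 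This is harmless after Young's inequality.
\item You also need conditional Jensen to replace $\E[\Tsfit_\eta(\xbm_n^{(K-1)})\mid\xbm_t]$ by $\Tsfit_\eta(\xbm_n^{(K-1)})$ in your displayed bound.
\end{itemize}
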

\begin{proof}
See Appendix \ref{app:thm1} for the detailed proof and range of the hyperparameters.
\end{proof}
Theorem \ref{thm:main-no-block-simple} establishes a finite-time stationarity guarantee for PiX-MC.
The theorem also explicitly quantifies the errors stemming from the splitting discrepancy $\mathcal{O}(\eta^2)$, score mismatch $\mathcal{O}(\delta^2)$, and discretization error $\mathcal{O}(\gamma)$, which are inherited from X-MC.
The additional finite Picard-refinement error $\mathcal{O}(q_{\mathsf{sing}}^{2K-2})\big(1+\FI(\mu_0\Vert\pi)\big)$ is specific to {\proposed}.
When $\eta,\gamma$ are chosen sufficiently small such that the coefficient $\left(0.75-\mathcal{O}(\eta^2)-\mathcal{O}(\gamma^2)\right)$ is positive and $q_{\mathsf{sing}}<1$, the bound is nontrivial, and the finite Picard refinement error decays geometrically with $K$ according to Lemma~\ref{lemma: error decay no block maintext}.
Additionally, as $\FI(\cdot\|\pi)$ is convex, Jensen's inequality yields
\begin{equation}
\FI(\bar{\mu}_T\|\pi)\le\displaystyle\frac{1}{T}\int_0^T\FI(\mu_t\|\pi)dt,
\end{equation}
where $\bar{\mu}_T=(1/T)\int_0^T\mu_t\,dt$ is the time-averaged marginal law.
A possible approach for drawing samples from this distribution has been discussed in~\citep{balasubramanian2022towards}.
Theorem \ref{thm:main-no-block-simple} therefore also bounds the stationarity gap of $\bar{\mu}_T$ relative to $\pi$.

\subsection{Multi-Block {\proposed}: Partitioning the Trajectory into Multiple Blocks}
\label{sec: 3.3 multi}

As discussed in the previous section, the vanilla {\proposed} treats the entire trajectory as a single block to be fully parallelized, offering the largest possible degree of parallelism. Yet this design comes with two practical trade-offs.
First, the degree of parallelism interacts with the choice of hyperparameters. 
Recall from Lemma~\ref{lemma: error decay no block maintext} and Theorem~\ref{thm:main-no-block-simple} that controlling the Picard error requires $q_{\mathsf{sing}}<1$. 
Since $q_{\mathsf{sing}}=T\Lambda_\eta$, this contraction condition directly restricts the time horizon that can be handled in a single block.
Second, refining the full trajectory at once is more demanding on computational resources, as it requires storing and updating all time nodes simultaneously, which can be memory-intensive for large-scale imaging problems.
To better balance parallelism with these practical considerations, we introduce \emph{\textbf{multi-block {\proposed}}} that partitions the trajectory into multiple time blocks.

\paragraph{Algorithm formulation.}
We partition the full interval $[0,T]$ into $M$ successive blocks, each spanning $N$ time steps, so that
\begin{equation}
T=MN\gamma,\qquad\tau=N\gamma,
\end{equation}
where $\tau$ denotes the length of each block.
Starting from the endpoint of the preceding block, multi-block {\proposed} then performs $K$ Picard refinements only within the current block, rather than over the full trajectory.
As a result, both the trajectory storage and the concurrent drift evaluations scale with the block size $N$ rather than the total number of time steps $MN$, allowing $N$ to be chosen according to the available GPU memory and parallel computing resources.
This block-wise refinement also relaxes the condition required for convergence: the local Picard contraction factor becomes $q_{\mathsf{mult}}\defn\tau\Lambda_\eta$.
Note that when $M>1$, $q_{\mathsf{mult}}=(\tau/T)q_{\mathsf{sing}}<q_{\mathsf{sing}}$, the multi-block scheme admits a larger stepsize than its single-block counterpart.
Algorithm~\ref{alg: pix multi block} summarizes the complete procedure of multi-block {\proposed}.
As shown, parallelization occurs within each time block, while the algorithm proceeds sequentially across blocks.

\begin{algorithm}[t!]
\caption{\mbproposed}
\label{alg: pix multi block}
\begin{algorithmic}[1]
\setstretch{1.2}
\Require Initial distribution $\mu_0$, drift $\Tsfit_\eta$, stepsize $\gamma>0$, number of blocks $M$, time step number $N$ inside each block, and Picard iteration number $K$.
\State Sample $\xbm_0\sim \mu_0$ and, independently of $\xbm_0$, sample $\Wbm_1,\ldots,\Wbm_{MN}\overset{\mathrm{i.i.d.}}{\sim}\mathcal{N}(0,\Ibm)$
\State Set $\Bbm_0\leftarrow 0$, and $\Bbm_{i\gamma}\leftarrow \sum\limits_{j=1}^i \sqrt{\gamma} \Wbm_j$ for $i=1,\ldots,MN$.
\For{$m=0,\dots,M-1$}
\State Set $\xbm_{mN+i}^{(0)}\gets \xbm_{mN}$ for $i=1,\ldots,N$, and $\xbm_{mN}^{(k)}\gets \xbm_{mN}$ for $k=0,\ldots,K$
\For{$k=0,\dots,K-1$}
\State $\Tsfit_\eta(\xbm_{mN}^{(k)}),\ldots,\Tsfit_\eta(\xbm_{(m+1)N-1}^{(k)}) \;\leftarrow\; \mathsf{ParallelCompute}(\xbm_{mN}^{(k)},\ldots,\xbm_{(m+1)N-1}^{(k)})$
\For{$i=1,\ldots,N$}
\State $\xbm_{mN+i}^{(k+1)}=\xbm_{mN}-\gamma\sum\limits_{j=0}^{i-1}\Tsfit_\eta(\xbm_{mN+j}^{(k)})+\sqrt{2}\left(\Bbm_{(mN+i)\gamma}-\Bbm_{mN\gamma}\right)$
\EndFor
\EndFor
\State $\xbm_{(m+1)N}\gets \xbm_{(m+1)N}^{(K)}$
\EndFor
\State Return $\xbm_{MN}=\xbm_{MN}^{(K)}$
\end{algorithmic}
\end{algorithm}

\paragraph{Convergence analysis.}
The convergence analysis of multi-block {\proposed} requires an additional stability control compared with the single-block setting.
In single-block {\proposed}, every Picard trajectory is initialized at the same point $\xbm_0$, and the initial Picard residual can be controlled directly through the initial Fisher information.
In contrast, the starting point $\xbm_{mN}$ of the $m$-th block is itself a random variable, determined by the output of all preceding blocks.
Therefore, we impose the following stability assumption on the block starting points, which allows the initial Picard residual to be controlled uniformly across blocks.

\begin{assumption}\label{ass: multi-block stability}
The block starting points have uniformly bounded second moments. 
Specifically, there exists a constant $C_\xbm>0$, independent of the number of blocks $M$ and the total time horizon $T$, such that
\begin{equation}
\sup\limits_{0 \leq m \leq M-1} \mathbb{E}\left\|\xbm^{(0)}_{m N}\right\|^2 \leq C_\xbm.
\end{equation}
\end{assumption}
Assumption \ref{ass: multi-block stability} is a standard condition for Langevin-based methods, and can be implied by coercivity of the posterior potential or dissipativity of the drift; see e.g., \citep{raginsky2017non, renaud2026stability}.
We also provide an example condition for Assumption \ref{ass: multi-block stability} to hold in Appendix~\ref{app:diss}.
This assumption is consistent with practical imaging settings, where the image intensity typically lies in a bounded range such as $[0,1]^d$.
Under Assumption \ref{ass: multi-block stability}, the block-wise Picard initialization remains uniformly controlled over the sampling horizon.
We next establish a result analogous to Lemma~\ref{lemma: error decay no block maintext} for multi-block {\proposed}.

\begin{lemma}
\label{lemma: error decay no LSI maintext}
Let Assumptions~\ref{ass: L and V 1}--\ref{ass: multi-block stability} hold, and set the parameters $\eta,\gamma>0$ sufficiently small such that
\begin{equation}
    1-\eta \alpha_L>0, \quad 1-3\eta^2\beta_L^2>0, \quad q_{\mathsf{mult}} = \gamma N\Lambda_\eta<1.
\end{equation}
For each block $m=0,\ldots,M-1$, define the in-block Picard error as
\begin{equation}\label{eq:picarderror1}
\varepsilon_{m,k}\defn\max_{i=1,\ldots,N}\E\left[\|\xbm_{mN+i}^{(k)}-\xbm_{mN+i}^{(k-1)}\|^2\right].
\end{equation}
Then, for multi-block {\proposed}, the in-block Picard error satisfies $\varepsilon_{m,k+1}\le q_{\mathsf{mult}}^2\,\varepsilon_{m,k}$.
Consequently, for each block $m=0,\ldots,M-1$,
\begin{equation}
\varepsilon_{m,K}\le q_{\mathsf{mult}}^{2K-2}\left[2\tau d+\displaystyle\frac{3\tau^2(\delta^2+G^2)}{1-3\eta^2\beta_L^2}\right]=q_{\mathsf{mult}}^{2K-2}R.
\end{equation}
Here $G>0$ is a constant independent of the total time $T$; see Appendix \ref{app:C} for details.
\end{lemma}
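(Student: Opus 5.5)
The argument has two parts: a one-step contraction of the Picard map inside a block, which gives $\varepsilon_{m,k+1}\le q_{\mathsf{mult}}^2\varepsilon_{m,k}$, and a uniform bound on the first residual $\varepsilon_{m,1}$, which is where Assumption~\ref{ass: multi-block stability} and the constant $G$ come in.

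\textbf{Contraction.} Fix a block $m$. Every Picard iterate in this block shares the starting point $\xbm_{mN}$ and the Brownian increments $\Bbm_{(mN+i)\gamma}-\Bbm_{mN\gamma}$. Subtracting the update of Algorithm~\ref{alg: pix multi block} at iterations $k+1$ and $k$, both of these cancel, leaving
\[
\xbm_{mN+i}^{(k+1)}-\xbm_{mN+i}^{(k)}=-\gamma\sum_{j=0}^{i-1}\Bigl(\Tsfit_\eta(\xbm_{mN+j}^{(k)})-\Tsfit_\eta(\xbm_{mN+j}^{(k-1)})\Bigr).
\]
The $j=0$ term vanishes, since $\xbm_{mN}^{(k)}=\xbm_{mN}$ for all $k$. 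Next:
\begin{enumerate}[label=(\alph*)]
\item Apply Cauchy--Schwarz, $\|\sum_{j<i}a_j\|^2\le i\sum_{j<i}\|a_j\|^2$.
\item Use the $\Lambda_\eta$-Lipschitz property of $\Tsfit_\eta$ (Lemma~\ref{lemma: lipschitz drifts}).
\item Take expectations and bound each summand by $\varepsilon_{m,k}$.
\end{enumerate}
This gives $\E\|\cdot\|^2\le \gamma^2 i^2\Lambda_\eta^2\varepsilon_{m,k}\le(\gamma N\Lambda_\eta)^2\varepsilon_{m,k}$. Taking the maximum over $i$ yields $\varepsilon_{m,k+1}\le q_{\mathsf{mult}}^2\varepsilon_{m,k}$, and iterating gives $\varepsilon_{m,K}\le q_{\mathsf{mult}}^{2K-2}\varepsilon_{m,1}$.

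\textbf{Initial residual.} Since $\xbm_{mN+i}^{(0)}=\xbm_{mN}$ for every $i$, the first refinement differs from the zeroth by
\[
\xbm_{mN+i}^{(1)}-\xbm_{mN+i}^{(0)}=-\gamma\, i\,\Tsfit_\eta(\xbm_{mN})+\sqrt2\bigl(\Bbm_{(mN+i)\gamma}-\Bbm_{mN\gamma}\bigr).
\]
The Brownian increment is independent of $\xbm_{mN}$ and has mean zero, so the cross term vanishes. Therefore
\[
\E\|\cdot\|^2=\gamma^2 i^2\,\E\|\Tsfit_\eta(\xbm_{mN})\|^2+2i\gamma d\le \tau^2\,\E\|\Tsfit_\eta(\xbm_{mN})\|^2+2\tau d.
\]
To bound the drift term, write $\Tsfit_\eta=(\Tsfit_\eta-\nabla L-\nabla V)+(\nabla L+\nabla V)$ and use $\|a+b\|^2\le 2\|a\|^2+2\|b\|^2$ together with Lemma~\ref{lemma: technical lemma drift}. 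This gives
\[
\|\Tsfit_\eta(\xbm)\|^2\le \frac{4\delta^2}{1-3\eta^2\beta_L^2}+\Bigl(2+\frac{12\eta^2\beta_L^2}{1-3\eta^2\beta_L^2}\Bigr)\|\nabla L(\xbm)+\nabla V(\xbm)\|^2 .
\]
The coefficient in front of $\|\nabla L+\nabla V\|^2$ simplifies to $2(1+3\eta^2\beta_L^2)/(1-3\eta^2\beta_L^2)\le 3/(1-3\eta^2\beta_L^2)$ for $\eta$ small. The constant $3$ in front of $\delta^2$ stated in the lemma should come out of a tighter splitting, or from absorbing constants into $G$.

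\textbf{Main obstacle.} The delicate step is making $\E\|\nabla L(\xbm_{mN})+\nabla V(\xbm_{mN})\|^2$ uniformly bounded, independently of $m$ and $T$. By $\beta_L$- and $\beta_V$-smoothness,
\[
\|\nabla L(\xbm)+\nabla V(\xbm)\|^2\le 2(\beta_L+\beta_V)^2\|\xbm\|^2+2\|\nabla L(0)+\nabla V(0)\|^2 .
\]
Combined with Assumption~\ref{ass: multi-block stability}, this gives the bound $G^2\defn 2(\beta_L+\beta_V)^2C_\xbm+2\|\nabla L(0)+\nabla V(0)\|^2$, which is independent of $T$ and $M$. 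Substituting, $\varepsilon_{m,1}\le 2\tau d+3\tau^2(\delta^2+G^2)/(1-3\eta^2\beta_L^2)=R$. The remaining work is bookkeeping to match the stated constants, possibly redefining $G$ to absorb the leftover factors.
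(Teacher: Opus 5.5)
Your contraction step, your computation of the first residual (the cross term vanishes because the in-block Brownian increments are independent of $\xbm_{mN}$, plus the $2i\gamma d$ term), and your definition of $G$ all coincide with the paper's proof.

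The one step that does not deliver the stated bound is your estimate of $\E\|\Tsfit_\eta(\xbm_{mN})\|^2$. Splitting $\Tsfit_\eta=(\Tsfit_\eta-\nabla L-\nabla V)+(\nabla L+\nabla V)$ with equal weights $2,2$ gives $4\delta^2$ and $2+6\eta^2\beta_L^2$ over $1-3\eta^2\beta_L^2$. Hence it implies $\varepsilon_{m,1}\le R$ only when $\delta^2\le(1-6\eta^2\beta_L^2)G^2$, which is not among the hypotheses. Even the $G^2$-coefficient comparison alone needs $\eta^2\beta_L^2\le 1/6$, which is stronger than $1-3\eta^2\beta_L^2>0$. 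Redefining $G$ to absorb the excess is not mere bookkeeping: it would change the explicit constant fixed in Appendix~\ref{app:C} and make it depend on $\delta$ and $\eta$.

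The tighter splitting you anticipate exists and costs one line. Use $\|a+b\|^2\le 3\|a\|^2+\tfrac32\|b\|^2$ with $a=\nabla L+\nabla V$ and $b=\Tsfit_\eta-\nabla L-\nabla V$. Then Lemma~\ref{lemma: technical lemma drift} gives exactly
\[
\|\Tsfit_\eta(\xbm)\|^2\le\frac{3\delta^2}{1-3\eta^2\beta_L^2}+\Bigl(3+\frac{9\eta^2\beta_L^2}{1-3\eta^2\beta_L^2}\Bigr)\|\nabla L(\xbm)+\nabla V(\xbm)\|^2=\frac{3\delta^2+3\|\nabla L(\xbm)+\nabla V(\xbm)\|^2}{1-3\eta^2\beta_L^2}.
\]

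This is precisely Lemma~\ref{lemma: technical lemma FBS}, which the paper applies directly. Its proof runs as follows:
\begin{enumerate}
\item Write $\Tsfit_\eta(\xbm)=\nabla L(\zbm)+\Ssfit_\theta(\xbm;\sigma)$ with $\zbm=\prox_{\eta L}(\xbm-\eta\Ssfit_\theta(\xbm;\sigma))$.
\item Split this as $[\nabla L(\zbm)-\nabla L(\xbm)]+[\Ssfit_\theta(\xbm;\sigma)-\nabla V(\xbm)]+[\nabla L(\xbm)+\nabla V(\xbm)]$.
\item Use $\|\nabla L(\zbm)-\nabla L(\xbm)\|\le\eta\beta_L\|\Tsfit_\eta(\xbm)\|$.
\item Move $3\eta^2\beta_L^2\|\Tsfit_\eta(\xbm)\|^2$ to the left-hand side.
\end{enumerate}

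Either route needs only $1-3\eta^2\beta_L^2>0$. Combined with $\E\|\nabla L(\xbm_{mN})+\nabla V(\xbm_{mN})\|^2\le G^2$, it gives $\varepsilon_{m,1}\le R$ with the stated constants.
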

\begin{proof}
See Appendix \ref{app:C} for the detailed proof and explicit expressions for the constants.
\end{proof}
Lemma \ref{lemma: error decay no LSI maintext} establishes geometric Picard contraction within each local block.
We are now ready to present the convergence result for multi-block {\proposed}.

\begin{theorem}\label{thm: main no LSI simple}
Let $(\mu_t)_{t\geq0}$ denote the law of the continuous interpolation of $\{\Xn^{(K)}\}_{n=0}^{MN}$ generated by multi-block {\proposed}, where $N > 0$ denotes the number of time steps in each time block and $M>0$ is the number of blocks.
Assume Assumptions~\ref{ass: L and V 1}-\ref{ass: multi-block stability} hold and choose sufficiently small parameters $\eta,\gamma>0$.
Then, the following inequality holds for multi-block {\proposed}:
\begin{equation}
\begin{array}{rl}
    &\Big(0.75-\mathcal O(\eta^2)-\mathcal O(\gamma^2)\Big)\;\displaystyle\frac{1}{T}\int_{0}^{T}\FI(\mu_t\|\pi)dt \\
    \le &\displaystyle\frac{\KL(\mu_{0}\|\pi)}{T} + 
    \underbrace{\mathcal O(\eta^2)}_{\substack{\text{Splitting}\\ \text{discrepancy}}}
    +
    \underbrace{\mathcal O(\delta^2)}_{\substack{\text{Score mismatch}\\\text{error}}}+\underbrace{\mathcal O(\gamma)}_{\substack{\text{Discretization}\\\text{error}}}+\underbrace{\mathcal{O}(q_{\mathsf{mult}}^{2K-2})\,R}_{\substack{\text{Finite Picard}\\\text{refinement error}}}.
\end{array}
\end{equation}
where $R$ is the constant from Lemma~\ref{lemma: error decay no LSI maintext}.
\end{theorem}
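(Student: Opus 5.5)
We will follow the interpolation argument of Vempala--Wibisono, adapted to a perturbed drift as in Anari et al., and apply it block by block.

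\textbf{Continuous interpolation.} On each subinterval $t\in[t_n,t_{n+1}]$ with $n=mN+i$, we write the continuous interpolation of the output trajectory as
\begin{equation*}
d\xbm_t=-\Tsfit_\eta(\xbm_{n}^{(K-1)})\,dt+\sqrt{2}\,d\Bbm_t ,
\end{equation*}
where the drift is frozen at the previous Picard iterate $\xbm_n^{(K-1)}$. This is exactly what the in-block update of Algorithm~\ref{alg: pix multi block} produces: $\xbm_{n+1}^{(K)}-\xbm_n^{(K)}=-\gamma\Tsfit_\eta(\xbm_n^{(K-1)})+\sqrt{2}(\Bbm_{t_{n+1}}-\Bbm_{t_n})$. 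The block endpoints glue continuously, because $\xbm_{(m+1)N}$ is the initial condition of the next block.

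\textbf{One-step KL decay.} The Fokker--Planck equation for the conditional law gives, with $\mu_t$ the law of $\xbm_t$,
\begin{equation*}
\frac{d}{dt}\KL(\mu_t\|\pi)=-\FI(\mu_t\|\pi)+\E\Big\langle \nabla\log\pi(\xbm_t)+\Tsfit_\eta(\xbm_n^{(K-1)}),\ \nabla\log\tfrac{\mu_t}{\pi}(\xbm_t)\Big\rangle .
\end{equation*}
Applying Young's inequality with weight $1/4$ yields
\begin{equation*}
\frac{d}{dt}\KL\le-\tfrac34\FI(\mu_t\|\pi)+\E\|\Tsfit_\eta(\xbm_n^{(K-1)})-\nabla(L+V)(\xbm_t)\|^2 .
\end{equation*}
This is the source of the constant $0.75$. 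We split the error into three pieces, using the triangle inequality with constant $3$:
\begin{enumerate}
\item[(a)] $\Tsfit_\eta(\xbm_n^{(K-1)})-\Tsfit_\eta(\xbm_n^{(K)})$. By the Lipschitz bound of Lemma~\ref{lemma: lipschitz drifts} this is at most $\Lambda_\eta^2\varepsilon_{m,K}\le\Lambda_\eta^2 q_{\mathsf{mult}}^{2K-2}R$, using Lemma~\ref{lemma: error decay no LSI maintext}.
\item[(b)] $\Tsfit_\eta(\xbm_n^{(K)})-\Tsfit_\eta(\xbm_t)$. This is bounded by $\Lambda_\eta^2\E\|\xbm_t-\xbm_n^{(K)}\|^2\lesssim\gamma^2\E\|\Tsfit_\eta(\xbm_n^{(K-1)})\|^2+\gamma d$.
\item[(c)] $\Tsfit_\eta(\xbm_t)-\nabla(L+V)(\xbm_t)$. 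By Lemma~\ref{lemma: technical lemma drift} this is at most $\mathcal O(\delta^2)+\mathcal O(\eta^2)\,\E\|\nabla(L+V)(\xbm_t)\|^2$.
\end{enumerate}

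\textbf{Absorbing the gradient-norm terms.} The key step is to convert $\E_{\mu_t}\|\nabla\log\pi\|^2$ into relative Fisher information. We use the standard bound
\begin{equation*}
\E_{\mu}\|\nabla\log\pi\|^2\le 2\FI(\mu\|\pi)+C\beta d, \qquad \beta=\beta_L+\beta_V,
\end{equation*}
which follows from integration by parts and $\beta$-smoothness. The $\eta^2$ term in (c) then contributes $\mathcal O(\eta^2)\FI$, which is moved to the left-hand side, plus $\mathcal O(\eta^2 d)$. For (b) we bound
\begin{equation*}
\E\|\Tsfit_\eta(\xbm_n^{(K-1)})\|^2\lesssim\varepsilon_{m,K}+\delta^2+\E\|\nabla\log\pi(\xbm_n^{(K)})\|^2 .
\end{equation*}
The last term is evaluated at the grid point, not at $\xbm_t$. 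We transfer it to $\xbm_t$ by paying $\beta^2\E\|\xbm_t-\xbm_n\|^2$ and absorb the small factor. This produces the $\mathcal O(\gamma^2)\FI$ term on the left and the $\mathcal O(\gamma)$ discretization term on the right.

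We expect this step to be the main obstacle. The pieces are circularly coupled: the increment $\xbm_t-\xbm_n$ depends on the drift at the grid point. We plan to close the loop by Grönwall on each subinterval, using $\gamma\Lambda_\eta\ll1$. A cruder alternative is to bound $\E\|\nabla\log\pi(\xbm_n)\|^2$ by $G^2$ through Assumption~\ref{ass: multi-block stability} and linear growth. That route is likely what the constant $G$ in $R$ reflects, and it would be our fallback.

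\textbf{Integration and averaging.} Finally we integrate the differential inequality over $[0,T]$, summing over all $MN$ subintervals and all $M$ blocks. The KL terms telescope, and we drop $\KL(\mu_T\|\pi)\ge0$. Each per-unit-time error is uniform in the block index: the Picard term is bounded uniformly by $q_{\mathsf{mult}}^{2K-2}R$, since Assumption~\ref{ass: multi-block stability} makes $R$ independent of $m$. Dividing by $T$ then gives the stated bound, with transient term $\KL(\mu_0\|\pi)/T$.
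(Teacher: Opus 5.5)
Your proposal is correct and has the same skeleton as the paper's proof. Both use the frozen-drift interpolation $\Xbm_t=\xbm_n^{(K)}-(t-n\gamma)\Tsfit_\eta(\xbm_n^{(K-1)})+\sqrt{2}(\Bbm_t-\Bbm_{n\gamma})$ and the Vempala--Wibisono dissipation bound, where Young's inequality gives the factor $0.75$. Both then use Lemma~\ref{lemma: Lipschitz and FI} to turn $\E\|\nabla L+\nabla V\|^2$ into $\FI$, the uniform in-block bound $q_{\mathsf{mult}}^{2K-2}R$ from Lemma~\ref{lemma: error decay no LSI maintext}, and integration with $\KL(\mu_T\|\pi)\ge0$ discarded.

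The genuine difference is how you decompose the local error.
\begin{itemize}
\item The paper anchors at the frozen point $\xbm_n^{(K-1)}$. It writes $\nabla(L+V)(\Xbm_t)-\Tsfit_\eta(\xbm_n^{(K-1)})$ as a difference of true gradients plus the drift-approximation error at $\xbm_n^{(K-1)}$. This needs only the $(\beta_L+\beta_V)$-smoothness of the true potential. The price is transferring $\|\nabla(L+V)(\xbm_n^{(K-1)})\|^2$ back to $\Xbm_t$ through $\E\|\Xbm_t-\xbm_n^{(K-1)}\|^2$, which lumps the Picard residual and the one-step increment together.
\item You use the $\Lambda_\eta$-Lipschitz continuity of $\Tsfit_\eta$ (Lemma~\ref{lemma: lipschitz drifts}) to step from $\xbm_n^{(K-1)}$ to $\xbm_n^{(K)}$ to $\Xbm_t$. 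You then apply Lemma~\ref{lemma: technical lemma drift} directly at $\Xbm_t$, so the splitting error lands on $\E\|\nabla(L+V)(\Xbm_t)\|^2$ with no transfer. Your pieces (a), (b), (c) then match the Picard, discretization, and score/splitting terms of the statement one-to-one.
\end{itemize}
The cost of your route is that the Picard and discretization constants carry $\Lambda_\eta^2$, and hence the score Lipschitz constant $L_\Ssfit$, instead of $(\beta_L+\beta_V)^2$. This is harmless at the $\mathcal O(\cdot)$ level, since $q_{\mathsf{mult}}<1$ already involves $\Lambda_\eta$.

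Three smaller remarks:
\begin{itemize}
\item Closing the loop in (b) needs no Gr\"onwall argument. For each fixed $t$, the quantity $\E\|\Tsfit_\eta(\xbm_n^{(K-1)})\|^2$ does not depend on $t$ and reappears on the right with an $\mathcal O(\gamma^2)$ coefficient. A direct rearrangement therefore suffices; this is the analogue of the paper's factor $\rho$.
\item Your fallback of bounding the grid-point gradient by $G^2$ is not available as stated. Assumption~\ref{ass: multi-block stability} controls only the block starting points $\xbm_{mN}$, and in the paper $G$ enters $R$ only through the initial residual $\varepsilon_{m,1}$. Interior nodes would need an extra in-block moment estimate.
\item The cross term in your dissipation identity has the wrong sign; it should be $-\nabla\log\pi-\Tsfit_\eta$. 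Young's inequality makes this immaterial.
\end{itemize}
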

\begin{proof}
See Appendix \ref{app:C} for the detailed proof and range of the hyperparameters.
\end{proof}
Theorem~\ref{thm: main no LSI simple} extends the single-block guarantee to the multi-block case by replacing the global contraction coefficient with the in-block $q_{\mathsf{mult}}<1$.
Note that $q_{\mathsf{mult}}$ is determined by the local time horizon $\tau$ rather than the total sampling time $T$, and the finite Picard error still decays geometrically within each block.
Compared with the result in Theorem~\ref{thm:main-no-block-simple}, the associated constant is now controlled by $R$ established in Lemma~\ref{lemma: error decay no LSI maintext}, rather than by the initial Fisher information.

\subsection{Multi-Block Annealed {\proposed}: Improving Sample Quality via Annealing}\label{sec:3.4}

{\proposed} has so far used a fixed noise level in the score network $\Ssfit_\theta$.
In this section, we introduce \emph{multi-block annealed {\proposed} (\textbf{{\ambproposed}})} that employs a decreasing noise schedule to first capture coarse image structures and then progressively refine details.

\paragraph{Algorithm formulation.}
Let $n=mN+i$ denote the global time node index, where $m=0,\ldots,M-1$ is the block index and $i=0,\ldots,N-1$ is the local node index.
We use the geometric schedule
\begin{equation}\label{eq: noise schedule}
\sigma_n=\max\{\sigma_{\max}\xi^n,\sigma_{\min}\},
\end{equation}
where $0<\xi<1$ is the annealing parameter.
We further associate $\{\sigma_n\}_{n=0}^{MN-1}$ with a non-increasing score-weight schedule $\{\alpha_n\}_{n=0}^{MN-1}$ satisfying $\alpha_n\geq1$ and $\alpha_n\to 1$ when $n \to \infty$.
At the $n$-th time node, the corresponding weighted proximal-likelihood drift is defined as
\begin{equation}
\Tsfit_{\eta, \sigma_n}^{\alpha_n}\defn\frac{1}{\eta}\left[I-\prox_{\eta L} \circ\left(I-\eta \alpha_n \Ssfit_\theta\left(\cdot; \sigma_n\right)\right)\right].
\end{equation}
Rather than restarting the annealing schedule at each block boundary, the global indexing in \eqref{eq: noise schedule} allows the noise level and score weight to decrease continuously over the entire sampling trajectory.
Within the $m$-th block, the local Picard iteration thus uses the time-dependent drifts $\Tsfit^{\alpha_{mN}}_{\eta,\sigma_{mN}},\ldots,\Tsfit^{\alpha_{(m+1)N-1}}_{\eta,\sigma_{(m+1)N-1}}$, while retaining the same block-wise trajectory structure as multi-block {\proposed}.
The resulting algorithm, {\ambproposed}, is summarized in Algorithm~\ref{alg: pix annealing multi-block}.

\begin{algorithm}[t!]
\caption{{\ambproposed}}
\label{alg: pix annealing multi-block}
\begin{algorithmic}[1]
\setstretch{1.2}
\Require Initial distribution $\mu_0$, stepsize $\gamma>0$, number of blocks $M$, time step number $N$ inside each block, drift $\Tsfit^{\alpha}_{\eta,\sigma}$ with denoising strength schedule $\{\sigma_i\}_{i=0}^{MN-1}$ and score-weight schedule $\{\alpha_i\}_{i=0}^{MN-1}$, and Picard iteration number $K$.
\State Sample $\xbm_0\sim \mu_0$ and, independently of $\xbm_0$, sample $\Wbm_1,\ldots,\Wbm_{MN}\overset{\mathrm{i.i.d.}}{\sim}\mathcal{N}(0,\Ibm)$
\State Set $\Bbm_0\leftarrow 0$, and $\Bbm_{i\gamma}\leftarrow \sum\limits_{j=1}^i \sqrt{\gamma} \Wbm_j$ for $i=1,\ldots,MN$.
\For{$m=0,\dots,M-1$}
\State Set $\xbm_{mN+i}^{(0)}\gets \xbm_{mN}$ for $i=1,\ldots,N$, and $\xbm_{mN}^{(k)}\gets \xbm_{mN}$ for $k=0,\ldots,K-1$
\For{$k=0,\dots,K-1$}
\State $\Tsfit^{\alpha_{mN}}_{\eta,\sigma_{mN}}(\xbm_{mN}^{(k)}),\ldots,\Tsfit^{\alpha_{(m+1)N-1}}_{\eta,\sigma_{(m+1)N-1}}(\xbm_{(m+1)N-1}^{(k)}) \leftarrow \mathsf{ParallelCompute}(\xbm_{mN}^{(k)},\ldots,\xbm_{(m+1)N-1}^{(k)})$
\For{$i=1,\ldots,N$}
\State $\xbm_{mN+i}^{(k+1)}=\xbm_{mN}-\gamma\sum\limits_{j=0}^{i-1}\Tsfit^{\alpha_{mN+j}}_{\eta,\sigma_{mN+j}}(\xbm_{mN+j}^{(k)})+\sqrt{2}\left(\Bbm_{(mN+i)\gamma}-\Bbm_{mN\gamma}\right)$
\EndFor
\EndFor
\State $\xbm_{(m+1)N}\gets \xbm_{(m+1)N}^{(K)}$
\EndFor
\State Return $\xbm_{MN}=\xbm_{MN}^{(K)}$
\end{algorithmic}
\end{algorithm}

\paragraph{Convergence analysis.}
The convergence analysis follows the same overall argument as that of multi-block {\proposed}, with modifications accounting for the time-varying denoising-strength schedule $\sigma_n$ and score-weight schedule $\alpha_n$.
Since the proximal drift $\Tsfit_{\eta,\sigma}^{\alpha}$ now varies across time nodes, we replace the single score-mismatch error $\delta$ with the node-dependent error and adjust the uniform Lipschitz bound of the proximal drift accordingly.
To control these effects, we replace Assumption \ref{ass: accurate score 2} with the following annealed counterpart.
\begin{assumption}\label{ass:4ann}
    Given the schedules $\{\sigma_n\}_{n=0}^{MN-1}$ and $\{\alpha_n\}_{n=0}^{MN-1}$, there exist $\{\delta_n\}_{n=0}^{MN-1}$, $L_\Ssfit>0$, and $R_\Ssfit>0$ such that the score network $\Ssfit_\theta$ satisfies:
    \begin{enumerate}[label=(\roman*)]
        \item for every $n=0,\ldots,MN-1$ and $\xbm\in\R^d$,
        \begin{equation}
            \left\|\Ssfit_\theta\left(\xbm ; \sigma_n\right)-\nabla V(\xbm)\right\| \leq \delta_n ;
        \end{equation}

        \item for every $n=0,\ldots,MN-1$ and $\xbm, \zbm\in\R^d$,
        \begin{equation}
            \left\|\Ssfit_\theta\left(\xbm ; \sigma_n\right)-\Ssfit_\theta\left(\zbm ; \sigma_n\right)\right\| \leq L_\Ssfit\|\xbm-\zbm\| ;
        \end{equation}

        \item for every $n=0,\ldots,MN-1$ and $\xbm\in\R^d$,
        \begin{equation}
            \left\|\Ssfit_\theta\left(\xbm ; \sigma_n\right)\right\| \leq R_\Ssfit .
        \end{equation}
    \end{enumerate}
\end{assumption}
Assumption 4(iii) controls the additional discrepancy induced by the score weights. Indeed,
\begin{equation}
\left\|\alpha_n \Ssfit_\theta\left(\xbm; \sigma_n\right)-\nabla V(\xbm)\right\| \leq \delta_n+\left(\alpha_n-1\right) R_\Ssfit.
\end{equation}
Accordingly, define the maximal and time-averaged annealed-score errors by
\begin{equation}
\delta_{\mathsf{ann}}\defn\max\limits_{0\leq n \leq M N-1}\left\{\delta_n+\left(\alpha_n-1\right) R_\Ssfit\right\}, \quad \bar{\delta}_{\mathsf{ann}}^2\defn\frac{1}{MN} \sum\limits_{n=0}^{MN-1}\left[\delta_n+\left(\alpha_n-1\right) R_\Ssfit\right]^2 .
\end{equation}
The maximal score error $\delta_{\mathsf{ann}}$ controls the in-block Picard error, whereas $\bar\delta_{\mathsf{ann}}^2$ appears in the time-averaged convergence bound.
Since $\{\alpha_n\}_{n=0}^{MN-1}$ is non-increasing, the proximal drifts are uniformly $\Lambda_{\eta,\mathsf{ann}}$-Lipschitz, where
\begin{equation}
\Lambda_{\eta, \mathsf{ann}}\defn\alpha_0L_\Ssfit+\beta_L\frac{1+\eta\alpha_0L_\Ssfit}{1-\eta\alpha_L}, \quad q_{\mathsf{ann}}\defn\tau \Lambda_{\eta, \mathsf{ann}}.
\end{equation}
Here, $q_{\mathsf{ann}}$ is the uniform in-block Picard contraction factor for {\ambproposed}.
The following theorem states the convergence result for {\ambproposed}.

\begin{theorem}\label{thm: multi-block annealing simple}
Let $(\mu_t)_{t\geq0}$ denote the law of the continuous interpolation of $\{\Xn^{(K)}\}_{n=0}^{MN}$ generated by
{\ambproposed}, where $N > 0$ denotes the number of time steps in each time block and $M>0$ is the number of blocks.
Assume Assumptions~\ref{ass: L and V 1}, \ref{ass: multi-block stability}, and \ref{ass:4ann} hold, and choose sufficiently small parameters $\eta,\gamma>0$.
Then, the following inequality holds for multi-block {A\proposed}:
\begin{equation}
\begin{array}{rl}
    &\Big(0.75-\mathcal O(\eta^2)-\mathcal O(\gamma^2)\Big)\;\displaystyle\frac{1}{T}\int_{0}^{T}\FI(\mu_t\|\pi)dt \\
    \le &\displaystyle\frac{\KL(\mu_{0}\|\pi)}{T} + 
    \underbrace{\mathcal O(\eta^2)}_{\substack{\text{Splitting}\\ \text{discrepancy}}}
    +
    \underbrace{\mathcal O(\bar{\delta}_\mathsf{ann}^2)}_{\substack{\text{Annealed score }\\\text{mismatch error}}}+\underbrace{\mathcal O(\gamma)}_{\substack{\text{Discretization}\\\text{error}}}+\underbrace{\mathcal{O}(q_{\mathsf{ann}}^{2K-2})\,R_{\mathsf{ann}}}_{\substack{\text{Finite Picard}\\\text{refinement error}}}.
\end{array}
\end{equation}
where $R_{\mathsf{ann}}=2\tau d+\frac{3\tau^2\left(\delta_{\mathsf{ann}}^2+G^2\right)}{1-3 \eta^2 \beta_L^2}$.
\end{theorem}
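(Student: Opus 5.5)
The argument follows the one for Theorem~\ref{thm: main no LSI simple}, with two changes: the fixed drift $\Tsfit_\eta$ becomes the node-dependent drift $\Tsfit^{\alpha_n}_{\eta,\sigma_n}$, and the single score error $\delta$ becomes the per-node error $\delta_n+(\alpha_n-1)R_\Ssfit$.

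\textbf{Continuous interpolation and KL dissipation.} On each interval $[t_n,t_{n+1})$ with $n=mN+i$, write the continuous interpolation of the output trajectory as
\begin{equation*}
d\xbm_t=-\Tsfit^{\alpha_n}_{\eta,\sigma_n}(\xbm^{(K-1)}_{n})\,dt+\sqrt2\,d\Bbm_t .
\end{equation*}
This reproduces the last Picard sweep, since $\xbm^{(K)}_{n+1}$ is built from the drifts evaluated at the $(K-1)$-th iterate. The standard interpolation argument (Vempala--Wibisono style, conditioning on the frozen drift argument) then gives
\begin{equation*}
\frac{d}{dt}\KL(\mu_t\|\pi)\le-\tfrac34\FI(\mu_t\|\pi)+\E\bigl\|\Tsfit^{\alpha_n}_{\eta,\sigma_n}(\xbm^{(K-1)}_n)-\nabla(L+V)(\xbm_t)\bigr\|^2 .
\end{equation*}
Split the error term as
\begin{equation*}
\bigl[\Tsfit^{\alpha_n}_{\eta,\sigma_n}(\xbm^{(K-1)}_n)-\Tsfit^{\alpha_n}_{\eta,\sigma_n}(\xbm^{(K)}_n)\bigr]
+\bigl[\Tsfit^{\alpha_n}_{\eta,\sigma_n}(\xbm^{(K)}_n)-\Tsfit^{\alpha_n}_{\eta,\sigma_n}(\xbm_t)\bigr]
+\bigl[\Tsfit^{\alpha_n}_{\eta,\sigma_n}(\xbm_t)-\nabla(L+V)(\xbm_t)\bigr].
\end{equation*}

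\textbf{Bounding the three pieces.}
\begin{enumerate}
\item The first piece is at most $\Lambda_{\eta,\mathsf{ann}}^2\,\varepsilon_{m,K}$ by uniform Lipschitzness. Here $\varepsilon_{m,K}$ is the in-block Picard error, and I prove the annealed analogue of Lemma~\ref{lemma: error decay no LSI maintext}: $\varepsilon_{m,K}\le q_{\mathsf{ann}}^{2K-2}R_{\mathsf{ann}}$.
\begin{itemize}
\item The contraction $\varepsilon_{m,k+1}\le q_{\mathsf{ann}}^2\varepsilon_{m,k}$ uses only the uniform Lipschitz constant $\Lambda_{\eta,\mathsf{ann}}$. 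This constant is valid because $\alpha_n\le\alpha_0$, so the bound in Lemma~\ref{lemma: lipschitz drifts} holds with $L_\Ssfit$ replaced by $\alpha_0L_\Ssfit$.
\item The base case $\varepsilon_{m,1}\le R_{\mathsf{ann}}$ expands $\xbm^{(1)}-\xbm^{(0)}$ into a Brownian part, contributing $2\tau d$, and a drift part. The drift part is bounded by Lemma~\ref{lemma: technical lemma drift} applied with $\alpha_n\Ssfit_\theta(\cdot;\sigma_n)$ in place of $\Ssfit_\theta$, which is legitimate because its distance to $\nabla V$ is at most $\delta_{\mathsf{ann}}$. This leaves $\E\|\nabla(L+V)(\xbm_{mN})\|^2$, which is bounded by $G^2$ using smoothness and Assumption~\ref{ass: multi-block stability}.
\end{itemize}
\item The second piece is handled by Lipschitzness plus the one-step increment. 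The increment is $\gamma$ times a drift plus Brownian motion, which gives $\mathcal O(\gamma d)$ plus $\mathcal O(\gamma^2)$ times a gradient norm.
\item The third piece is bounded pointwise by Lemma~\ref{lemma: technical lemma drift} with $\delta$ replaced by $\delta_n+(\alpha_n-1)R_\Ssfit$. It contributes $\mathcal O([\delta_n+(\alpha_n-1)R_\Ssfit]^2)+\mathcal O(\eta^2)\,\E\|\nabla(L+V)(\xbm_t)\|^2$.
\end{enumerate}

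\textbf{Converting gradient norms and summing.} The gradient-norm terms produced in pieces 2 and 3 have to be turned into Fisher information. Use the identity $\E_{\mu}\|\nabla\log\pi\|^2\le 2\FI(\mu\|\pi)+\mathcal O(d\,\beta)$, the usual integration-by-parts bound $\E_\pi\|\nabla\log\pi\|^2\le d\beta$ combined with smoothness. The resulting $\FI$ contributions are absorbed on the left, which produces the coefficient $0.75-\mathcal O(\eta^2)-\mathcal O(\gamma^2)$. Then integrate over $[0,T]$ and drop $\KL(\mu_T\|\pi)\ge0$. The per-node score errors enter only through the time average $\frac{1}{MN}\sum_n[\delta_n+(\alpha_n-1)R_\Ssfit]^2=\bar\delta_{\mathsf{ann}}^2$. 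The Picard term contributes $\mathcal O(q_{\mathsf{ann}}^{2K-2})R_{\mathsf{ann}}$ uniformly over blocks.

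\textbf{Main obstacle.} The hardest step is the annealed Picard bound, in particular the uniform-in-$m$ base case. There the random block start and the time-varying drifts have to be controlled without relying on the initial Fisher information. I would first try to reuse the $G$ constant from Appendix~\ref{app:C} verbatim, checking that only $\delta_{\mathsf{ann}}$ changes in that argument.
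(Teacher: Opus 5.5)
Your proof is correct, and its skeleton matches the paper's. It uses the same piecewise interpolation with frozen drift and the same inequality $\partial_t\KL(\mu_t\|\pi)\le-\tfrac34\FI(\mu_t\|\pi)+\E\|\cdot\|^2$. It also uses the same annealed Picard lemma (Lemma~\ref{lemma: error decay multi-block annealing}); its base case simply reuses $G^2$ with $\delta$ replaced by $\delta_{\mathsf{ann}}$, so the step you flag as the main obstacle goes through verbatim. Finally, the same time-averaging yields $\bar\delta_{\mathsf{ann}}^2$.

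The genuine difference is the local interpolation estimate. Write $\Tsfit_n=\Tsfit^{\alpha_n}_{\eta,\sigma_n}$.

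\textbf{The paper's route.} It splits $\nabla(L+V)(\Xbm_t)-\Tsfit_n(\xbm_n^{(K-1)})$ through $\nabla(L+V)(\xbm_n^{(K-1)})$, using only the $(\beta_L+\beta_V)$-Lipschitzness of the true posterior gradient. It applies Lemma~\ref{lemma: technical lemma drift} at the frozen point and then transfers the gradient norm back to $\Xbm_t$. This creates a self-referential bound on $\E\|\Xbm_t-\xbm_n^{(K-1)}\|^2$, which is closed by the factor $\rho$.

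\textbf{Your route.} You move the drift's argument along $\xbm_n^{(K-1)}\to\xbm_n^{(K)}\to\xbm_t$ using the $\Lambda_{\eta,\mathsf{ann}}$-Lipschitzness of the proximal drift. You then apply Lemma~\ref{lemma: technical lemma drift} directly at $\xbm_t$, so the splitting term lands on $\FI(\mu_t\|\pi)$ without any transfer. The circularity resurfaces in your second piece through $\E\|\Tsfit_n(\xbm_n^{(K-1)})\|^2$. You should write this step out rather than summarize it as ``$\mathcal O(\gamma^2)$ times a gradient norm''. It closes pathwise:
\[
(1-\gamma\Lambda_{\eta,\mathsf{ann}})\,\|\Tsfit_n(\xbm_n^{(K-1)})\|\le\|\Tsfit_n(\xbm_t)\|+\Lambda_{\eta,\mathsf{ann}}\bigl(\|\xbm_n^{(K)}-\xbm_n^{(K-1)}\|+\sqrt2\,\|\Bbm_t-\Bbm_{n\gamma}\|\bigr),
\]
with $\gamma\Lambda_{\eta,\mathsf{ann}}\le q_{\mathsf{ann}}<1$ already in force. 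Lemma~\ref{lemma: technical lemma FBS} at $\xbm_t$ then finishes it.

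\textbf{What each buys.} Your route is more direct and needs no separate $\rho$-type step-size condition. However, your discretization, $\mathcal O(\gamma^2)$ and Picard constants scale with $\Lambda_{\eta,\mathsf{ann}}^2$, which contains $\alpha_0L_\Ssfit$ and can be large under aggressive annealing. The paper's local estimate involves only $\beta_L+\beta_V$, and the annealing weights enter solely through $\delta_n+(\alpha_n-1)R_\Ssfit$ and $q_{\mathsf{ann}}$. Within the $\mathcal O(\cdot)$ statement both are adequate.

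\textbf{Two smaller corrections.} First, the Fisher-information conversion is true, but your justification is not. Transporting $\E_\pi\|\nabla\log\pi\|^2\le d\beta$ to $\E_\mu$ via smoothness would need control of $W_2(\mu,\pi)$, i.e.\ a functional inequality the theorem deliberately avoids. Integrate by parts against $\mu$ instead: $-\E_\mu\langle\nabla(L+V),\nabla\log\mu\rangle=\E_\mu\Delta(L+V)\le d(\beta_L+\beta_V)$. Young's inequality on the remaining term $\E_\mu\langle\nabla(L+V),\nabla\log(\mu/\pi)\rangle$ then gives $\E_\mu\|\nabla(L+V)\|^2\le\FI(\mu\|\pi)+2d(\beta_L+\beta_V)$. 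This is Lemma~\ref{lemma: Lipschitz and FI}, with coefficient $1$ rather than $2$ on $\FI$.

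Second, to get exactly $\varepsilon_{m,1}\le R_{\mathsf{ann}}$ in the base case, bound $\E\|\Tsfit^{\alpha_{mN+j}}_{\eta,\sigma_{mN+j}}(\xbm_{mN})\|^2$ with Lemma~\ref{lemma: technical lemma FBS}. Routing it through Lemma~\ref{lemma: technical lemma drift}, as you propose, gives a constant up to $4/3$ times larger, which is harmless under $\mathcal O(\cdot)$.
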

\begin{proof}
See Appendix \ref{app:D} for the detailed proof and full expression of the terms.
\end{proof}
Theorem \ref{thm: multi-block annealing simple} extends the multi-block guarantee in Theorem \ref{thm: main no LSI simple} to the weighted annealing setting.
Compared with multi-block PiX-MC, the score-mismatch term is replaced by the time-averaged annealed-score discrepancy $\bar{\delta}_{\mathsf{ann}}^{2}$, while the finite Picard-refinement error is governed by the uniform contraction factor $q_{\mathsf{ann}}$.
In particular,
\begin{equation}
\bar{\delta}_{\mathsf{ann}}^2 \leq \frac{2}{MN}\sum\limits_{n=0}^{MN-1} \delta_n^2+\frac{2R_\Ssfit^2}{MN} \sum\limits_{n=0}^{MN-1}(\alpha_n-1)^2 .
\end{equation}
The first term represents the approximation error of score networks, whereas the second term is induced by the score weights.
If $\alpha_n=1$ after a finite annealing stage, which is true in our experiments, the latter contribution decreases as $MN$ increases.
Meanwhile, with $q_{\mathsf{ann}}<1$, the finite Picard-refinement error decays geometrically with $K$.
More broadly, annealing and Picard refinement act on complementary aspects of the computation: annealing improves the sequential trajectory itself, reducing the effective sequential effort required to attain a given reconstruction quality, whereas Picard refinement reduces the wall-clock cost of evaluating that trajectory through time parallelism.
Their gains can therefore compound in practice.
As demonstrated in \S\ref{sec:experiments}, {A\proposed} yields additional wall-clock acceleration beyond trajectory parallelization.

\section{Numerical Validation of the Theory}\label{sec:validations}

We consider a compressed sensing problem whose target posterior is available in closed form. 
This setting allows us to directly compare the empirical distributions generated by {\proposed} algorithms with the true posterior, providing direct empirical support for the convergence results established in our theorems.
As the single-block implementation of {\proposed} is computationally demanding, we implement all variants of {\proposed} using their multi-block schemes, namely Algorithms~\ref{alg: pix multi block}--\ref{alg: pix annealing multi-block}. For brevity, we omit the qualifier "multi-block" hereafter.
All {\proposed} algorithms use eight NVIDIA RTX PRO 6000 Blackwell GPUs.

\paragraph{Experimental Setup.}
We construct a Gaussian image prior
\begin{equation}p(\xbm)=\mathcal{N}\bigl(\xbm;\mu_{\textsf{prior}},\Sigma_{\textsf{prior}}\bigr),
\end{equation}
where $\mu_{\textsf{prior}}$ and $\Sigma_{\textsf{prior}}$ are estimated from 70,000 FFHQ images \citep{karras2019style}, after converting the images to grayscale and resizing them to $32\times32$.
A ground-truth image is drawn according to
$\xbm^\ast\sim\mathcal{N}\bigl(\mu_{\textsf{prior}},\Sigma_{\textsf{prior}}\bigr)$.
We then generate the linear measurements $\ybm = \Abm \xbm^\ast + \ebm$, where $\Abm\in\R^{100\times1024}$ is a Gaussian random matrix whose rows are normalized to unit norm, and $\ebm\sim\mathcal{N}\bigl(\zerobm,\sigma_y^2\Ibm\bigr)$, with $\sigma_y=0.001$.
Under this Gaussian prior and likelihood, the posterior distribution is also Gaussian,
\begin{equation}
    \pi(\xbm|\ybm)=\mathcal{N}\bigl(\xbm;\mu_{\textsf{post}},\Sigma_{\textsf{post}}\bigr),
\end{equation}
where
\begin{equation}\Sigma_{\textsf{post}}=\left(\Sigma_{\textsf{prior}}^{-1}+\frac{1}{\sigma_y^2}\Abm^\top\Abm\right)^{-1}, \qquad \mu_{\textsf{post}}=\Sigma_{\textsf{post}}\left(\Sigma_{\textsf{prior}}^{-1}\mu_{\textsf{prior}}+\frac{1}{\sigma_y^2}\Abm^\top\ybm\right).
\end{equation}
We train a score network following the EDM framework  \citep{karras2022elucidating} by using samples from the empirical Gaussian prior. 
We evaluate three instantiations of {\proposed}: PiX-MC and APiX-MC equipped with the learned score network, and PiX-MC with the analytical score $\nabla V(\xbm)=\Sigma_{\textsf{prior}}^{-1}\bigl(\xbm-\mu_{\textsf{prior}}\bigr)$.
Comparing {\proposed} and {\aproposed} with the learned score network additionally illustrates the effect of annealing on the evolution of the sampled distribution.

\begin{figure}[t!]
\centering
\includegraphics[width=0.95\textwidth]{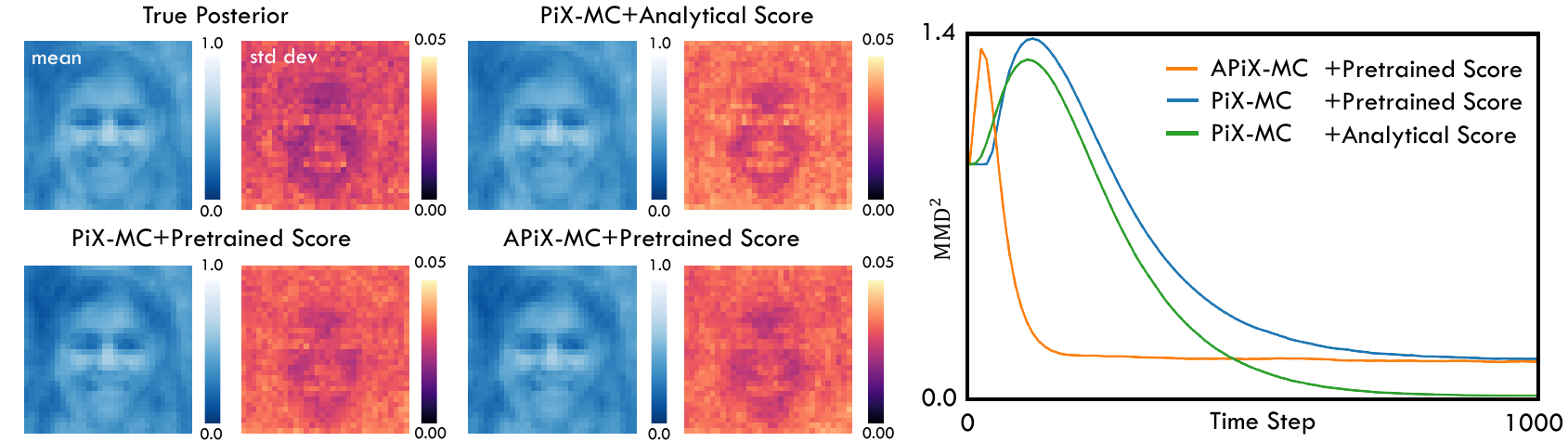}
\caption{
Distribution-level comparison of PiX-MC and APiX-MC against the closed-form Gaussian posterior. 
Each method runs 150 independent chains, with samples recorded every 10 time steps. 
The left panels show the posterior mean and pixel-wise standard deviation at the final time step, with the exact posterior shown for reference. 
The right panels plot the squared MMD to the exact posterior over the sampling trajectory. 
Note how the analytical-score variant achieves the lowest $\mathsf{MMD}^2$, while APiX-MC reduces $\mathsf{MMD}^2$ more rapidly than PiX-MC with the learned score.
}
\label{fig:face}
\end{figure}

\paragraph{Distributional Evaluation.}
For each algorithm, we run 150 independent chains and record the samples every 10 time steps. 
We also draw 150 independent samples directly from the closed-form posterior.
At the final time step, we compute the empirical posterior mean and pixelwise standard deviation from the samples.
To assess the full distribution beyond these first two moments, we compute the squared maximum mean discrepancy (MMD) \citep{jayasumana2024rethinking} at every recorded time step.
Given samples $\widetilde{\Xbf}=\{\widetilde{\xbm}_1,\ldots,\widetilde{\xbm}_m\}$ and exact posterior samples $\Xbf=\{\xbm_1,\ldots,\xbm_n\}$, we use the empirical estimator
\begin{equation}
\mathsf{MMD}^2(\widetilde{\Xbf},\Xbf)=\frac{1}{m^2}\sum_{i=1}^{m}\sum_{j=1}^{m}k(\widetilde{\xbm}_i,\widetilde{\xbm}_j)+\frac{1}{n^2}\sum_{i=1}^{n}\sum_{j=1}^{n}k(\xbm_i,\xbm_j)-\frac{2}{mn}\sum_{i=1}^{m}\sum_{j=1}^{n}k(\widetilde{\xbm}_i,\xbm_j).
\end{equation}
We use the Gaussian kernel $k(\xbm_1,\xbm_2)=\exp\left(-\frac{\|\xbm_1-\xbm_2\|_2^2}{2\sigma_{\textsf{MMD}}^2}\right)$ with $\sigma_{\textsf{MMD}}=3.0.$

\paragraph{Distribution-Level Results}
Figure~\ref{fig:face} reports the empirical posterior mean and pixelwise standard deviation at the final time step, together with the evolution of $\mathsf{MMD}^2$ over the sampling trajectory. All three {\proposed} instantiations accurately recover the posterior mean and uncertainty maps. As expected, {\proposed} with the analytical score most closely matches the true posterior, owing to the absence of score approximation error.
The $\mathsf{MMD}^2$ curves show that the empirical distributions produced by all three methods move toward the target posterior, providing empirical support for the finite-horizon stationarity bounds established in Theorems~\ref{thm: main no LSI simple}--\ref{thm: multi-block annealing simple}. 
At the final time step, {\proposed} with the analytical score achieves the lowest $\mathsf{MMD}^2$ of $0.0036$, while {\aproposed} and {\proposed} with the learned score attain $0.1361$ and $0.1480$, respectively. The remaining gap between the learned-score variants and the analytical variant is consistent with the score approximation error analyzed in Section~\ref{sec:method}.
Compared with {\proposed}, {\aproposed} reduces $\mathsf{MMD}^2$ more rapidly and reaches a lower final value, corroborating the discussion in Section~\ref{sec:3.4} that annealing improves the underlying sampling trajectory while Picard refinement enables its parallel evaluation.

\section{Real-World Imaging Experiments}\label{sec:experiments}
We now present our experimental results, which are organized around three goals.
First, we verify the behavior of Picard refinement on MRI and Rician noise removal, covering both linear and nonlinear inverse problem setups.
Second, we evaluate practical acceleration on large-scale $1024 \times 1024$ image deblurring, where we vary the GPU budget and the block size to examine how the multi-block implementation affects wall-clock runtime. 
Third, we apply {\proposed} to a large-scale $512 \times 512 \times 80$ sparse-view CT problem to demonstrate its scalability on high-dimensional imaging tasks.
All hyper-parameters are tuned to maximize the \textit{peak signal-to-noise ratio (PSNR)}. Additional implementation details and hyper-parameter settings are provided in Appendix \ref{app:ex}.

\paragraph{Experimental Setup.}
All experiments are conducted on a single compute node equipped with eight NVIDIA RTX PRO 6000 Blackwell GPUs and two AMD EPYC 9334 32-Core processors. All Picard-based methods are implemented using the multi-block scheme and executed on all eight GPUs, whereas sequential methods are executed on a single GPU.
We use DPS \citep{chung2022diffusion} and DAPS \citep{zhang2025improving} as external baselines. For the deblurring and 3D CT experiments, we additionally include sequential and ablation variants to isolate the effects of Picard time parallelization, the proximal-likelihood drift, and the annealing strategy.
We use the following nomenclature throughout this section. L-MC denotes the standard sequential Langevin sampler with score network $\Ssfit_\theta$ (Eq.~\eqref{Eq:LMC}), and X-MC denotes its proximal-likelihood counterpart (Eq.~\eqref{eq: XMC}). PiL-MC is obtained by applying Picard time parallelization to L-MC by replacing the drift $\Tsfit_\eta$ in Algorithm~\ref{alg: pix multi block} with $\Ssfit_\theta+\nabla L$. The prefix ``A'' denotes the corresponding annealed variants, yielding AL-MC, AX-MC, and APiL-MC.
Unless otherwise specified, all methods use the same score network for each imaging task. Task-specific score network choices and multi-block configurations are described in the corresponding subsections, while complete hyperparameter settings are provided in Appendix~\ref{app:parameters}.

\subsection{Behavior of Picard Refinement}

We first present results on linear MRI reconstruction and nonlinear Rician denoising. These experiments empirically verify the geometric decay of the Picard error predicted by Lemmas~\ref{lemma: error decay no block maintext} and~\ref{lemma: error decay no LSI maintext}, demonstrating that the predicted convergence behavior is agnostic to the linearity of the inverse problem.

\begin{figure}[t!]
\centering
\includegraphics[width=\textwidth]{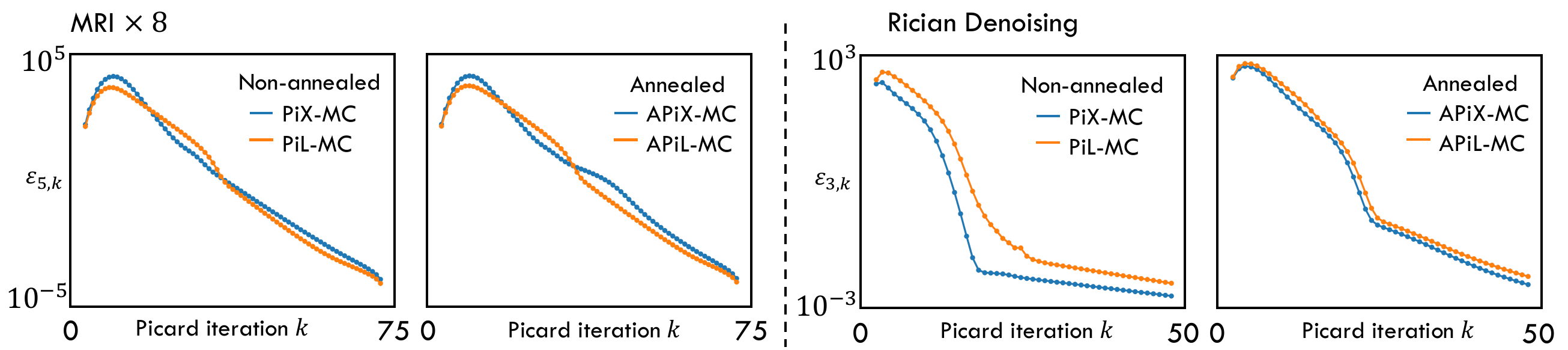}
\caption{
Visualization of the in-block Picard error $\varepsilon_{m,k}$ for {\proposed} methods on accelerated MRI reconstruction and Rician denoising. Langevin-based methods are included for reference. The error is plotted against the Picard iteration $k$ for representative blocks, using the block indexed by $m=5$ for MRI reconstruction and the block indexed by $m=3$ for Rician denoising. After a short initial transient, all {\proposed} methods exhibit geometric decay, consistent with Lemma~\ref{lemma: error decay no LSI maintext}.
}
\label{fig:decay curves}
\end{figure}

\paragraph{Implementation Details.}
For MRI, we use the pretrained score network provided by \citep{sun2024provable}\footnote{The pretrained score network is available at \url{https://github.com/sunyumark/PnP-MonteCarlo}}. 
We use ten brain images with size $320\times 320$ from the FastMRI dataset \citep{zbontar2018fastmri} as the test set. 
We use a radial subsampling mask corresponding to $8\times$ acceleration. 
We additionally add additive white Gaussian noise (AWGN) with standard deviation $\sigma_{\ybm}=0.05$. 
For Rician denoising, we use the pretrained $256\times 256$ unconditioned diffusion model\footnote{The pretrained diffusion model is available at \url{https://github.com/openai/guided-diffusion}}.
The tested dataset (CBSD10) is composed of a subset of 10 images of the CBSD68 dataset \citep{martin2001database}. 
We add Rician noise with $\sigma_{\ybm}=0.1$. 
We set $M=10, N=75, K=100$ for MRI, and
$M=5, N=100, K=50$ for Rician denoising. 
We refer to Appendix \ref{app:E2 likelihood} for more technical details on these two tasks.

\paragraph{Geometric Decay of the Picard Error.}
Lemma~\ref{lemma: error decay no LSI maintext} establishes geometric contraction of the in-block Picard error $\varepsilon_{m,k}$ defined in \eqref{eq:picarderror1}. Figure~\ref{fig:decay curves} reports the empirical evolution of $\varepsilon_{m,k}$ on a logarithmic scale for the MRI reconstruction and Rician denoising tasks. We report the block indexed by $m=5$ for MRI reconstruction and the block indexed by $m=3$ for Rician denoising, with $\varepsilon_{m,k}$ averaged over the test dataset.
After a short initial transient, the Picard error decreases approximately linearly on the logarithmic scale for all methods, including both proximal and non-proximal variants as well as their annealed counterparts. This behavior indicates rapid convergence of the Picard iteration in practice and is empirically consistent with the geometric contraction established in Lemma~\ref{lemma: error decay no LSI maintext}.

\begin{figure}[t!]
\centering
\includegraphics[width=\textwidth]{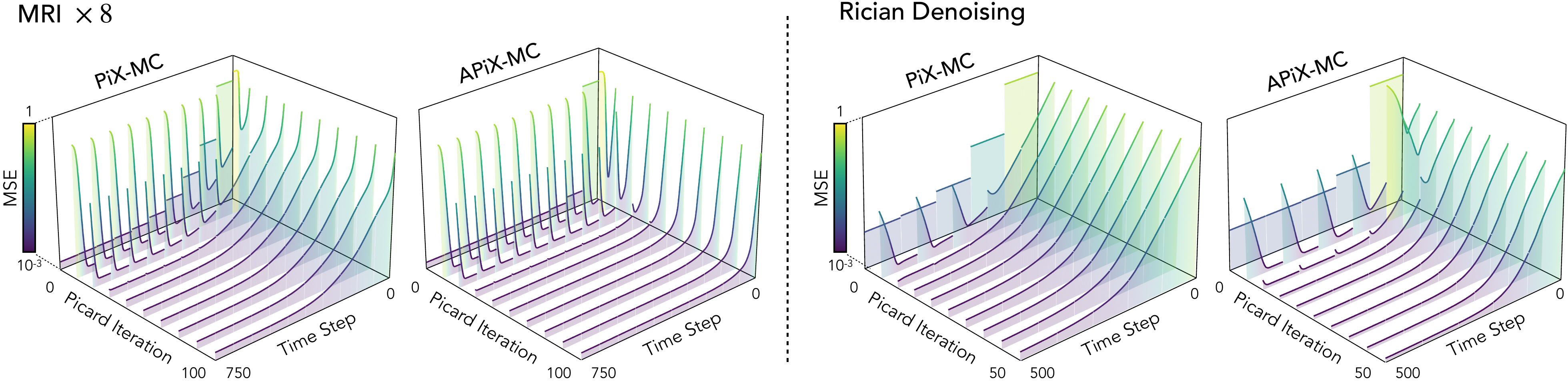}
\caption{
MSE evolution of {\proposed} and {\aproposed} over time steps and Picard iterations for accelerated MRI reconstruction (\textit{left}) and Rician denoising (\textit{right}).
At $k=0$, all time nodes within each block share the same initialization, resulting in a piecewise-constant MSE profile. As the Picard iteration proceeds, the block-wise trajectories are progressively refined into smooth curves.
}
\label{fig:mse_surface}
\end{figure}

\begin{table}[t!]
\centering
\scriptsize
\setlength{\tabcolsep}{3pt}
\renewcommand{\arraystretch}{1.15}
\begin{minipage}{0.49\linewidth}
\centering
\textbf{MRI (Accel.$=8\times$ with $\sigma_y=0.05$)}
\begin{tabular}{l|cc}
\hline\hline
\textbf{Method} & \textbf{PSNR $\uparrow$} & \textbf{SSIM $\uparrow$} \\
\hline
TV                         & 27.64 & 0.7678 \\
DAPS             & 29.66 & 0.7598 \\
\hline
PiL-MC               & 30.21 & 0.8201 \\
APiL-MC     & 31.30 & 0.8353 \\
PiX-MC              & 31.41 & 0.8439 \\
APiX-MC    & \textbf{31.69} & \textbf{0.8472} \\
\hline\hline
\end{tabular}
\end{minipage}
\hfill
\begin{minipage}{0.49\linewidth}
\centering
\textbf{Rician Denoising ($\sigma_y=0.1$)}
\begin{tabular}{l|ccc}
\hline\hline
\textbf{Method} & \textbf{PSNR $\uparrow$} & \textbf{LPIPS $\downarrow$} & \textbf{SSIM $\uparrow$} \\
\hline
 DPS              & 26.02 & 0.1429 & 0.7865 \\
 DAPS             & 27.98 & 0.1723 & 0.7533 \\
\hline
PiL-MC             	 & 30.51 & 0.1139 & 0.8657 \\
APiL-MC      & \textbf{30.62} & 0.1025 & \textbf{0.8710} \\
PiX-MC               & 29.75 & 0.1632 & 0.8363 \\
APiX-MC     & {30.43} & \textbf{0.0826} & {0.8677} \\
\hline\hline
\end{tabular}
\end{minipage}
\caption{Average quantitative results of PiX-MC and its variants for Rician noise removal ($\sigma_{\ybm}=0.1$) on the CBSD10 dataset and MRI reconstruction ($8\times,\sigma_{\ybm}=0.05$) on 10 FastMRI brain images.
The best values are shown in \textbf{bold}.
}
\label{tab: rician and mri}
\end{table}

\paragraph{Trajectory Evolution.}
We further visualize the evolution of the trajectories during the Picard iteration.
Given an image $\xbm$ and the ground-truth image $\xbm_{\mathsf{GT}}$, we compute the mean-squared error (MSE) $\text{MSE}:=\frac{1}{d}\left\|\xbm-\xbm_{\mathsf{GT}}\right\|_2^2$, and Figure~\ref{fig:mse_surface} visualizes the resulting MSE curves.
At the initial Picard iteration ($k=0$), all time nodes within each block are initialized to the same point, 
yielding piecewise-constant MSE curves. 
As the Picard iteration proceeds, the trajectories are progressively refined, and the piecewise-constant curves evolve into smooth trajectories across successive blocks. 
This evolution provides an intuitive illustration of the Picard refinement mechanism and is consistent with the convergence guarantees established in Lemma~\ref{lemma: error decay no LSI maintext} and Theorems~\ref{thm: main no LSI simple}--\ref{thm: multi-block annealing simple}.

\begin{figure}[t!]
\centering
\includegraphics[width=0.9\textwidth]{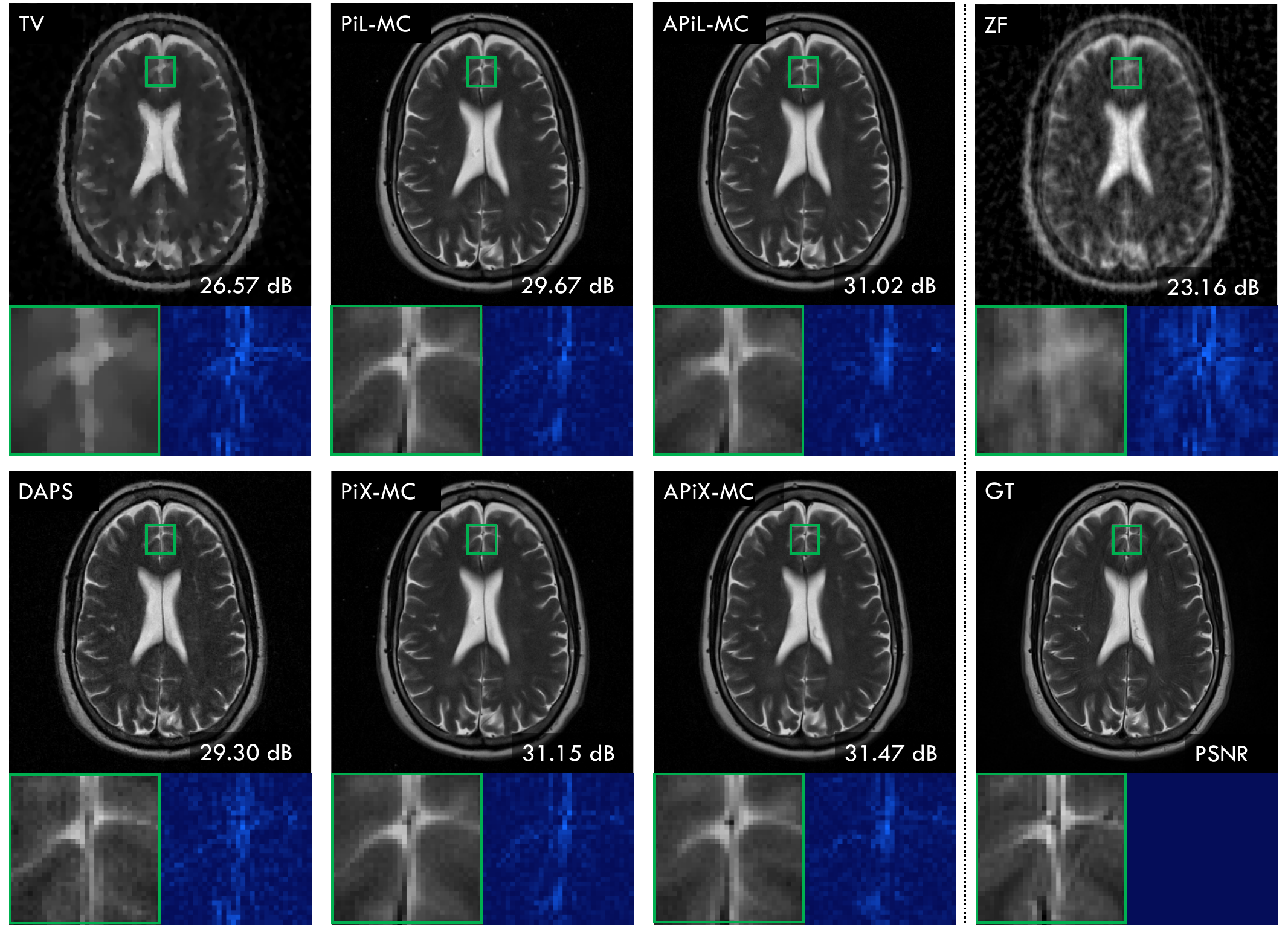}
\caption{
Visual comparison for $8\times$ accelerated MRI reconstruction with measurement noise level $\sigma_{\ybm}=0.05$. 
Green boxes highlight the zoomed-in regions, and blue panels show the corresponding residual maps. 
Note the sharper local structures and fewer residual artifacts recovered by PiX-MC and APiX-MC compared with the baseline methods.
}
\label{fig:mri_visual}
\end{figure}

\begin{figure}[t!]
\centering
\includegraphics[width=0.95\textwidth]{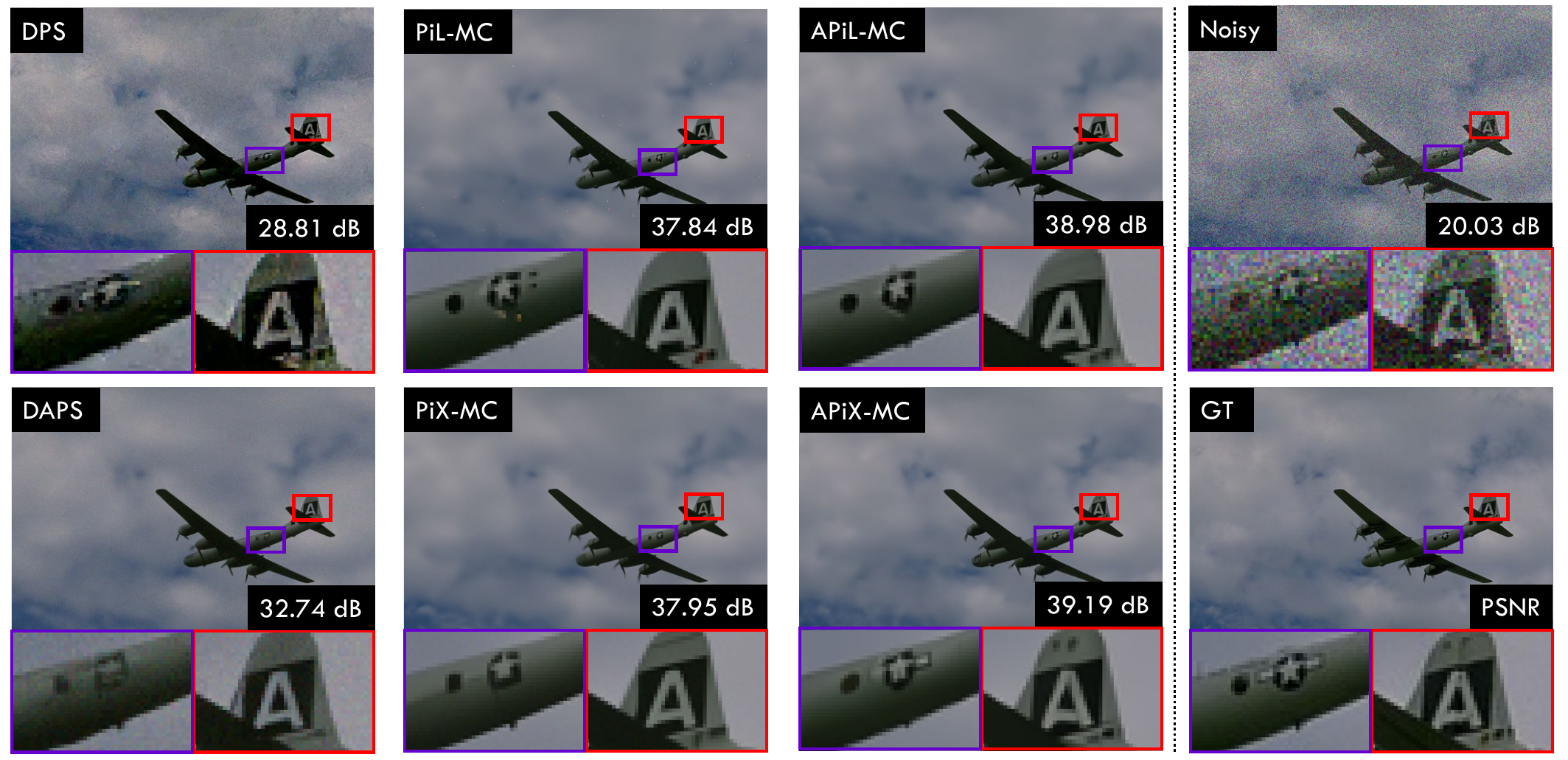}
\caption{
Visual comparison for Rician denoising with measurement noise level $\sigma_{\ybm}=0.1$. 
Purple and red boxes highlight the zoomed-in regions. 
Note the stronger Rician noise suppression and better preservation of fine markings achieved by PiX-MC and APiX-MC compared with the baseline methods.
}
\label{fig:rician_visual}
\end{figure}

\paragraph{Quantitative and Qualitative Comparison.}
Now we evaluate the final reconstruction quality of the proposed methods. Table~\ref{tab: rician and mri} reports the average quantitative results over ten test images for each task. For Rician denoising, APiL-MC achieves the highest PSNR and SSIM, while {\aproposed} attains the lowest LPIPS. For accelerated MRI, {\aproposed} achieves the best PSNR and SSIM among all compared methods. Overall, the proposed methods are competitive with diffusion-based baselines, and annealing consistently improves reconstruction quality over the corresponding non-annealed variants.

Figures~\ref{fig:mri_visual}--\ref{fig:rician_visual} provide visual comparisons. For accelerated MRI, TV exhibits staircase artifacts, while DAPS leaves noticeable residual noise. Both PiL-MC and APiL-MC remove most blur and noise but lose fine image details. In contrast, {\proposed} and {\aproposed} recover sharper edges with fewer residual artifacts, as highlighted in the zoomed-in regions of Figure~\ref{fig:mri_visual}.
A similar trend is observed for Rician denoising. DPS exhibits noticeable color shifts, and DAPS leaves residual noise in the regions highlighted in purple. While PiL-MC and APiL-MC effectively suppress the Rician noise, {\proposed} and {\aproposed} preserve finer structures. In particular, {\aproposed} successfully reconstructs the letter ``A'' together with the surrounding dot-like markings (red box), whereas these details are blurred into a thin horizontal line by {\proposed} and are absent from the remaining reconstructions. 
These visual observations are consistent with the quantitative results in Table~\ref{tab: rician and mri}.

\subsection{Practical Acceleration on Large-Scale Deblurring}
In this section, we investigate whether Picard time parallelization can accelerate posterior sampling without sacrificing reconstruction quality, and how this acceleration scales with the GPU budget and the number of time nodes per block.

\paragraph{Implementation Details.}
For the score network, we use the same diffusion model as in the Rician noise removal experiment. 
We use ten natural images from the DIV2K validation set from \citep{Agustsson_2017_CVPR_Workshops}\footnote{The DIV2K dataset is available at \url{https://data.vision.ee.ethz.ch/cvl/DIV2K/}}. We center-crop each image to size $1024\times1024$. 
We apply a camera-shake blur from \citep{levin2009understanding} to the test images, and add AWGN noise with standard deviation $\sigma_{\ybm}=0.05$. 
For the proximal methods, the proximal operator on likelihood can be solved efficiently in the Fourier domain according to \citep{pan2016l_0}.
For each block of the Picard-based methods, we adopt an adaptive stopping criterion: the Picard refinement is terminated when either the maximum number of Picard iterations $K=20$ is reached, or the normalized Picard error satisfies $\varepsilon_{m,k}/d<3\times10^{-4}$.
More implementation details are provided in Appendix \ref{app:ex}.

\paragraph{Acceleration under Different Parallelization Configurations.}
We first investigate the practical acceleration under different numbers of GPUs and time nodes $N$ within each block.
Table \ref{tab:large_scale_deblurring_acceleration} reports the runtime for reaching an average PSNR of $26$ dB and the corresponding speedup over each sequential counterpart under different configurations.
It can be seen that increasing the number of GPUs improves the speedup across all tested methods and block configurations.
In particular, using eight GPUs with $N=8$ yields speedups between $2.87\times$ and $2.99\times$ across the four Picard-based methods, as all eight time nodes can be evaluated concurrently.
Moderate oversubscription with $N=16$ remains effective, but the benefit decreases as $N$ increasingly exceeds the number of available GPUs.
For example, with four GPUs and $N=32$, all four Picard-based methods are slower than their sequential counterparts.
These results demonstrate the practical trade-off between the block length and the available computational resources.

\begin{figure}[t!]
\centering
\includegraphics[width=0.9\textwidth]{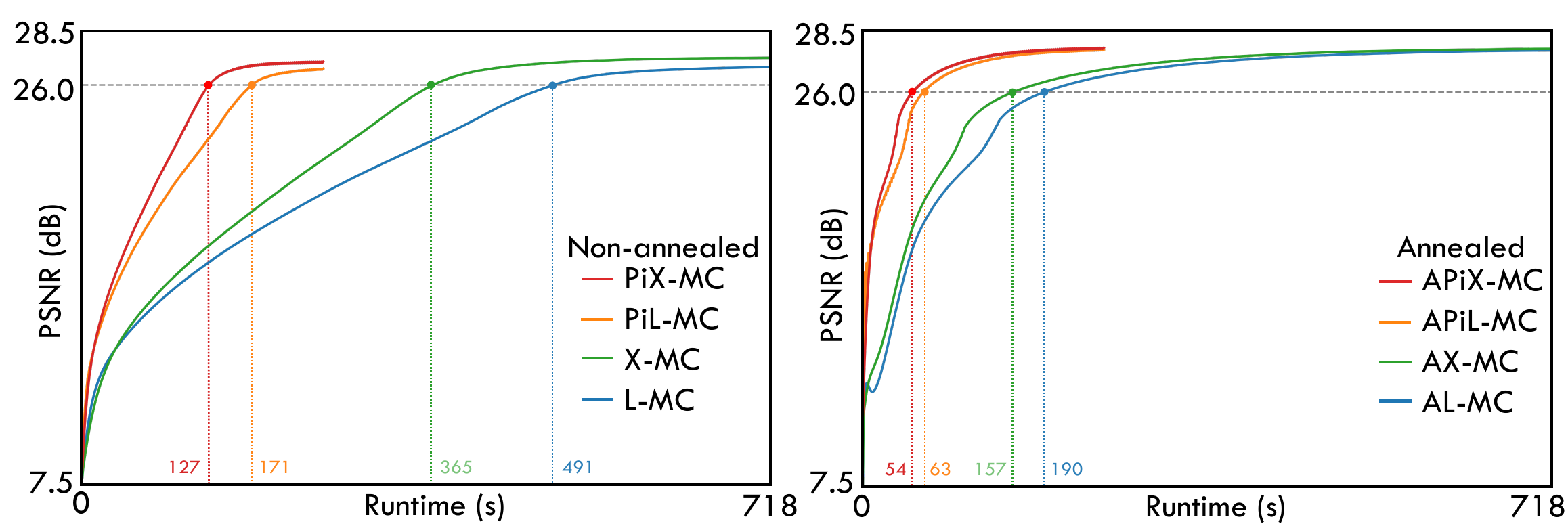}
\caption{
Average PSNR against wall-clock time for large-scale deblurring.
The left and right panels show the non-annealed and annealed methods, respectively.
The horizontal dashed line marks the reference PSNR of $26$ dB, and the vertical dotted lines indicate the corresponding threshold-crossing time in seconds.
Note how {\proposed} and {\aproposed} substantially accelerate convergence without sacrificing reconstruction quality.
}
\label{fig: deblur curves}
\end{figure}

\begin{table}[t!]
\centering
\scriptsize
\begin{tabular}{cccccccc}
\toprule
Methods & \multicolumn{3}{c}{PiL-MC vs. L-MC (491s)} & & \multicolumn{3}{c}{APiL-MC vs. AL-MC (190s)} \\
\cmidrule(lr){2-4}\cmidrule(lr){6-8}
 & \underline{$N=8$} & \underline{$N=16$} & \underline{$N=32$} &  & \underline{$N=8$} & \underline{$N=16$} & \underline{$N=32$}\\
4 GPUs & 264s / $1.86\times$ & 328s / $1.50\times$ & 550s / $0.89\times$ & & 99s / $1.92\times$ & 139s /$1.37\times$ & 250s / $0.76\times$\\
8 GPUs & 171s / $2.87\times$ & 175s / $2.81\times$ & 295s / $1.66\times$ &  & 63s / $2.99\times$ & 81s / $2.35\times$ & 151s / $1.26\times$\\
\midrule
Methods & \multicolumn{3}{c}{PiX-MC vs. X-MC (365s)} & & \multicolumn{3}{c}{APiX-MC vs. AX-MC (157s)} \\
\cmidrule(lr){2-4}\cmidrule(lr){6-8}
 & \underline{$N=8$} & \underline{$N=16$} & \underline{$N=32$} &  & \underline{$N=8$} & \underline{$N=16$} & \underline{$N=32$}\\
 4 GPUs & 195s / $1.87\times$ & 230s / $1.59\times$ & 440s / $0.83\times$ & & 84s / $1.87\times$ & 120s / $1.31\times$ & 194s / $0.81\times$  \\
8 GPUs & 127s / $2.87\times$ & 138s / $2.64\times$ & 188s / $1.94\times$  & & 54s / $2.91\times$ & 70s / $2.24\times$ & 94s / $1.67\times$\\
\bottomrule
\end{tabular}
\caption{Runtime and speedup required to reach an average PSNR of $26$ dB under different GPU budgets and numbers of time nodes $N$ per block. 
Each speedup is computed relative to the corresponding sequential method shown in the column header.
}
\label{tab:large_scale_deblurring_acceleration}
\end{table}

\begin{figure}[t!]
\centering
\includegraphics[width=\textwidth]{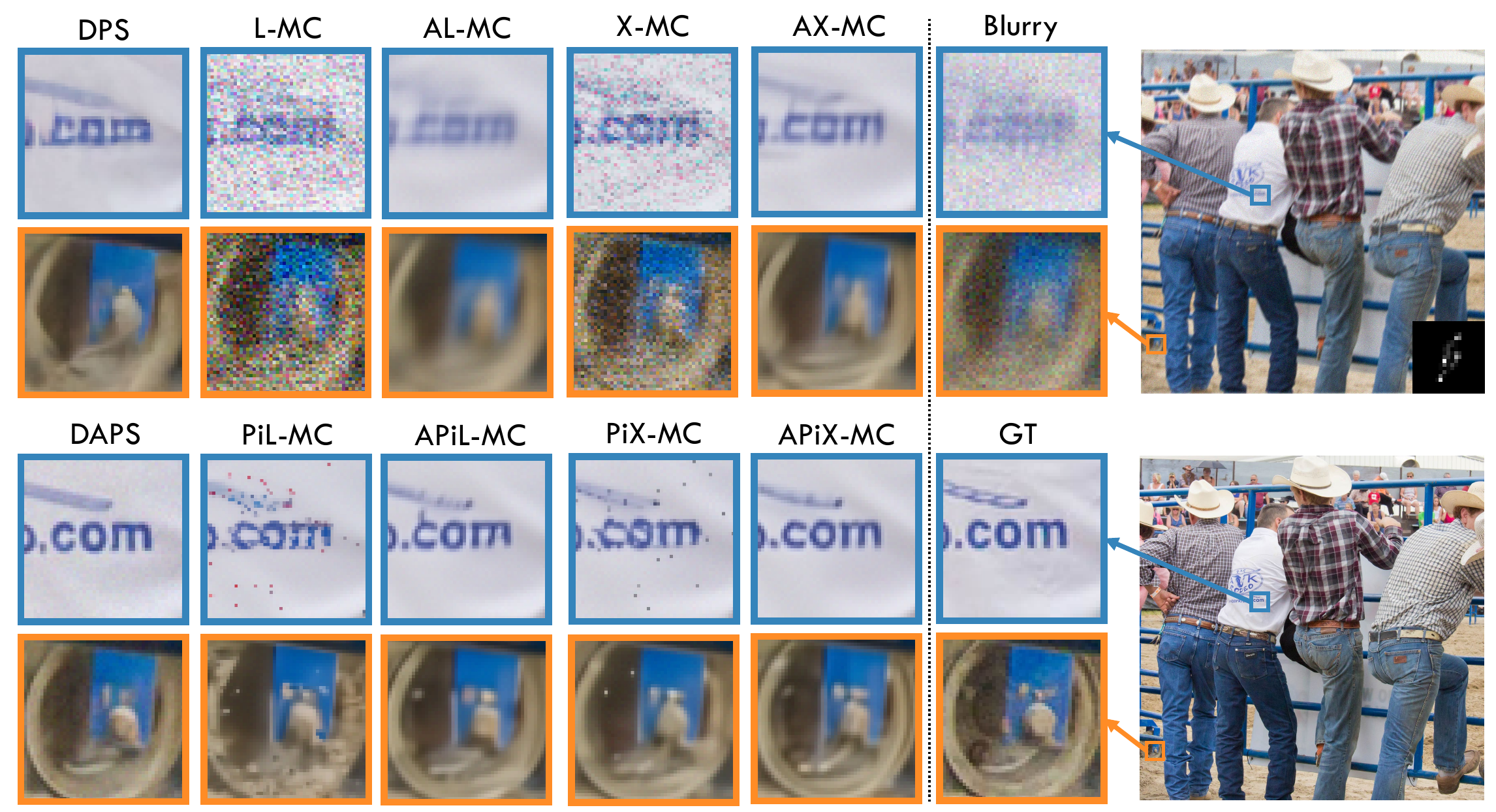}
\caption{Visual comparison of the reconstructions by {\proposed} and baselines for the large-scale deblurring task, where the underlying image is cropped into a size of $1024\times 1024$ pixels. 
DPS (334 seconds) and DAPS (741 seconds) methods are run until convergence. 
The other methods are run for 200 seconds. 
Within this limited runtime budget, the proposed methods better preserve local structures with fewer noise residuals. 
}
\label{fig:deblur_visual}
\end{figure}

Figure \ref{fig: deblur curves} plots the average PSNR against runtime in seconds under the configuration of eight GPUs and $N=8$.
As shown, Picard parallelization consistently accelerates convergence. 
In particular, {A\proposed} reaches an average PSNR of $26$ dB in only $54$ seconds, compared with $157$ seconds for AX-MC, $365$ seconds for X-MC, and $491$ seconds for L-MC.
Beyond the pairwise speedups, this comparison demonstrates that Picard time parallelization can compound with mechanisms that accelerate the underlying sequential trajectory.
For example, the proximal update and annealing together reduce the runtime from $491$ seconds for L-MC to $157$ seconds for AX-MC, corresponding to a $3.13\times$ sequential-side speedup.
Picard parallelization further reduces this runtime to $54$ seconds, providing an additional 
$2.91\times$ parallelization-side speedup.
Together, they yield an overall $9\times$ speedup over L-MC.
These results empirically support the discussion at the end of \S\ref{sec:3.4} that Picard parallelization provides a complementary direction that can be combined with sequential strategies.
This is particularly valuable for large-scale imaging tasks, where sequential trajectory evaluation is computationally expensive.

\paragraph{Quantitative and Qualitative Comparison.}
We next show that PiX-MC methods accelerate posterior sampling without compromising reconstruction quality. 
Table~\ref{tab:large_scale_deblurring_quality} reports the average final PSNR, SSIM, and LPIPS over the test set. The L-MC family is also included as references.
PiX-MC and APiX-MC closely match the reconstruction quality of their corresponding sequential counterparts, with PSNR differences of at most $0.01$ dB, while remaining competitive with the external baselines DPS and DAPS.
\begin{wraptable}[14]{r}{0.5\textwidth}
\centering
\scriptsize
\setlength{\tabcolsep}{4.5pt}
\renewcommand{\arraystretch}{1.10}
\begin{tabular}{lccc}
\toprule
\textbf{Method}
& \textbf{PSNR (dB) $\uparrow$}
& \textbf{LPIPS $\downarrow$}
& \textbf{SSIM $\uparrow$} \\
\midrule
DPS     & 26.43 & 0.2498 & 0.7142 \\
DAPS    & \textbf{28.17} & \textbf{0.1632} & 0.7444 \\
\midrule
L-MC     & 26.82 & 0.2605 & 0.7273 \\
AL-MC    & 27.92 & 0.2042 & 0.7737 \\
X-MC    & 27.24 & 0.2686 & 0.7342 \\
AX-MC   & 27.98 & 0.2138 & 0.7719 \\
\midrule
PiL-MC   & 26.82 & 0.2604 & 0.7268 \\
APiL-MC  & 27.92 & 0.2048 & \textbf{0.7738} \\
PiX-MC  & 27.23 & 0.2726 & 0.7343 \\
APiX-MC & 27.98 & 0.2120 & 0.7718 \\
\bottomrule
\end{tabular}
\caption{
Average quantitative results for large-scale deblurring on ten
$1024\times1024$ test images from DIV2K validatation set.
}
\label{tab:large_scale_deblurring_quality}
\vspace{-2pt}
\end{wraptable}

Figure~\ref{fig:deblur_visual} compares visual results under a fixed runtime budget of $200$ seconds. While DPS and DAPS require $334$ and $741$ seconds, respectively, to converge, PiX-MC and APiX-MC already recover sharp edges and fine structures with substantially reduced blur and noise within the 200-second runtime budget. Compared with their sequential counterparts, Picard time parallelization consistently produces higher-quality reconstructions under the same wall-clock budget by advancing further along the sampling trajectory. These observations, together with the quantitative results, show that Picard time parallelization achieves substantial acceleration without sacrificing the final reconstruction quality.

\begin{figure}[t!]
\centering
\includegraphics[width=\textwidth]{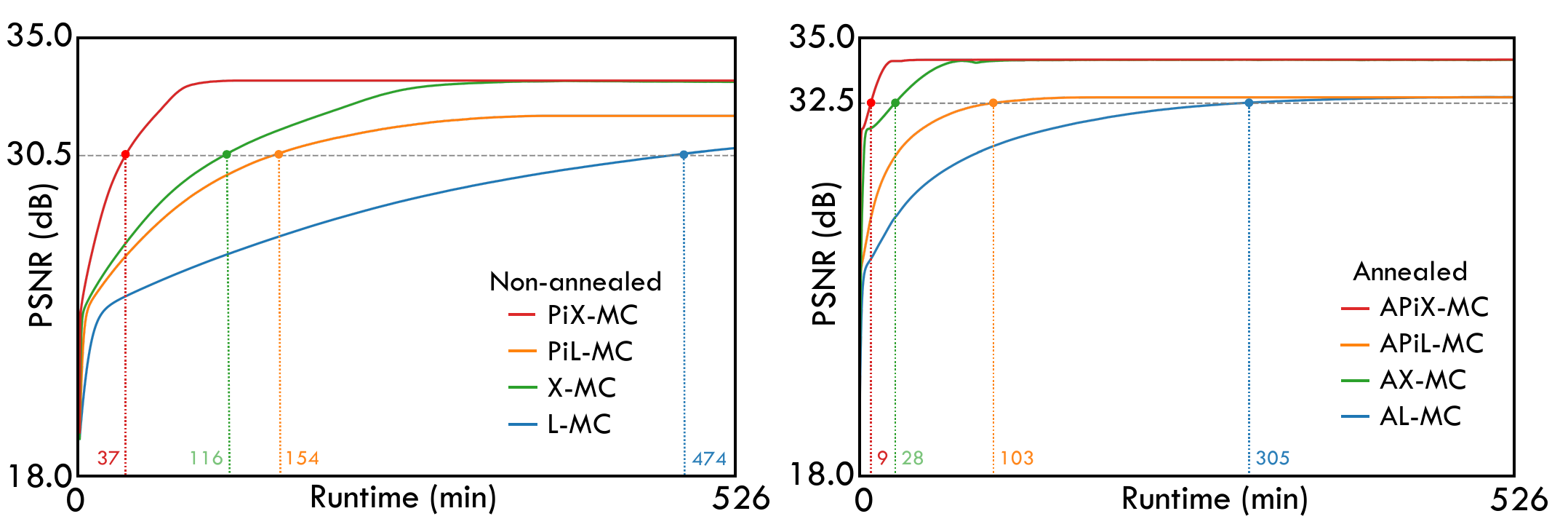}
\caption{
Average PSNR against wall-clock time for 3D CT reconstruction. The left and right panels show the non-annealed and annealed methods, respectively. 
The horizontal dashed lines mark the reference PSNR levels of $30.5$ dB and $32.5$ dB, and the vertical dotted lines indicate the corresponding threshold-crossing times. 
Note how {\proposed} and {\aproposed} attain even higher reconstruction quality in substantially less wall-clock time than their sequential counterparts. In particular, {\aproposed} achieves the 32.5 dB reference PSNR in only $9$ minutes, delivering approximately a $50\times$ wall-clock speedup over L-MC.
}
\label{fig:CT_curve}
\end{figure}

\subsection{Scaling to 3D CT Reconstruction}
Finally, we evaluate PiX-MC on a large-scale sparse-view CT reconstruction problem with volume size $512\times512\times80$, demonstrating its practical scalability to high-dimensional 3D imaging.

\paragraph{Implementation Details.}
For the score network, we trained a diffusion model based on the EDM framework \citep{karras2022elucidating}, using CT images from the AMOS \citep{ji2022amos} and LDCT \citep{moen2021low} datasets. The image values in Hounsfield units were clamped to the range $[-1000, 1000]$, then linearly rescaled to $[-1, 1]$.
We consider a sparse-view parallel-beam CT reconstruction problem with 30 uniformly spaced projection views over $180^\circ$. The target volume consists of 80 axial slices, each of size $512\times512$. Measurements are generated by applying the parallel-beam projection operator and adding i.i.d. Gaussian noise with standard deviation $\sigma_{\ybm}=0.5$ in the sinogram domain. Filtered back-projection (FBP) is used for initialization. Forward projection, back projection, and FBP are implemented using LEAP \citep{kim2023differentiable}\footnote{LEAP is available at \url{https://github.com/LLNL/LEAP}}.

To encourage consistency across adjacent slices, we additionally incorporate a one-dimensional inter-slice Huber-TV (HTV) regularizer $\Hbm(\xbm)$ into the proximal drift. The detailed formulation of the HTV regularizer and the resulting proximal drift are provided in Appendix~\ref{app:E2 likelihood}. 
Accordingly, the proximal drift $\Tsfit_{\eta,\sigma}$ is given by
\begin{equation}
    \Tsfit_{\eta,\sigma}\defn\frac{1}{\eta}\Big(I-\prox_{\eta L}\circ\big(I-\eta \Ssfit_\theta(\cdot;\sigma)-\eta\beta_H\nabla \Hbm\big)\Big),
\end{equation}
where $\beta_H>0$ is a balancing parameter for $\Hbm$. 
All Picard-based methods use eight GPUs with $N=8$ time nodes per block. 

\begin{figure}[t!]
\centering
\includegraphics[width=\textwidth]{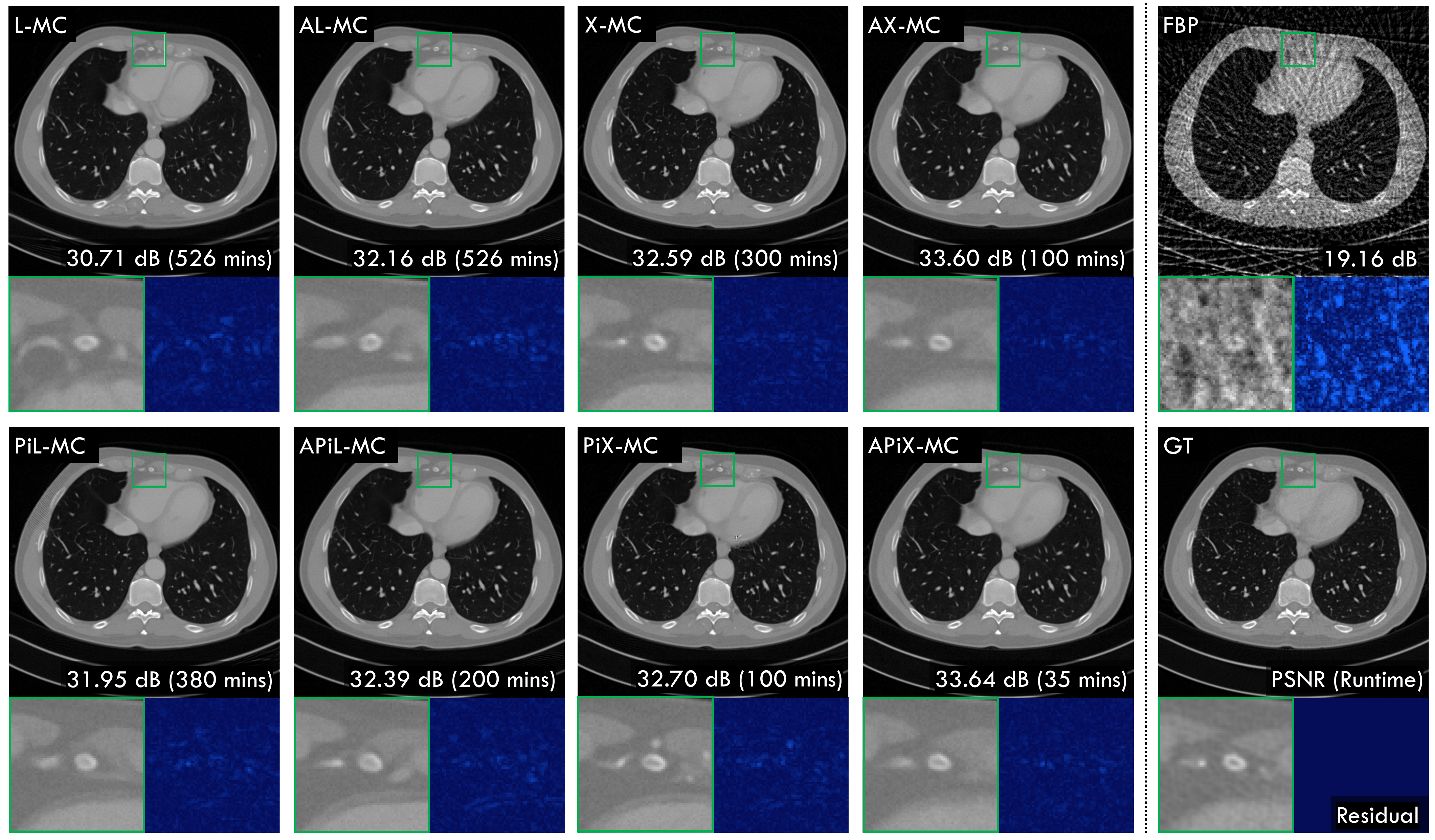}
\caption{Visual comparison of {\proposed} and baseline methods for large-scale sparse-view 3D CT reconstruction. 
The 20th slice of a $512\times512\times80$ CT volume is shown. Green boxes indicate the zoomed-in regions, together with the corresponding residual maps. 
The PSNR values and runtime (minutes) after convergence are reported. 
Note how {\aproposed} achieves the best visual quality and the highest PSNR of $33.64$ dB in only $35$ minutes, by combining Picard iteration, annealing,
and the proximal-likelihood drift.
}
\label{fig:CT_visual}
\end{figure}

\paragraph{Runtime Acceleration.}
Figure~\ref{fig:CT_curve} plots the mean slice-wise PSNR over the 80 axial slices against wall-clock time. The curves distinguish the effects of the three algorithmic components: the proximal-likelihood drift, Picard time parallelization, and annealing.
Compared with their likelihood-gradient counterparts, X-MC and AX-MC converge substantially faster and attain higher final PSNR, demonstrating that the proximal-likelihood drift provides a more effective reconstruction trajectory for this CT task. 
Building upon these improved trajectories, PiX-MC and APiX-MC further reduce the wall-clock time while preserving the attainable reconstruction quality. 
Specifically, PiX-MC reaches its reference PSNR in only $37$ minutes, compared with $116$ minutes for X-MC, corresponding to a $3.14\times$ speedup. 
Likewise, APiX-MC requires only $9$ minutes, compared with $28$ minutes for AX-MC, yielding a $3.11\times$ speedup. The same pattern is observed for the likelihood-gradient methods, where PiL-MC and APiL-MC accelerate L-MC and AL-MC by $3.08\times$ and $2.96\times$, respectively.
Annealing further improves the attainable reconstruction quality while preserving the approximately $3\times$ speedup provided by Picard time parallelization. 
Overall, APiX-MC achieves 32.5 dB in only $9$ minutes, demonstrating how the proximal-likelihood drift, annealing, and Picard time parallelization compound to deliver approximately a $50\times$ wall-clock speedup over the standard L-MC.

\paragraph{Axial-Slice and Volumetric Reconstruction.}
Figures~\ref{fig:CT_visual} and~\ref{fig:CT_3D} compare the reconstructed axial slice after convergence and the reconstructed volume under a fixed wall-clock budget of $35$ minutes, respectively. Across all four sequential/Picard pairs, Picard time parallelization achieves comparable or better reconstruction quality in substantially less wall-clock time. The visual comparisons further separate the effects of the remaining algorithmic components: compared with the likelihood-gradient methods, the proximal-likelihood variants more effectively suppress sparse-view streak artifacts and recover finer anatomical structures, while annealing further reduces residual artifacts and improves visual fidelity. These improvements are consistently observed in both the selected axial slice and the full 3D volume. Overall, APiX-MC achieves the best visual quality, and the two figures together demonstrate that the proximal-likelihood drift, annealing, and Picard time parallelization complement each other to enable high-quality large-scale 3D CT reconstruction.

\begin{figure}[t!]
\centering
\includegraphics[width=\textwidth]{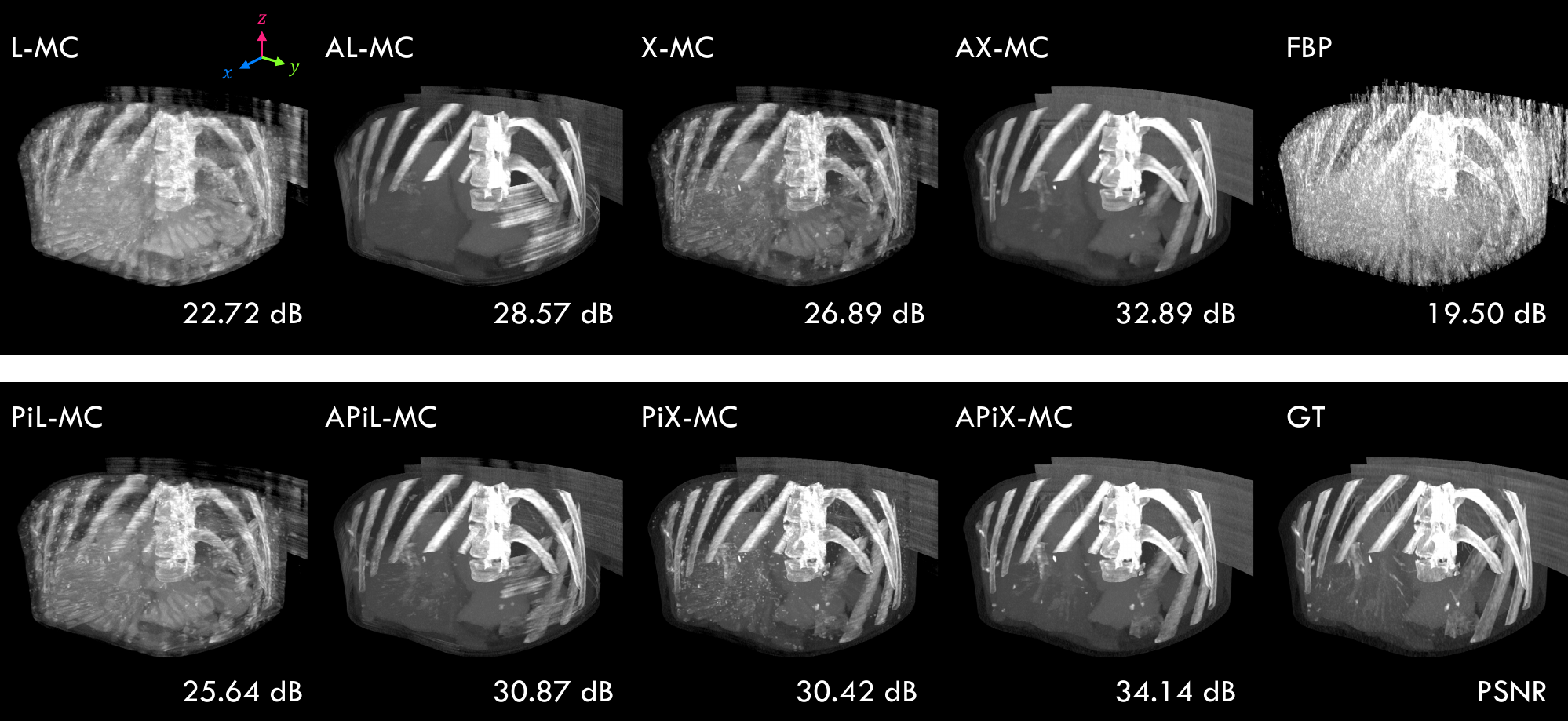}
\caption{Visual comparison of PiX-MC and baseline methods for large-scale sparse-view 3D CT reconstruction under a 35-minute time budget. 
Note how APiX-MC substantially suppresses the streak artifacts and noise while preserving clear structures.
}
\label{fig:CT_3D}
\end{figure}

\section{Conclusion}\label{sec:conclusion}
This paper introduces {\proposed}, a principled time-parallel posterior sampling framework for Bayesian imaging.
Motivated by forward-backward splitting, we construct a proximal drift that exploits efficient proximal operators available for imaging likelihoods.
Instead of advancing the samples sequentially along a long trajectory, PiX-MC parallelizes the trajectory across time nodes, enabling multiple time nodes to be processed concurrently across GPUs. 
To improve practical scalability under limited GPU budgets, we further develop annealed and multi-block variants. 
The annealed variant empirically accelerates the convergence to reach a target reconstruction quality,
while the multi-block variant partitions the trajectory into blocks, allowing PiX-MC to operate efficiently under limited GPU budgets.
We establish finite-time stationarity guarantees for all variants of PiX-MC under mild assumptions.
Extensive experiments across linear, nonlinear, and large-scale 2D and 3D imaging problems demonstrate practical scalability of PiX-MC and substantial wall-clock acceleration while maintaining reconstruction quality.

\section{Acknowledgements}
Y. Sun acknowledges support from the U.S. National Science Foundation (NSF) under Grant CCF-2542022. E. Bell is supported by the U.S. Department of Energy Computational Science Graduate Fellowship under Award No. DE-SC0026073. Y. Chen acknowledges support from the AMS-Simons Travel Grant and National Science Foundation award DMS-2608264.

\appendix

\section{Tool Lemmas}\label{app:tool}

\begin{lemma}[Lipschitz continuity of the proximal drifts]\label{lemma: lipschitz drifts}
Suppose Assumption \ref{ass: L and V 1} holds and $1-\eta\alpha_L>0$.
\begin{enumerate}[label=(\roman*)]
    \item If Assumption \ref{ass: accurate score 2} (ii) holds, the proximal drift $\Tsfit_{\eta,\sigma}$ is $\Lambda_\eta$-Lipschitz, where
    \begin{equation}
    \Lambda_\eta\defn L_\Ssfit+\beta_L \frac{1+\eta L_\Ssfit}{1-\eta \alpha_L} .
    \end{equation}
    \item If Assumption \ref{ass:4ann} (ii) holds and $\{\alpha_n\}_{n=0}^{MN-1}$ is non-increasing, then every weighted proximal drift $\Tsfit_{\eta,\sigma_n}^{\alpha_n}$ is uniformly $\Lambda_{\eta,\mathsf{ann}}$-Lipschitz, where
    \begin{equation}
    \Lambda_{\eta, \mathsf{ann}}\defn\alpha_0 L_\Ssfit+\beta_L \frac{1+\eta \alpha_0 L_\Ssfit}{1-\eta \alpha_L} .
    \end{equation}
\end{enumerate}
\end{lemma}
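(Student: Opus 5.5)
The plan is to write the drift as
\[
\Tsfit_{\eta,\sigma}=\frac{1}{\eta}\big(I-P\circ\Phi\big),\qquad P=\prox_{\eta L},\quad \Phi=I-\eta\Ssfit_\theta(\cdot;\sigma),
\]
and then to exploit the implicit characterization of the proximal map. Set $\zbm=\Phi(\xbm)$ and $\ubm=P(\zbm)$. Since $L$ is smooth and $\eta<1/\alpha_L$, $\ubm$ is the unique solution of $\ubm+\eta\nabla L(\ubm)=\zbm$. This gives
\[
\xbm-\ubm=\xbm-\zbm+\eta\nabla L(\ubm)=\eta\,\Ssfit_\theta(\xbm;\sigma)+\eta\nabla L(P(\Phi(\xbm))),
\]
so $\Tsfit_{\eta,\sigma}(\xbm)=\Ssfit_\theta(\xbm;\sigma)+\nabla L\big(P(\Phi(\xbm))\big)$. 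This identity is the key step: after it, the drift is just a sum of two Lipschitz maps, and we only need a Lipschitz constant for each.

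\textbf{Part (i).} Take $\xbm_1,\xbm_2$. By the triangle inequality,
\[
\|\Tsfit(\xbm_1)-\Tsfit(\xbm_2)\|\le L_\Ssfit\|\xbm_1-\xbm_2\|+\beta_L\,\|P(\Phi(\xbm_1))-P(\Phi(\xbm_2))\|.
\]
The composition is bounded factor by factor:
\begin{itemize}
\item $\Phi$ is $(1+\eta L_\Ssfit)$-Lipschitz, directly from Assumption \ref{ass: accurate score 2}(ii).
\item $P$ is $1/(1-\eta\alpha_L)$-Lipschitz. This is the cited fact (Proposition 2 of Gribonval et al.). It can also be checked directly: weak convexity gives $\langle\nabla L(\ubm_1)-\nabla L(\ubm_2),\ubm_1-\ubm_2\rangle\ge-\alpha_L\|\ubm_1-\ubm_2\|^2$. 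Taking the inner product of $\ubm_1-\ubm_2+\eta(\nabla L(\ubm_1)-\nabla L(\ubm_2))=\zbm_1-\zbm_2$ with $\ubm_1-\ubm_2$ and applying Cauchy--Schwarz yields $(1-\eta\alpha_L)\|\ubm_1-\ubm_2\|\le\|\zbm_1-\zbm_2\|$.
\end{itemize}
Multiplying the two constants gives exactly
\[
\Lambda_\eta=L_\Ssfit+\beta_L\frac{1+\eta L_\Ssfit}{1-\eta\alpha_L}.
\]
Well-definedness and single-valuedness of $P$ come from the same strong monotonicity of $I+\eta\nabla L$ when $\eta\alpha_L<1$. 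In that regime the prox objective is strongly convex, so the argmin coincides with the resolvent.

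\textbf{Part (ii).} Replace $\Ssfit_\theta$ by $\alpha_n\Ssfit_\theta(\cdot;\sigma_n)$. The same identity gives $\Tsfit^{\alpha_n}_{\eta,\sigma_n}=\alpha_n\Ssfit_\theta(\cdot;\sigma_n)+\nabla L\circ P\circ(I-\eta\alpha_n\Ssfit_\theta(\cdot;\sigma_n))$. By Assumption \ref{ass:4ann}(ii) this is Lipschitz with constant
\[
\alpha_nL_\Ssfit+\beta_L\frac{1+\eta\alpha_nL_\Ssfit}{1-\eta\alpha_L}.
\]
This expression is increasing in $\alpha_n$, and $\alpha_n\le\alpha_0$ by monotonicity of the schedule (with $\alpha_n\ge1>0$), so every drift is $\Lambda_{\eta,\mathsf{ann}}$-Lipschitz uniformly in $n$.

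The only mildly delicate point is justifying that $\prox_{\eta L}$ equals the resolvent $(I+\eta\nabla L)^{-1}$ for a nonconvex but weakly convex $L$. I would handle it via strong convexity of the prox objective, $\frac{1}{2}\|\cdot\|^2+\eta L$, when $\eta\alpha_L<1$. Everything else is routine composition of Lipschitz bounds.
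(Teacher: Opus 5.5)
Your proposal is correct and follows essentially the same route as the paper. You use the optimality identity $\Tsfit_{\eta,\sigma}(\xbm)=\Ssfit_\theta(\xbm;\sigma)+\nabla L\bigl(\prox_{\eta L}(\xbm-\eta\Ssfit_\theta(\xbm;\sigma))\bigr)$, the triangle inequality, and the product of the $(1+\eta L_\Ssfit)$ and $(1-\eta\alpha_L)^{-1}$ Lipschitz constants, and you obtain (ii) by substituting the weighted score and using $\alpha_n\le\alpha_0$; the only difference is that you verify the Lipschitz constant and single-valuedness of $\prox_{\eta L}$ directly from weak convexity rather than citing Gribonval--Nikolova, which is a correct, self-contained check.
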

\begin{proof}
We prove (i) first.
    For $\xbm_1,\xbm_2\in\R^d$, define
    \begin{equation}
        \zbm_1=\prox_{\eta L}(\xbm_1 - \eta \Ssfit_\theta(\xbm_1;\sigma)),\quad \zbm_2=\prox_{\eta L}(\xbm_2 - \eta \Ssfit_\theta(\xbm_2;\sigma)).
    \end{equation}
Since $L$ is $\alpha_L$-weakly convex and $1-\eta\alpha_L>0$, $\prox_{\eta L}$ is $(1-\eta\alpha_L)^{-1}$-Lipschitz. 
Thus,
\begin{equation}
    \|\zbm_1-\zbm_2\|\le \displaystyle\frac{1}{1-\eta\alpha_L}\left\|\xbm_1-\xbm_2 -\eta\left(\Ssfit_\theta(\xbm_1;\sigma)-\Ssfit_\theta(\xbm_2;\sigma)\right)\right\|\le \frac{1+\eta L_\Ssfit}{1-\eta\alpha_L}\|\xbm_1-\xbm_2\|.
\end{equation}
The proximal optimal condition yields
\begin{equation}
    \Tsfit_\eta(\xbm_1)=\nabla L(\zbm_1)+\Ssfit_\theta(\xbm_1;\sigma),\quad \Tsfit_\eta(\xbm_2)=\nabla L(\zbm_2)+\Ssfit_\theta(\xbm_2;\sigma).
\end{equation}
Hence,
\begin{equation}
\begin{array}{rl}
\left\|\Tsfit_{\eta, \sigma}(\xbm_1)-\Tsfit_{\eta, \sigma}(\xbm_2)\right\| & \leq\left\|\nabla L\left(\zbm_1\right)-\nabla L\left(\zbm_2\right)\right\|+\left\|\Ssfit_\theta(\xbm_1; \sigma)-\Ssfit_\theta(\xbm_2; \sigma)\right\| \\
& \leq \beta_L\left\|\zbm_1-\zbm_2\right\|+L_\Ssfit\|\xbm_1-\xbm_2\| \\
& \displaystyle\leq\left(L_\Ssfit+\beta_L \frac{1+\eta L_\Ssfit}{1-\eta \alpha_L}\right)\|\xbm_1-\xbm_2\|.
\end{array}
\end{equation}

For (ii), replace $\Ssfit_\theta(\cdot;\sigma)$ by $\alpha_n \Ssfit_\theta(\cdot;\sigma_n)$.
Since $\alpha_n\leq\alpha_0$, the weighted score is uniformly $\alpha_0L_\Ssfit$-Lipschitz.
Repeating the argument yields (ii). 
\end{proof}

The following lemma from \cite[Lemma 16]{chewi2025analysis} is essential for the proof. 
\begin{lemma}\label{lemma: Lipschitz and FI}
Under Assumption \ref{ass: L and V 1}, $\nabla L(\xbm) + \nabla V (\xbm)$ is $(\beta_L+\beta_V)$-Lipschitz.
For any probability measure $\mu\ll \pi$ and $\xbm\sim \mu$, we have
\begin{equation}
    \mathbb{E}\left[\|\nabla L(\xbm) + \nabla V(\xbm)\|^2\right] \le \mathbb{E}\left[\left\|\nabla \ln \frac{\mathrm{d} \mu }{\mathrm{d} \pi}\right\|^2\right]+2 d (\beta_L+\beta_V)= \FI(\mu\|\pi)+2d(\beta_L+\beta_V).
\end{equation}
\end{lemma}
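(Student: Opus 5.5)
Write $U\defn L+V$, so that $\pi\propto e^{-U}$ and $\nabla U=-\nabla\log\pi$. The Lipschitz claim is immediate: by the triangle inequality, $\|\nabla U(\xbm_1)-\nabla U(\xbm_2)\|\le(\beta_L+\beta_V)\|\xbm_1-\xbm_2\|$. For the moment bound, I would follow the standard integration-by-parts argument. If $\FI(\mu\|\pi)=\infty$ there is nothing to prove, so assume it is finite. Then $\mu$ has a density, also denoted $\mu$, with $\nabla\log\mu=\nabla\log\frac{d\mu}{d\pi}-\nabla U$ $\mu$-a.e. Write $\beta\defn\beta_L+\beta_V$.

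The key identity comes from substituting $\nabla U=\nabla\log\frac{d\mu}{d\pi}-\nabla\log\mu$ into one factor of $\|\nabla U\|^2$:
\begin{equation}
\E_\mu\|\nabla U\|^2=\E_\mu\Big\langle\nabla U,\nabla\log\frac{d\mu}{d\pi}\Big\rangle-\int\langle\nabla U,\nabla\mu\rangle\,d\xbm .
\end{equation}
For the second term, integrate by parts to get $-\int\langle\nabla U,\nabla\mu\rangle\,d\xbm=\int\Delta U\,d\mu$. Since $\nabla U$ is $\beta$-Lipschitz, $U$ is $C^{1,1}$, its Hessian exists a.e. with $\nabla^2U\preceq\beta\Ibm$, and hence $\Delta U\le d\beta$; this term is at most $d\beta$. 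For the first term, Young's inequality gives the bound $\tfrac12\E_\mu\|\nabla U\|^2+\tfrac12\FI(\mu\|\pi)$. Rearranging yields $\tfrac12\E_\mu\|\nabla U\|^2\le d\beta+\tfrac12\FI(\mu\|\pi)$, which is exactly the claim.

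\textbf{Main obstacle.} The hard part is rigor rather than algebra, in two places. First, the rearrangement subtracts $\tfrac12\E_\mu\|\nabla U\|^2$ from both sides, so we need that quantity to be finite a priori. Second, the integration by parts must hold without boundary terms, and only distributional second derivatives of $U$ are available.

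I would handle both by a truncation argument. Take a smooth cutoff $\chi_R$ with $\chi_R=1$ on the ball $B_R$, support in $B_{2R}$, and $|\nabla\chi_R|\lesssim 1/R$. Run the same computation with the weight $\chi_R^2$, i.e. on $\E_\mu[\chi_R^2\|\nabla U\|^2]$, which is finite. Integration by parts against the compactly supported $\chi_R^2\mu$ is valid because $\nabla U$ is Lipschitz, hence $W^{1,\infty}_{\mathrm{loc}}$. This produces an extra term $2\int\chi_R\langle\nabla U,\nabla\chi_R\rangle\,d\mu$. Bound it by $\epsilon\,\E_\mu[\chi_R^2\|\nabla U\|^2]$ plus $C_\epsilon R^{-2}$ times $\E_\mu\big[\|\nabla U\|^2\mathbb 1_{B_{2R}}\big]$. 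Since $\|\nabla U(\xbm)\|\lesssim 1+\|\xbm\|$, the latter is controlled by $R^{-2}\,\E_\mu\big[(1+\|\xbm\|^2)\mathbb 1_{B_{2R}\setminus B_R}\big]$. If $\mu$ has a finite second moment, this vanishes as $R\to\infty$ by dominated convergence; otherwise one first approximates $\mu$ by measures with finite second moment.

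Letting $R\to\infty$, then $\epsilon\to0$, and applying monotone convergence gives the stated inequality. This limiting step is where I would spend the most care; as the paper notes, it is the content of \cite[Lemma 16]{chewi2025analysis}, which can simply be cited.
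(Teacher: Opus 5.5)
Your proof is correct. It is the standard integration-by-parts-plus-Young argument behind \cite[Lemma 16]{chewi2025analysis}, which the paper cites instead of giving a proof, and the Lipschitz claim is the same triangle inequality. One simplification: your finite-second-moment caveat, and the unspecified approximation step attached to it, are not needed. Since $\nabla\chi_R$ is supported on $B_{2R}\setminus B_R$, where $R^{-2}(1+\|\xbm\|^2)\le 4+R^{-2}$, the cutoff term is $O\big(\mu(B_R^c)\big)\to 0$ for every $\mu$.
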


\noindent
We introduce the following technical lemma.
\begin{lemma}\label{lemma: technical lemma FBS}
    Assume that Assumptions \ref{ass: L and V 1} and \ref{ass: accurate score 2} (i) are satisfied, and $\eta>0$ is chosen sufficiently small such that $1-\eta \alpha_L, 1-3\eta^2\beta_L^2>0$.
    Then, for any $\xbm\in\mathbb{R}^d$, 
    \begin{equation}
        \|\Tsfit_\eta(\xbm)\|^2 \le \displaystyle\frac{3\delta^2}{1-3\eta^2\beta^2_L}+\frac{3}{1-3\eta^2\beta^2_L}\|\nabla L(\xbm)+\nabla V(\xbm)\|^2.
    \end{equation}
\end{lemma}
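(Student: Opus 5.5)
The approach is to write the proximal drift in implicit form and bound it through the residual. Set $\zbm=\prox_{\eta L}(\xbm-\eta\Ssfit_\theta(\xbm;\sigma))$. As in the proof of Lemma~\ref{lemma: lipschitz drifts}, the optimality condition gives
\begin{equation}
\Tsfit_\eta(\xbm)=\frac{\xbm-\zbm}{\eta}=\nabla L(\zbm)+\Ssfit_\theta(\xbm;\sigma).
\end{equation}
We split this against the posterior drift at $\xbm$:
\begin{equation}
\Tsfit_\eta(\xbm)=\bigl(\nabla L(\xbm)+\nabla V(\xbm)\bigr)+\bigl(\Ssfit_\theta(\xbm;\sigma)-\nabla V(\xbm)\bigr)+\bigl(\nabla L(\zbm)-\nabla L(\xbm)\bigr).
\end{equation}
The three terms are controlled as follows.
\begin{itemize}
\item The first term is the target quantity.
\item The second term is bounded in norm by $\delta$, by Assumption~\ref{ass: accurate score 2}(i).
\item The third term is bounded by $\beta_L\|\zbm-\xbm\|=\eta\beta_L\|\Tsfit_\eta(\xbm)\|$, using $\beta_L$-smoothness of $L$ from Assumption~\ref{ass: L and V 1}.
\end{itemize}

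Applying $\|a+b+c\|^2\le 3(\|a\|^2+\|b\|^2+\|c\|^2)$ gives
\begin{equation}
\|\Tsfit_\eta(\xbm)\|^2\le 3\|\nabla L(\xbm)+\nabla V(\xbm)\|^2+3\delta^2+3\eta^2\beta_L^2\|\Tsfit_\eta(\xbm)\|^2 .
\end{equation}
Since $\|\Tsfit_\eta(\xbm)\|$ is finite, we may move the last term to the left-hand side. Dividing by $1-3\eta^2\beta_L^2>0$ then yields exactly the claimed bound. The condition $1-\eta\alpha_L>0$ is used only to ensure that $\prox_{\eta L}$ is well defined and single valued, so that the optimality identity holds as an equality with $\nabla L$.

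There is no real obstacle here. The only point needing care is the self-referential appearance of $\|\Tsfit_\eta\|$ on both sides, which is resolved by the absorption step. The constant $3$ in the statement also fixes the choice of the three-term Young inequality, as opposed to a weighted split.
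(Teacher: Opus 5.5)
Your proposal is correct and is essentially the paper's argument. The paper works with $\|\zbm-\xbm\|^2=\eta^2\|\nabla L(\zbm)+\Ssfit_\theta(\xbm;\sigma)\|^2$, applies the same three-term split and absorption, and only then divides by $\eta^2$; you work directly with $\Tsfit_\eta(\xbm)=(\xbm-\zbm)/\eta$, which is the same computation rescaled.
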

\begin{proof}
    Let 
    \begin{equation}
        \zbm=(I+\eta\nabla L)^{-1} (\xbm - \eta\Ssfit_\theta(\xbm;\sigma)).
    \end{equation}
    Then $\Tsfit_\eta(\xbm) = \frac{1}{\eta}(\xbm-\zbm)$. We have
    \begin{equation}
        \zbm+\eta\nabla L(\zbm)= \xbm-\eta \Ssfit_\theta(\xbm;\sigma).
    \end{equation}
    That is, 
    \begin{equation}
        \zbm-\xbm = -\eta\nabla L(\zbm)-\eta \Ssfit_\theta(\xbm).
    \end{equation}
    Thus, 
    \begin{equation}
        \begin{array}{rl}
            &\|\zbm-\xbm\|^2=\eta^2 \|\nabla L(\zbm) + \Ssfit_\theta(\xbm)\|^2  \\
             =&\eta^2 \|\nabla L(\zbm)-\nabla L(\xbm)+\Ssfit_\theta(\xbm;\sigma) - \nabla V(\xbm) + \nabla L(\xbm) + \nabla V(\xbm)\|^2\\
             \le & 3\eta^2 \|\nabla L(\zbm)-\nabla L(\xbm)\|^2 + 3\eta^2 \|\Ssfit_\theta(\xbm;\sigma)-\nabla V(\xbm)\|^2 + 3\eta^2 \|\nabla L(\xbm)+\nabla V(\xbm)\|^2\\
             \le & 3\eta^2\beta_L^2\|\zbm-\xbm\|^2 + 3\eta^2\delta^2 + 3\eta^2 \|\nabla L(\xbm) + \nabla V(\xbm)\|^2.
        \end{array}
    \end{equation}
    This gives us
    \begin{equation}
        \begin{array}{rl}
             &(1-3\eta^2\beta_L^2)\|\zbm-\xbm\|^2 \le 3\eta^2\delta^2+3\eta^2\|\nabla L(\xbm)+\nabla V(\xbm)\|^2  \\
             \iff & \|\Tsfit_\eta(\xbm)\|^2 \le \displaystyle\frac{3\delta^2}{1-3\eta^2\beta_L^2}+\frac{3}{1-3\eta^2\beta_L^2}\|\nabla L(\xbm)+\nabla V(\xbm)\|^2. 
        \end{array}
    \end{equation}
\end{proof}

\noindent
We now prove Lemma \ref{lemma: technical lemma drift}.
\begin{proof}
    Let $\zbm = \prox_{\eta L}(\xbm-\eta \Ssfit_\theta (\xbm;\sigma))$. 
    Thus $\zbm+\eta \nabla L(\zbm) = \xbm - \eta \Ssfit_\theta (\xbm;\sigma)$.
    Therefore $\Tsfit_\eta (\xbm) = \nabla L(\zbm)+\Ssfit_\theta (\xbm;\sigma)$. 
    It follows that 
    \begin{equation}
        \Tsfit_\eta(\xbm) - \bigl(\nabla L(\xbm)+\nabla V(\xbm)\bigr)=\nabla L(\zbm) -\nabla L(\xbm) + \Ssfit_\theta (\xbm;\sigma) -\nabla V(\xbm).
    \end{equation}
    Then we have
    \begin{equation}
    \begin{array}{rl}
         &\|\Tsfit_\eta(\xbm) - \bigl(\nabla L(\xbm)+\nabla V(\xbm)\bigr)\|^2\\
          \le&  2\|\nabla L(\zbm)-\nabla L(\xbm)\|^2+2\|\Ssfit_\theta (\xbm;\sigma)-\nabla V(\xbm)\|^2\\
         \le& 2\beta_L^2 \|\zbm-\xbm\|^2 + 2\delta^2.
    \end{array}
    \end{equation}
    Since $\zbm-\xbm = -\eta \Tsfit_\eta (\xbm)$, this gives
    \begin{equation}
        \|\Tsfit_\eta(\xbm)-\bigl(\nabla L(\xbm)+\nabla V(\xbm)\bigr)\|^2\le 2\eta^2 \beta_L^2 \|\Tsfit_\eta(\xbm)\|^2 + 2\delta^2.
    \end{equation}
    Applying Lemma \ref{lemma: technical lemma FBS} yields 
    \begin{equation}
        \|\Tsfit_\eta(\xbm) - \bigl(\nabla L(\xbm)+\nabla V(\xbm)\bigr)\|^2 \le 2\delta^2 + \displaystyle \frac{6\eta^2 \beta_L^2 \delta^2}{1-3\eta^2\beta_L^2}+\frac{6\eta^2 \beta_L^2 }{1-3\eta^2\beta_L^2}\|\nabla L(\xbm)+\nabla V(\xbm)\|^2.
    \end{equation}
    Finally, note that $\displaystyle 2\delta^2 + \frac{6\eta^2 \beta_L^2 \delta^2}{1-3\eta^2\beta_L^2} = \frac{2\delta^2}{1-3\eta^2\beta_L^2}$. 
    Hence, 
    \begin{equation}
        \|\Tsfit_\eta(\xbm) - \bigl(\nabla L(\xbm)+\nabla V(\xbm)\bigr)\|^2 \le \displaystyle \frac{2\delta^2}{1-3\eta^2\beta_L^2}+\frac{6\eta^2 \beta_L^2 }{1-3\eta^2\beta_L^2}\|\nabla L(\xbm)+\nabla V(\xbm)\|^2.
    \end{equation}
\end{proof}

\section{Proof of single-block PiX-MC}\label{app:thm1}
We present the convergence result of single-block \proposed.
Before that, we define some essential constants used throughout the proof. 
Define the constants $c_{\mathsf{score}},c_{\mathsf{split}}$:
\begin{equation}
    c_{\mathsf{score}} = \displaystyle\frac{2\delta^2}{1-3\eta^2\beta_L^2}, \quad c_{\mathsf{split}}=\frac{6\eta^2\beta_L^2}{1-3\eta^2\beta_L^2}.
\end{equation}
Define the constant $\rho$:
\begin{equation}
    \rho=\displaystyle \left(1-\frac{18\gamma^2(\beta_L+\beta_V)^2}{1-3\eta^2\beta_L^2}\right)^{-1}.
\end{equation}
Define the constant $A_1$:
\begin{equation}
    A_1=2(\beta_L+\beta_V)^2(1+2c_{\mathsf{split}})=\displaystyle\frac{2(\beta_L+\beta_V)^2(1+9\eta^2\beta^2_L)}{1-3\eta^2\beta_L^2}.
\end{equation}
Define the constants $A_4, A_5$:
\begin{equation}
    A_4=4c_{\mathsf{split}}+\frac{18\rho A_1 \gamma^2}{1-3\eta^2\beta_L^2}, \quad A_5=2c_{\mathsf{score}}+A_1\rho\left(\frac{9\gamma^2\delta^2}{1-3\eta^2\beta_L^2}+6\gamma d\right).
\end{equation}
Define the constants $B_2, B_3$:
\begin{equation}
    B_2=\displaystyle q_{\mathsf{sing}}^{2K-2}
    \cdot \left\{2Td+\frac{1}{1-3\eta^2\beta_L^2}\left[3T^2\delta^2+6T^2d(\beta_L+\beta_V)\right]\right\},
\end{equation}
and
\begin{equation}
    B_3=q_{\mathsf{sing}}^{2K-2} \cdot\frac{3T^2}{1-3\eta^2\beta_L^2}.
\end{equation}
Finally, we define the constant $A$:
\begin{equation}
    A:=3\rho A_1 B_2+A_5+2A_4d(\beta_L+\beta_V).
\end{equation}
Recall that $q_{\mathsf{sing}}=T\Lambda_\eta$ is the contraction factor introduced in \S \ref{sec:3.2}.
We now state a detailed version of Theorem \ref{thm:main-no-block-simple} with explicit constants.

\begin{theorem}[Detailed version of Theorem \ref{thm:main-no-block-simple}]\label{thm: main no block}
Let $(\mu_t)_{t\geq0}$ denote the law for the continuous interpolation of $\{\Xn^{(K)}\}_{n=0}^{N}$ generated by
{\proposed}, where $N > 0$ is the total number of time steps. 
Assume Assumptions~\ref{ass: L and V 1}-\ref{ass: accurate score 2} hold, and the parameters $\eta,\gamma>0$ are chosen sufficiently small such that 
\begin{equation}
    \displaystyle
    1-\eta\alpha_L>0, 1-3\eta^2\beta_L^2>0, 1-\frac{18 \gamma^2\left(\beta_L+\beta_V\right)^2}{1-3 \eta^2 \beta_L^2}>0,  q_{\mathsf{sing}}<1, A_4<0.75.
\end{equation}
Then,
\begin{equation}
\displaystyle (0.75-A_4)\frac{1}{T}\int_0^T \FI(\mu_t\|\pi)dt 
    \le \frac{1}{T} \KL (\mu_0\|\pi) + A + 3\rho A_1 B_3 \cdot\FI(\mu_0\|\pi).
\end{equation}
\end{theorem}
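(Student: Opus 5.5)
We follow the interpolation argument of \cite{balasubramanian2022towards} and \cite{chewi2025analysis} for averaged Fisher information bounds of Langevin Monte Carlo, adapted to the Picard output. On each interval $t\in[t_n,t_{n+1})$ define the continuous interpolation
\begin{equation*}
\xbm_t=\xbm_n^{(K)}-(t-t_n)\,\Tsfit_\eta(\xbm_n^{(K-1)})+\sqrt{2}(\Bbm_t-\Bbm_{t_n}).
\end{equation*}
This is the process driven by the frozen drift $b_n:=\Tsfit_\eta(\xbm_n^{(K-1)})$, which is exactly what the Picard update uses. It matches $\xbm_{n+1}^{(K)}$ at $t_{n+1}$, because $\xbm_{n+1}^{(K)}-\xbm_n^{(K)}=-\gamma\Tsfit_\eta(\xbm_n^{(K-1)})+\sqrt2(\Bbm_{t_{n+1}}-\Bbm_{t_n})$. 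With $\mu_t=\mathrm{Law}(\xbm_t)$, the Fokker--Planck equation with conditional drift $\E[b_n\mid \xbm_t]$ gives
\begin{equation*}
\frac{d}{dt}\KL(\mu_t\|\pi)=-\FI(\mu_t\|\pi)+\E\Big\langle \nabla\log\tfrac{d\mu_t}{d\pi}(\xbm_t),\,\nabla L(\xbm_t)+\nabla V(\xbm_t)-b_n\Big\rangle .
\end{equation*}
Young's inequality bounds this by $-\tfrac34\FI(\mu_t\|\pi)+\E\|\nabla L(\xbm_t)+\nabla V(\xbm_t)-b_n\|^2$. Integrating over $[0,T]$ and dropping $\KL(\mu_T\|\pi)\ge0$ yields the left-hand side $0.75\cdot\frac1T\int\FI$ together with the $\frac1T\KL(\mu_0\|\pi)$ term. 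It then remains to bound the averaged error $\E\|\nabla(L+V)(\xbm_t)-b_n\|^2$.

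We split this error into three pieces:
\begin{equation*}
\underbrace{\nabla(L+V)(\xbm_t)-\nabla(L+V)(\xbm_n^{(K)})}_{\text{discretization}}
+\underbrace{\nabla(L+V)(\xbm_n^{(K)})-\Tsfit_\eta(\xbm_n^{(K)})}_{\text{splitting/score}}
+\underbrace{\Tsfit_\eta(\xbm_n^{(K)})-\Tsfit_\eta(\xbm_n^{(K-1)})}_{\text{Picard}} .
\end{equation*}

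\emph{Splitting/score term.} Lemma~\ref{lemma: technical lemma drift} bounds it by $c_{\mathsf{score}}+c_{\mathsf{split}}\E\|\nabla(L+V)(\xbm_n^{(K)})\|^2$.

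\emph{Discretization term.} By $(\beta_L+\beta_V)$-smoothness it is at most $(\beta_L+\beta_V)^2\E\|\xbm_t-\xbm_n^{(K)}\|^2$, and
\begin{equation*}
\E\|\xbm_t-\xbm_n^{(K)}\|^2\le 2\gamma^2\E\|b_n\|^2+4\gamma d .
\end{equation*}
Lemma~\ref{lemma: technical lemma FBS} bounds $\|b_n\|^2$ by $\frac{3}{1-3\eta^2\beta_L^2}\bigl(\delta^2+\|\nabla(L+V)(\xbm_n^{(K-1)})\|^2\bigr)$. This produces the $6\gamma d$ and $9\gamma^2\delta^2/(1-3\eta^2\beta_L^2)$ pieces of $A_5$.

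\emph{Picard term.} Since $\Tsfit_\eta$ is $\Lambda_\eta$-Lipschitz, it is at most $\Lambda_\eta^2\varepsilon_K$. Gradients evaluated at $\xbm_n^{(K-1)}$ are converted back to gradients at $\xbm_n^{(K)}$, paying $2(\beta_L+\beta_V)^2\varepsilon_K$.

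Next we need to express $\E\|\nabla(L+V)(\xbm_n^{(K)})\|^2$ through the Fisher information along the interpolation. The route is to write $\nabla(L+V)(\xbm_n^{(K)})$ in terms of $\nabla(L+V)(\xbm_t)$ plus a discretization difference. This gives a self-referential inequality whose $\gamma^2$-coefficient $18\gamma^2(\beta_L+\beta_V)^2/(1-3\eta^2\beta_L^2)$ is absorbed on the left, producing the factor $\rho$. We then apply Lemma~\ref{lemma: Lipschitz and FI}, $\E\|\nabla(L+V)(\xbm_t)\|^2\le \FI(\mu_t\|\pi)+2d(\beta_L+\beta_V)$. This accounts for the $A_4\,\FI$ term that is moved to the left-hand side and for the $2A_4d(\beta_L+\beta_V)$ part of $A$.

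The step we expect to be the main obstacle is bounding $\varepsilon_K$ explicitly as $B_2+B_3\,\FI(\mu_0\|\pi)$, which amounts to making Lemma~\ref{lemma: error decay no block maintext} quantitative. Subtracting consecutive Picard iterates gives
\begin{equation*}
\|\xbm_n^{(k+1)}-\xbm_n^{(k)}\|\le\gamma\Lambda_\eta\sum_{i<n}\|\xbm_i^{(k)}-\xbm_i^{(k-1)}\|,
\end{equation*}
so by Cauchy--Schwarz $\varepsilon_{k+1}\le q_{\mathsf{sing}}^2\varepsilon_k$, and hence $\varepsilon_K\le q_{\mathsf{sing}}^{2K-2}\varepsilon_1$. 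For the base case,
\begin{equation*}
\xbm_n^{(1)}-\xbm_n^{(0)}=-n\gamma\,\Tsfit_\eta(\xbm_0)+\sqrt2\,\Bbm_{n\gamma},
\end{equation*}
which gives $\varepsilon_1\le 2T^2\E\|\Tsfit_\eta(\xbm_0)\|^2+4Td$. Applying Lemma~\ref{lemma: technical lemma FBS} and then Lemma~\ref{lemma: Lipschitz and FI} at $\mu_0$ bounds this by $\frac{3T^2}{1-3\eta^2\beta_L^2}\bigl(\delta^2+\FI(\mu_0\|\pi)+2d(\beta_L+\beta_V)\bigr)$ plus a $Td$ term, which matches $B_2$ and $B_3$ up to the bookkeeping of constants. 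The delicate point is tracking the constants $\rho, A_1, A_4$ so that the final coefficients collect into $A+3\rho A_1B_3\,\FI(\mu_0\|\pi)$. Dividing by $T$ then gives the claim, under the stated condition $A_4<0.75$.
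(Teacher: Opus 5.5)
\paragraph{Overall.} Your architecture matches the paper's: the frozen-drift interpolation, the $-\tfrac34\FI$ entropy-dissipation inequality, the contraction $\varepsilon_{k+1}\le q_{\mathsf{sing}}^2\varepsilon_k$, Lemmas~\ref{lemma: technical lemma FBS} and~\ref{lemma: Lipschitz and FI}, and an absorption step that produces a $\rho$-type factor. But this statement is a specific inequality with the explicit constants $A_1,\rho,A_4,A_5,B_2,B_3$. Your decomposition of the drift error cannot produce them, so the argument breaks at the claim that everything ``collects into $A+3\rho A_1B_3\,\FI(\mu_0\|\pi)$ up to bookkeeping.''

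\paragraph{Where it fails.} You pivot at $\xbm_n^{(K)}$ and charge the Picard piece to the $\Lambda_\eta$-Lipschitz continuity of $\Tsfit_\eta$. This gives a term $c\,\Lambda_\eta^2\varepsilon_K$ (for example $c=3$ with an equal-weight three-way split), and no choice of Young weights removes it. Since $\Lambda_\eta=L_\Ssfit+\beta_L\frac{1+\eta L_\Ssfit}{1-\eta\alpha_L}$ involves $L_\Ssfit$, which is unrelated to $\beta_L+\beta_V$, this term is not dominated by $3\rho A_1=6\rho(\beta_L+\beta_V)^2(1+2c_{\mathsf{split}})$. The other constants change as well:
\begin{itemize}
\item the equal-weight split gives $3c_{\mathsf{score}}$ instead of $2c_{\mathsf{score}}$;
\item your chain $\xbm_n^{(K-1)}\to\xbm_n^{(K)}\to\xbm_t$, combined with $\E\|\xbm_t-\xbm_n^{(K)}\|^2\le 2\gamma^2\E\|b_n\|^2+4\gamma d$, gives the self-referential coefficient $24\gamma^2(\beta_L+\beta_V)^2/(1-3\eta^2\beta_L^2)$, not the $18$ you quote;
\item that same chain does not produce the $6\gamma d$ and $9\gamma^2\delta^2/(1-3\eta^2\beta_L^2)$ in $A_5$.
\end{itemize}
What you would actually prove is a bound of the same $\mathcal O$-shape as Theorem~\ref{thm:main-no-block-simple}, but with different constants that depend on $L_\Ssfit$ outside $q_{\mathsf{sing}}$.

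\paragraph{How the paper gets its constants.} It pivots at $\xbm_n^{(K-1)}$, the point where the frozen drift is evaluated.
\begin{itemize}
\item It splits $\nabla(L+V)(\Xbm_t)-\Tsfit_\eta(\xbm_n^{(K-1)})$ into $[\nabla(L+V)(\Xbm_t)-\nabla(L+V)(\xbm_n^{(K-1)})]+[\nabla(L+V)(\xbm_n^{(K-1)})-\Tsfit_\eta(\xbm_n^{(K-1)})]$ with a factor $2$.
\item It applies Lemma~\ref{lemma: technical lemma drift} at $\xbm_n^{(K-1)}$ and converts $\|\nabla(L+V)(\xbm_n^{(K-1)})\|^2\le 2(\beta_L+\beta_V)^2\|\xbm_n^{(K-1)}-\Xbm_t\|^2+2\|\nabla(L+V)(\Xbm_t)\|^2$.
\item This yields $A_1\E\|\Xbm_t-\xbm_n^{(K-1)}\|^2+4c_{\mathsf{split}}\E\|\nabla(L+V)(\Xbm_t)\|^2+2c_{\mathsf{score}}$.
\item The Picard residual then enters only through $\Xbm_t-\xbm_n^{(K-1)}=(\xbm_n^{(K)}-\xbm_n^{(K-1)})-(t-t_n)\Tsfit_\eta(\xbm_n^{(K-1)})+\sqrt2(\Bbm_t-\Bbm_{t_n})$, split three ways.
\end{itemize}
That single split, followed by Lemma~\ref{lemma: technical lemma FBS} and the same gradient conversion, is the source of the $3$ in $3\rho A_1$, of the $6\gamma d$ and $9\gamma^2\delta^2$ in $A_5$, and of the $18$ in $\rho$. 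So $\Lambda_\eta$ appears only through $q_{\mathsf{sing}}$.

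\paragraph{Base case.} Use independence rather than Young's inequality. Since $\xbm_0$ is independent of the mean-zero $\Bbm_{n\gamma}$, the cross term vanishes and $\varepsilon_1\le T^2\E\|\Tsfit_\eta(\xbm_0)\|^2+2Td$, which gives exactly $B_2$ and $B_3$. Your bound $2T^2\E\|\Tsfit_\eta(\xbm_0)\|^2+4Td$ would double both constants, and your displayed consequence silently drops that factor $2$.
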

Note that 
\begin{equation}
    A_4=\displaystyle\frac{24 \eta^2 \beta_L^2}{1-3 \eta^2 \beta_L^2}+\frac{36 \gamma^2\left(\beta_L+\beta_V\right)^2\left(1+9 \eta^2 \beta_L^2\right)}{\left(1-3 \eta^2 \beta_L^2\right)^2\left(1-\frac{18 \gamma^2\left(\beta_L+\beta_V\right)^2}{1-3 \eta^2 \beta_L^2}\right)}.
\end{equation}
Hence, $A_4\to0$ as $(\eta,\gamma)\to(0,0)$, and therefore $A_4<0.75$ for sufficiently small $\eta$ and $\gamma$. 
More explicitly, direct simplification gives
\begin{equation}
    \displaystyle
    0.75-A_4=0.75\cdot \frac{1-35 \eta^2 \beta_L^2-66 \gamma^2\left(\beta_L+\beta_V\right)^2}{1-3 \eta^2 \beta_L^2-18 \gamma^2\left(\beta_L+\beta_V\right)^2} .
\end{equation}
Consequently, the sufficient conditions
\begin{equation}\label{eq:example for condition}
    \displaystyle
    \eta^2\beta_L^2<\frac{1}{35}, \quad \gamma^2(\beta_L+\beta_V)^2<\frac{1-35\eta^2\beta_L^2}{66}
\end{equation}
ensure that $A_4<0.75$.

Before the proof of Theorem \ref{thm: main no block}, we first introduce a technical lemma, which shows that the error decays exponentially fast in the parallel refinement, if $q_{\mathsf{sing}}<1$.

\begin{lemma}[Detailed version of Lemma \ref{lemma: error decay no block maintext}]\label{lemma: error decay no block}
    Let $\mu_0$ be the initial distribution of $\xbm_0$ and assume Assumptions \ref{ass: L and V 1}-\ref{ass: accurate score 2} hold. Choose the parameters $\eta,\gamma >0$ sufficiently small such that 
    \begin{equation}
        1-\eta \alpha_L>0, \quad 1-3\eta^2\beta_L^2>0, \quad q_{\mathsf{sing}}<1.
    \end{equation}
    Define the error across two successive Picard refinements at the $k$-th iteration as
    \begin{equation}\varepsilon_k\defn\max\limits_{n=1,2,\dots,N}\E\left[\|\xbm^{(k)}_{n}-\xbm^{(k-1)}_{n}\|^2\right].
    \end{equation}
    Then, the Picard iteration error for {\proposed} satisfies
    \begin{equation}
    \varepsilon_K \le  B_2+B_3\cdot \FI (\mu_{0}\|\pi).
    \end{equation}
\end{lemma}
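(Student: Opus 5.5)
The proof has two parts. First, a one-step contraction: the Picard error shrinks by a factor $q_{\mathsf{sing}}^2$ per refinement. Second, a bound on the first residual $\varepsilon_1$ in terms of $\FI(\mu_0\|\pi)$. Chaining the two gives $\varepsilon_K\le q_{\mathsf{sing}}^{2K-2}\varepsilon_1$. Matching this against $B_2+B_3\FI(\mu_0\|\pi)$, the bound on $\varepsilon_1$ should read
\[
\varepsilon_1\le 2Td+\tfrac{1}{1-3\eta^2\beta_L^2}\bigl[3T^2\delta^2+6T^2d(\beta_L+\beta_V)+3T^2\FI(\mu_0\|\pi)\bigr].
\]

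\textbf{Contraction step.} Subtract the update \eqref{eq: picard discrete} at iterations $k+1$ and $k$. The initial point $\xbm_0$ and the Brownian term $\sqrt2\Bbm_{n\gamma}$ are shared, so they cancel and leave
\[
\xbm_n^{(k+1)}-\xbm_n^{(k)}=-\gamma\sum_{i=0}^{n-1}\bigl(\Tsfit_\eta(\xbm_i^{(k)})-\Tsfit_\eta(\xbm_i^{(k-1)})\bigr).
\]
Apply Cauchy--Schwarz to the sum to get $\|\cdot\|^2\le \gamma^2 n\sum_{i<n}\|\cdot\|^2$. Then use the $\Lambda_\eta$-Lipschitz bound from Lemma~\ref{lemma: lipschitz drifts}(i) on each term. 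Take expectations and note that the $i=0$ term vanishes, since $\xbm_0^{(k)}=\xbm_0^{(k-1)}=\xbm_0$. Bounding each remaining summand by $\varepsilon_k$ gives
\[
\E\|\xbm_n^{(k+1)}-\xbm_n^{(k)}\|^2\le \gamma^2 N^2\Lambda_\eta^2\varepsilon_k=q_{\mathsf{sing}}^2\varepsilon_k .
\]
Taking the maximum over $n$ yields $\varepsilon_{k+1}\le q_{\mathsf{sing}}^2\varepsilon_k$, and iterating gives $\varepsilon_K\le q_{\mathsf{sing}}^{2K-2}\varepsilon_1$.

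\textbf{Initial residual.} Since $\xbm_n^{(0)}=\xbm_0$ for all $n$, the first refinement is
\[
\xbm_n^{(1)}-\xbm_n^{(0)}=-\gamma n\,\Tsfit_\eta(\xbm_0)+\sqrt2\Bbm_{n\gamma}.
\]
Use $\|a+b\|^2\le 2\|a\|^2+2\|b\|^2$ together with $n\gamma\le T$. The Brownian part contributes $2\cdot 2\,n\gamma\, d\le 4Td$. Here I may need to exploit independence of $\xbm_0$ and $\Bbm$ so that the cross term has zero mean; that removes the factor 2 and gives exactly $2Td$, consistent with $B_2$. I will use that route:
\[
\E\|\xbm_n^{(1)}-\xbm_0\|^2=\gamma^2n^2\E\|\Tsfit_\eta(\xbm_0)\|^2+2n\gamma d .
\]
The cross term is zero because $\E[\Bbm_{n\gamma}]=0$ and $\Bbm$ is independent of $\xbm_0$.

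For the drift term, apply Lemma~\ref{lemma: technical lemma FBS}:
\[
\|\Tsfit_\eta(\xbm_0)\|^2\le \frac{3\delta^2+3\|\nabla L(\xbm_0)+\nabla V(\xbm_0)\|^2}{1-3\eta^2\beta_L^2}.
\]
Then take expectations under $\mu_0$ using Lemma~\ref{lemma: Lipschitz and FI}, which gives
\[
\E\|\nabla L(\xbm_0)+\nabla V(\xbm_0)\|^2\le \FI(\mu_0\|\pi)+2d(\beta_L+\beta_V).
\]
With $\gamma n\le T$ this produces exactly $\frac{3T^2\delta^2+6T^2d(\beta_L+\beta_V)+3T^2\FI(\mu_0\|\pi)}{1-3\eta^2\beta_L^2}$. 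Maximizing over $n$ gives the claimed bound on $\varepsilon_1$, and multiplying by $q_{\mathsf{sing}}^{2K-2}$ yields the lemma.

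\textbf{Main obstacle.} The delicate point is the constant bookkeeping for the initial residual: obtaining $2Td$ rather than $4Td$ requires the independence and zero-mean cross-term argument. Lemma~\ref{lemma: Lipschitz and FI} must also be applicable, which needs $\mu_0\ll\pi$ with finite Fisher information; I would take that as implicit in the statement. The contraction step is routine once the shared Brownian path is seen to cancel.
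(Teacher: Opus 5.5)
Your proposal is correct and follows the paper's proof essentially step for step. You first prove the contraction $\varepsilon_{k+1}\le q_{\mathsf{sing}}^2\varepsilon_k$ via Cauchy--Schwarz and the $\Lambda_\eta$-Lipschitz drift, then bound $\varepsilon_1\le T^2\E\|\Tsfit_\eta(\xbm_0)\|^2+2Td$ using the independence of $\xbm_0$ and the Brownian path, and finish with Lemma~\ref{lemma: technical lemma FBS} and Lemma~\ref{lemma: Lipschitz and FI}, which reproduces $B_2$ and $B_3$ exactly.
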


\begin{proof} For each $ n=1,2,\dots,N$,
    \begin{equation}
        \begin{array}{rl}
             \mathbb{E}\|\xbm^{(k+1)}_{n}-\xbm^{(k)}_{n}\|^2 
             =& \mathbb{E}\left\|\gamma\sum\limits_{i=0}^{n-1}\left[\Tsfit_\eta(\xbm^{(k)}_{i})-\Tsfit_\eta(\xbm^{(k-1)}_{i})\right]\right\|^2  \\
             \le & n\gamma^2 \sum\limits_{i=0}^{n-1}\mathbb{E}\|\Tsfit_\eta(\xbm^{(k)}_{i})-\Tsfit_\eta(\xbm^{(k-1)}_{i})\|^2\\
             \le & n\gamma^2 \Lambda_\eta^2 \sum\limits_{i=0}^{n-1}\mathbb{E}\|\xbm^{(k)}_{i}-\xbm^{(k-1)}_{i}\|^2\\
             \le & n^2\gamma^2\Lambda_\eta^2\varepsilon_k\le \left(n\gamma\Lambda_\eta\right)^2 \varepsilon_k\le q_{\mathsf{sing}}^2 \varepsilon_k.
        \end{array}
    \end{equation}
    Thus, $\varepsilon_{k+1}\le q_{\mathsf{sing}}^2 \varepsilon_k$.
    Note that according to Algorithm \ref{alg: pix}, $\xbm^{(0)}_{n}\equiv \xbm_{0}$. In particular, we have
    \begin{equation}
    \begin{array}{rl}
         & \mathbb{E}\|\xbm^{(1)}_{n}-\xbm^{(0)}_{n}\|^2=\mathbb{E}\left\|\gamma\sum\limits_{i=0}^{n-1}\Tsfit_\eta(\xbm_0)+\sqrt{2}\Bbm_{n\gamma }\right\|^2\\
         =& n^2\gamma^2\mathbb{E}\|\Tsfit_\eta(\xbm_0)\|^2+2n\gamma d\le T^2\mathbb{E}\|\Tsfit_\eta(\xbm_0)\|^2+2Td.
    \end{array}
    \end{equation}
    Note that since $\xbm_0$ is independent of $\Bbm_{n\gamma}$ and $\mathbb E[\Bbm_{n\gamma}]=\zerobm$, the cross term vanishes.
    Thus $\varepsilon_1\le T^2\mathbb{E}\|\Tsfit_\eta(\xbm_0)\|^2+2Td$. 
    Since $\varepsilon_{k+1}\le q_{\mathsf{sing}}^2\varepsilon_k$ and $\varepsilon_1\le T^2\mathbb{E}\|\Tsfit_\eta(\xbm_0)\|^2+2Td$, by induction we have
    \begin{equation}\label{eq: temp0* no block}
        \varepsilon_K\le q_{\mathsf{sing}}^{2K-2}\left( T^2 \mathbb{E}\|\Tsfit_\eta (\xbm_0)\|^2+2Td\right).
    \end{equation}
    We bound $\mathbb{E}\|\Tsfit_\eta (\xbm_0)\|^2$ using Lemma \ref{lemma: Lipschitz and FI}. By Lemma \ref{lemma: technical lemma FBS}, we have that 
    \begin{equation}
        \mathbb{E}\|\Tsfit_\eta (\xbm_0)\|^2\le \displaystyle\frac{3\delta^2}{1-3\eta^2\beta_L^2}+\frac{3}{1-3\eta^2\beta_L^2}\mathbb{E}\|\nabla L(\xbm_0)+\nabla V(\xbm_0)\|^2.
    \end{equation}
    Since $\nabla L(\xbm)+\nabla V(\xbm)$ is $(\beta_L+\beta_V)$-smooth, by Lemma \ref{lemma: Lipschitz and FI}, we have
    \begin{equation}
        \mathbb{E}\| \nabla L(\xbm_0)+ \nabla V(\xbm_0)\|^2\le \displaystyle 2d(\beta_L+\beta_V)+\FI(\mu_{0}\| \pi).
    \end{equation}
    Therefore, 
    \begin{equation}\label{eq: temp1* no block}
        \mathbb{E}\|\Tsfit_\eta(\xbm_0)\|^2\le \displaystyle \frac{3\delta^2}{1-3\eta^2\beta_L^2}+\frac{3}{1-3\eta^2\beta_L^2}\left[ 2d(\beta_L+\beta_V)+\FI (\mu_{0}\| \pi)\right].
    \end{equation}
    Substituting \eqref{eq: temp1* no block} into \eqref{eq: temp0* no block} completes the proof.
\end{proof}

\noindent
Now we are ready to prove Theorem \ref{thm: main no block}.
\begin{proof}
    We will use the interpolation method.
    Let $\Xbm_{t\in[0,T]}$ denote the continuous-time interpolation of the final Picard trajectory $\{\xbm_n^{(K)}\}_{n=0}^N$. 
    Let $\Xbm_{n\gamma} = \xbm_n^{(K)}$.
    For any $t\in[n\gamma,(n+1)\gamma]$, define
    \begin{equation}\label{eq: temp2 no block}
        \Xbm_t:=\xbm_{n}^{(K)}-(t-n\gamma)\Tsfit_\eta(\xbm^{(K-1)}_{n})+\sqrt{2}(\Bbm_t-\Bbm_{n\gamma}).
    \end{equation}
    Now, let $\mu_t$ be the law of $\Xbm_t$, $\mu_t\equiv\mathsf{law}(\Xbm_t)$.
    With respect to the natural filtration, $\Tsfit_{\eta}(\xbm_n^{(K-1)})$ is $\mathcal{F}_{n\gamma}$- measurable and constant in time on this interval. The marginal process therefore has conditional drift $-\mathbb{E}\left[\Tsfit_\eta(\xbm_n^{(K-1)}) | \Xbm_t\right]$. 
    Applying the entropy-dissipation identity and conditional Jensen's inequality, as in \cite[proof of Lemma 3]{vempala2019rapid}, gives:
    \begin{equation}
        \partial_t \KL(\mu_t\|\pi) = -\FI(\mu_t\|\pi) + \mathbb{E}\left\langle \nabla L(\Xbm_t)+\nabla V(\Xbm_t)-\Tsfit_\eta(\xbm^{(K-1)}_{n}), \nabla \log \displaystyle \frac{\mu_t(\Xbm_t)}{\pi(\Xbm_t)}\right\rangle.
    \end{equation}
    Thus we have that 
    \begin{equation}\label{eq: temp* no block}
        \partial_t \KL(\mu_t\|\pi) \le -\displaystyle\frac{3}{4}\FI(\mu_t\|\pi) + \mathbb{E}\left\| \nabla L(\Xbm_t)+\nabla V(\Xbm_t)-\Tsfit_\eta(\xbm^{(K-1)}_{n})\right\|^2.
    \end{equation}
    We need to bound the last term. 
    \begin{equation}
        \begin{array}{rl}
             &\mathbb{E}\|\nabla L(\Xbm_t)+\nabla V(\Xbm_t)-\Tsfit_\eta (\xbm^{(K-1)}_{n})\|^2  \\
             \le & 2\mathbb{E}\| \nabla L(\Xbm_t)+\nabla V(\Xbm_t)-\nabla L(\xbm^{(K-1)}_{n})-\nabla V(\xbm^{(K-1)}_{n})\|^2 \\
             &+ 2\mathbb{E}\|\nabla L(\xbm^{(K-1)}_{n})+\nabla V(\xbm^{(K-1)}_{n})-\Tsfit_\eta(\xbm^{(K-1)}_{n})\|^2.
        \end{array}
    \end{equation}
    We bound each of the two terms.
    For the first term,
    \begin{equation}
        \mathbb{E}\|\nabla L(\Xbm_t)+\nabla V(\Xbm_t)-\nabla L(\xbm^{(K-1)}_{n})-\nabla V(\xbm^{(K-1)}_{n})\|^2\le (\beta_L+\beta_V)^2\mathbb{E}\|\Xbm_t-\xbm^{(K-1)}_{n}\|^2.
    \end{equation}
    For the second term, note that
    \begin{equation}
    \nabla L(\xbm) + \nabla V(\xbm)
        -\Tsfit_\eta(\xbm)=\nabla V(\xbm)-\Ssfit_\theta(\xbm;\sigma)+\nabla L(\xbm)-\nabla L(\zbm),
    \end{equation}
    where $\zbm=(I+\eta \nabla L)^{-1}\bigl(\xbm-\eta \Ssfit_\theta(\xbm;\sigma)\bigr)$. 
    Thus, 
    \begin{equation}
    \begin{array}{rl}
         & \|\nabla L(\xbm) + \nabla V(\xbm)-\Tsfit_\eta(\xbm)\|^2 \\
         \le &2\|\nabla V(\xbm)-\Ssfit_\theta(\xbm;\sigma)\|^2 + 2\|\nabla L(\xbm)-\nabla L(\zbm)\|^2 \\
         \le & 2\delta^2 + 2\beta_L^2 \|\xbm-\zbm\|^2.
    \end{array}
    \end{equation}
    By Lemma \ref{lemma: technical lemma FBS}, 
    \begin{equation}
        \displaystyle \|\Tsfit_\eta(\xbm)\|^2 \le \frac{3\delta^2 + 3\|\nabla L(\xbm) + \nabla V(\xbm)\|^2}{1-3\eta^2\beta_L^2}.
    \end{equation}
    Thus,
    \begin{equation}
    \begin{array}{rl}
         \left\|\nabla L(\xbm) + \nabla V(\xbm)-\Tsfit_\eta(\xbm)\right\|^2
         \leq & \displaystyle\frac{2 \delta^2}{1-3 \eta^2 \beta_L^2}+\frac{6 \eta^2 \beta_L^2}{1-3 \eta^2 \beta_L^2}\|\nabla L(\xbm) + \nabla V(\xbm)\|^2 \\
         = &c_{\mathsf{score}} + c_{\mathsf{split}} \|\nabla L(\xbm) + \nabla V(\xbm)\|^2. 
    \end{array}
    \end{equation}
    We have
    \begin{equation}
    \begin{array}{rl}
         &\mathbb{E}\|\nabla L(\Xbm_t)+\nabla V(\Xbm_t)-\Tsfit_\eta (\xbm^{(K-1)}_{n})\|^2 \vspace{1ex} \\
         \le & 2(\beta_L+\beta_V)^2\mathbb{E}\|\Xbm_t-\xbm^{(K-1)}_{n}\|^2 + 2c_{\mathsf{score}} + 2c_{\mathsf{split}}\mathbb{E}\|\nabla L(\xbm^{(K-1)}_{n})+\nabla V(\xbm^{(K-1)}_{n})\|^2.
    \end{array}
    \end{equation}
    Now we bound $\mathbb{E}\|\nabla L(\xbm^{(K-1)}_{n})+\nabla V(\xbm^{(K-1)}_{n})\|^2$:
    \begin{equation}
    \begin{array}{rl}
        &\mathbb{E}\|\nabla L(\xbm^{(K-1)}_{n})+\nabla V(\xbm^{(K-1)}_{n})\|^2 \\
        
        \le & 2\mathbb{E}\|\nabla L(\xbm^{(K-1)}_{n})+\nabla V(\xbm^{(K-1)}_{n})-\nabla L(\Xbm_t)-\nabla V(\Xbm_t)\|^2+2\mathbb{E}\|\nabla L(\Xbm_t)+\nabla V(\Xbm_t)\|^2\\
        \le & 2(\beta_L+\beta_V)^2\mathbb{E}\|\xbm^{(K-1)}_{n}-\Xbm_t\|^2+2\mathbb{E}\|\nabla L(\Xbm_t)+\nabla V(\Xbm_t)\|^2.
    \end{array}
    \end{equation}
    and
    \begin{equation}
    \begin{array}{rl}
         &\mathbb{E}\|\nabla L(\Xbm_t)+\nabla V(\Xbm_t)-\Tsfit_\eta (\xbm^{(K-1)}_{n})\|^2 \vspace{1ex} \\
         \le & \left[2(\beta_L+\beta_V)^2+4c_{\mathsf{split}}(\beta_L+\beta_V)^2\right]\mathbb{E}\|\Xbm_t-\xbm^{(K-1)}_{n}\|^2 \\
         &+ 4c_{\mathsf{split}}\mathbb{E}\|\nabla L(\Xbm_t)+\nabla V(\Xbm_t)\|^2 + 2c_{\mathsf{score}}.
    \end{array}
    \end{equation}
    By the definition of $A_1$, we have
    \begin{equation}
    \begin{array}{rl}
         & \mathbb{E}\|\nabla L(\Xbm_t)+\nabla V(\Xbm_t)-\Tsfit_\eta (\xbm^{(K-1)}_{n})\|^2 \\
         \le &A_1\mathbb{E}\|\Xbm_t-\xbm^{(K-1)}_{n}\|^2 + 4c_{\mathsf{split}}\mathbb{E}\|\nabla L(\Xbm_t)+\nabla V(\Xbm_t)\|^2 + 2c_{\mathsf{score}}. 
    \end{array}
    \end{equation}
    Now we consider $\mathbb{E}\|\Xbm_t-\xbm^{(K-1)}_{n}\|^2$.
    \begin{equation}
        \Xbm_t-\xbm^{(K-1)}_{n}=\xbm^{(K)}_{n}-\xbm^{(K-1)}_{n}-(t-n\gamma)\Tsfit_\eta(\xbm^{(K-1)}_{n})+\sqrt{2}(\Bbm_t-\Bbm_{n\gamma}).
    \end{equation}
    \begin{equation}
        \mathbb{E}\|\Xbm_t-\xbm^{(K-1)}_{n}\|^2\le 3\mathbb{E}\|\xbm^{(K)}_{n}-\xbm^{(K-1)}_{n}\|^2+3\gamma^2\mathbb{E}\|\Tsfit_\eta(\xbm^{(K-1)}_{n})\|^2+6\gamma d.
    \end{equation}
    By Lemma \ref{lemma: technical lemma FBS},
    \begin{equation}
        \begin{array}{rl}
            \mathbb{E}\|\Tsfit_\eta(\xbm^{(K-1)}_{n})\|^2\le&\displaystyle \frac{3\delta^2+3\mathbb{E}\|\nabla L(\xbm^{(K-1)}_{n})+\nabla V(\xbm^{(K-1)}_{n})\|^2}{1-3\eta^2\beta_L^2}\vspace{1ex}\\
            \le &\displaystyle\frac{3\delta^2+6(\beta_L+\beta_V)^2\mathbb{E}\|\xbm^{(K-1)}_{n}-\Xbm_t\|^2+6\mathbb{E}\|\nabla L(\Xbm_t)+\nabla V(\Xbm_t)\|^2}{1-3\eta^2\beta_L^2};
        \end{array}
    \end{equation}
    \begin{equation}
    \begin{array}{rl}
        &\mathbb{E}\|\Xbm_t-\xbm^{(K-1)}_{n}\|^2\\
        \le & 3\mathbb{E}\|\xbm^{(K)}_{n}-\xbm^{(K-1)}_{n}\|^2+6\gamma d\\
        &\displaystyle+\frac{9\gamma^2\delta^2+18\gamma^2(\beta_L+\beta_V)^2\mathbb{E}\|\xbm^{(K-1)}_{n}-\Xbm_t\|^2+18\gamma^2\mathbb{E}\|\nabla L(\Xbm_t)+\nabla V(\Xbm_t)\|^2}{1-3\eta^2\beta_L^2}.
    \end{array}
    \end{equation}
    Rearranging this, we obtain
    \begin{equation}
        \begin{array}{rl}
             & \mathbb{E}\|\Xbm_t-\xbm^{(K-1)}_{n}\|^2\\
             \le &3\rho\mathbb{E}\|\xbm^{(K)}_{n}-\xbm^{(K-1)}_{n}\|^2+\rho\left(\frac{9\gamma^2\delta^2}{1-3\eta^2\beta_L^2}+6\gamma d\right)+\frac{18\rho \gamma^2}{1-3\eta^2\beta_L^2}\mathbb{E}\|\nabla L(\Xbm_t) + \nabla V(\Xbm_t)\|^2. 
        \end{array}
    \end{equation}
    Thus,
    \begin{equation}
    \begin{array}{rl}
         &\mathbb{E}\|\nabla L(\Xbm_t) + \nabla V(\Xbm_t)-\Tsfit_\eta (\xbm^{(K-1)}_{n})\|^2 \vspace{1ex} \\
         \le & \displaystyle3\rho A_1\mathbb{E}\|\xbm^{(K)}_{n}-\xbm^{(K-1)}_{n}\|^2 + A_1\rho\left(\frac{9\gamma^2\delta^2}{1-3\eta^2\beta_L^2}+6\gamma d\right) \\
         &\displaystyle+ \frac{18\rho A_1 \gamma^2}{1-3\eta^2\beta_L^2}\mathbb{E}\|\nabla L(\Xbm_t) + \nabla V(\Xbm_t)\|^2+ 4c_{\mathsf{split}}\mathbb{E}\|\nabla L(\Xbm_t) + \nabla V(\Xbm_t)\|^2 + 2c_{\mathsf{score}}.
    \end{array}
    \end{equation}
    The coefficient of $\mathbb{E}\|\nabla L(\Xbm_t) + \nabla V(\Xbm_t)\|^2$ is $A_4$, the constant term is $A_5$. That is,
    \begin{equation}
    \begin{array}{rl}
         & \mathbb{E}\|\nabla L(\Xbm_t) + \nabla V(\Xbm_t)-\Tsfit_\eta (\xbm^{(K-1)}_{n})\|^2 \\ \le &3\rho A_1\mathbb{E}\|\xbm^{(K)}_{n}-\xbm^{(K-1)}_{n}\|^2 + A_4\mathbb{E}\|\nabla L(\Xbm_t) + \nabla V(\Xbm_t)\|^2 + A_5. \\
         & 
    \end{array}
    \end{equation}
    By Lemma \ref{lemma: Lipschitz and FI},
    \begin{equation}
        \mathbb{E}\|\nabla L(\Xbm_t) + \nabla V(\Xbm_t)\|^2\le \FI(\mu_t\|\pi)+2d(\beta_L+\beta_V).
    \end{equation}
    Thus we have
    \begin{equation}
    \begin{array}{rl}
         &  \mathbb{E}\|\nabla L(\Xbm_t) + \nabla V(\Xbm_t)-\Tsfit_\eta (\xbm^{(K-1)}_{n})\|^2 \\
         \le &3\rho A_1\mathbb{E}\|\xbm^{(K)}_{n}-\xbm^{(K-1)}_{n}\|^2 + A_4\FI(\mu_t\|\pi) + A_5 + 2A_4d(\beta_L+\beta_V). 
    \end{array}
    \end{equation}
    Back to \eqref{eq: temp* no block},
    \begin{equation}
    \begin{array}{rl}
        \partial_t\KL(\mu_t\|\pi) &\le -\frac{3}{4}\FI(\mu_t\|\pi) + 3\rho A_1\mathbb{E}\|\xbm^{(K)}_{n}-\xbm^{(K-1)}_{n}\|^2 + A_4\FI(\mu_t\|\pi) + A_5 + 2A_4d(\beta_L+\beta_V) \\
        &= \left(A_4-\frac{3}{4}\right)\FI(\mu_t\|\pi) + 3\rho A_1\mathbb{E}\|\xbm^{(K)}_{n}-\xbm^{(K-1)}_{n}\|^2 + A_5 + 2A_4d(\beta_L+\beta_V).
    \end{array}
    \end{equation}
    By Lemma \ref{lemma: error decay no block},
    \begin{equation}
        \mathbb{E}\|\xbm^{(K)}_{n}-\xbm^{(K-1)}_{n}\|^2\le B_2 + B_3\cdot\FI(\mu_0\|\pi).
    \end{equation}
    Thus,
    \begin{equation}
        \partial_t\KL(\mu_t\|\pi) \le (A_4-0.75)\FI(\mu_t\|\pi) + A + 3\rho A_1 B_3\cdot\FI(\mu_0\|\pi),
    \end{equation}
    where $A = 3\rho A_1 B_2 + A_5 + 2A_4d(\beta_L+\beta_V)$.
    Now we integrate over $t\in[0, T]$:
    \begin{equation}
        \KL(\mu_T\|\pi)-\KL(\mu_0\|\pi) \le (A_4-0.75)\int_0^T\FI(\mu_t\|\pi)dt + AT + 3\rho T A_1 B_3\cdot\FI(\mu_0\|\pi).
    \end{equation}
    Note that $\KL(\mu_T\|\pi)\ge 0$, thus
    \begin{equation}
        (0.75-A_4)\frac{1}{T}\int_0^T\FI(\mu_t\|\pi)dt \le \frac{1}{T}\KL(\mu_0\|\pi) + A + 3\rho A_1 B_3\cdot\FI(\mu_0\|\pi).
    \end{equation}
    This completes the proof.
\end{proof}

\section{Sufficient conditions for Assumption \ref{ass: multi-block stability}}\label{app:diss}
We present Lemma \ref{lemma: sufficient multi-block stability} as an example for Assumption \ref{ass: multi-block stability} to hold.
\begin{lemma}[Sufficient condition for Assumption \ref{ass: multi-block stability}]
\label{lemma: sufficient multi-block stability}
Suppose Assumptions \ref{ass: L and V 1}-\ref{ass: accurate score 2} hold. Also suppose that $\mathbb{E}\|\xbm_0\|^2<\infty$.
Assume additionally that there exist constants $\kappa>\alpha_L+\alpha_V$ and $b\geq0$ such that
\begin{equation}\label{eq: quadratic coercivity}
    L(\xbm)+V(\xbm)\geq \frac{\kappa}{2}\|\xbm\|^2-b,\qquad \forall \xbm\in\mathbb{R}^d.
\end{equation}
Then, there exist $\eta_0,\gamma_0>0$ such that, for any $0<\eta\leq\eta_0$ and $0<\gamma\leq\gamma_0$ satisfying $q_{\mathsf{mult}}=\tau\Lambda_\eta<1$, there exists $K_0<\infty$ such that, for every $K\geq K_0$, the block starting points generated by multi-block {\proposed} satisfy
\begin{equation}
    \sup_{m\geq0}\mathbb{E}\|\xbm^{(0)}_{mN}\|^2\leq C_\xbm
\end{equation}
for some finite constant $C_\xbm>0$ independent of the number of blocks $M$ and the total time horizon $T$.
Hence, Assumption~\ref{ass: multi-block stability} holds.
\end{lemma}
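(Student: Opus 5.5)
Let $U\defn L+V$. The plan is a block-to-block Lyapunov recursion for $\E\|\xbm_{mN}\|^2$: first for an exact (sequential) block of X-MC, then for the Picard output as a perturbation of it.

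\textbf{Step 1: from coercivity to dissipativity.} $U$ is $(\alpha_L+\alpha_V)$-weakly convex, so $U+\frac{\alpha_L+\alpha_V}{2}\|\cdot\|^2$ is convex. Comparing this convex function at $0$ and at $\xbm$, and inserting the coercivity bound \eqref{eq: quadratic coercivity}, gives
\begin{equation*}
\langle \nabla U(\xbm),\xbm\rangle \ge (\kappa-\alpha_L-\alpha_V)\|\xbm\|^2 - b'\qquad\text{for some } b'\ge 0
\end{equation*}
(using the quadratic-growth form of coercivity). Set $\kappa'\defn\kappa-\alpha_L-\alpha_V>0$. By Lemma~\ref{lemma: technical lemma drift} and the $(\beta_L+\beta_V)$-Lipschitz bound $\|\nabla U(\xbm)\|\le \|\nabla U(0)\|+(\beta_L+\beta_V)\|\xbm\|$,
\begin{equation*}
\|\Tsfit_\eta(\xbm)-\nabla U(\xbm)\|\le c_1\delta+c_2\eta\|\xbm\|+c_3\eta .
\end{equation*}
Hence for $\eta\le\eta_0$ small the drift is still dissipative:
\begin{equation*}
\langle \Tsfit_\eta(\xbm),\xbm\rangle\ge \tfrac{\kappa'}{2}\|\xbm\|^2-b'' ,
\end{equation*}
where $b''$ absorbs $\delta$ through Young's inequality. $\Tsfit_\eta$ also grows at most linearly, since it is $\Lambda_\eta$-Lipschitz.

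\textbf{Step 2: contraction of the sequential chain over one block.} Let $\tilde\xbm_{n}$ denote the exact X-MC iterates inside block $m$, started at $\xbm_{mN}$ and driven by the same Brownian increments. This is the fixed point of the in-block Picard map. Expanding one step,
\begin{equation*}
\E\|\tilde\xbm_{n+1}\|^2 = \E\|\tilde\xbm_n\|^2-2\gamma\E\langle \Tsfit_\eta(\tilde\xbm_n),\tilde\xbm_n\rangle+\gamma^2\E\|\Tsfit_\eta(\tilde\xbm_n)\|^2+2\gamma d .
\end{equation*}
With Step 1 this yields
\begin{equation*}
\E\|\tilde\xbm_{n+1}\|^2\le (1-\gamma\kappa'/2)\,\E\|\tilde\xbm_n\|^2+C\gamma
\end{equation*}
for $\gamma\le\gamma_0$. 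Iterating over the $N$ steps of a block gives
\begin{equation*}
\E\|\tilde\xbm_{(m+1)N}\|^2\le \theta\,\E\|\xbm_{mN}\|^2+C', \qquad \theta=(1-\gamma\kappa'/2)^N<1 .
\end{equation*}

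\textbf{Step 3: Picard perturbation, the main obstacle.} One must bound $\E\|\xbm^{(K)}_{(m+1)N}-\tilde\xbm_{(m+1)N}\|^2$ relative to $\E\|\xbm_{mN}\|^2$, and no a priori moment bound is available here; Lemma~\ref{lemma: error decay no LSI maintext} cannot be used because it presupposes Assumption~\ref{ass: multi-block stability}. So I would redo its contraction directly against the fixed point. Exactly as in the proof of Lemma~\ref{lemma: error decay no block},
\begin{equation*}
\max_i\E\|\xbm^{(k)}_{mN+i}-\tilde\xbm_{mN+i}\|^2\le q_{\mathsf{mult}}^{2k}\max_i\E\|\xbm_{mN}-\tilde\xbm_{mN+i}\|^2 .
\end{equation*}
The initial gap $\E\|\xbm_{mN}-\tilde\xbm_{mN+i}\|^2$ is at most $2\tau d+\tau^2\E\|\Tsfit_\eta(\cdot)\|^2$ along the block. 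By linear growth and a discrete Grönwall argument over a block of length $\tau$, this is at most $C_\tau(1+\E\|\xbm_{mN}\|^2)$.

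\textbf{Step 4: closing the recursion.} Combining Steps 2 and 3 with $(a+b)^2\le(1+\epsilon)a^2+(1+1/\epsilon)b^2$ gives
\begin{equation*}
\E\|\xbm_{(m+1)N}\|^2\le \bigl[(1+\epsilon)\theta+(1+1/\epsilon)C_\tau q_{\mathsf{mult}}^{2K}\bigr]\,\E\|\xbm_{mN}\|^2+C'' .
\end{equation*}
Choose $\epsilon$ so that $(1+\epsilon)\theta<(1+\theta)/2$, then choose $K_0$ so that the bracket is below $1$ for all $K\ge K_0$; this is possible because $q_{\mathsf{mult}}<1$. Induction over $m$ then yields
\begin{equation*}
\sup_m\E\|\xbm_{mN}\|^2\le \max\{\E\|\xbm_0\|^2,\ C''/(1-\text{bracket})\}\defn C_\xbm ,
\end{equation*}
which is independent of $M$ and $T$.
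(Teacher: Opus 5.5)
Your proposal is correct and follows essentially the same route as the paper: dissipativity of $\nabla U$ from weak convexity plus coercivity, transfer to $\Tsfit_\eta$ via Lemma~\ref{lemma: technical lemma drift} and Young's inequality, a one-block moment contraction for the sequential X-MC reference chain driven by the same Brownian increments, Picard contraction toward that fixed point, and a weighted Young inequality with $K_0$ chosen so the block-to-block factor is below one. Two cosmetic corrections, neither affecting the argument. First, the weak-convexity/coercivity comparison yields $\langle\nabla U(\xbm),\xbm\rangle\ge\frac{\kappa-\alpha_L-\alpha_V}{2}\|\xbm\|^2-b'$, which is half your stated constant. Second, the initial Picard gap is more simply bounded, as the paper does, by $2\E\|\xbm_{mN}\|^2+2\max_i\E\|\widetilde{\xbm}_{m,i}\|^2$ using your Step~2 moment bound; this avoids Gr\"onwall, and it also avoids the drift--noise cross term that your estimate $2\tau d+\tau^2\E\|\Tsfit_\eta\|^2$ silently drops.
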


\begin{proof}
We first show that the quadratic coercivity condition implies dissipativity of the ideal posterior drift.
Since $L+V$ is $(\alpha_L+\alpha_V)$-weakly convex, for any $\xbm\in\mathbb{R}^d$,
\begin{equation}
    L(\zerobm)+V(\zerobm) \geq L(\xbm)+V(\xbm)-\langle \nabla L(\xbm)+\nabla V(\xbm),\xbm\rangle-\frac{\alpha_L+\alpha_V}{2}\|\xbm\|^2.
\end{equation}
Combining this inequality with \eqref{eq: quadratic coercivity} gives
\begin{equation}
\begin{array}{rl}
    \langle \xbm,\nabla L(\xbm)+\nabla V(\xbm)\rangle
    &\geq L(\xbm)+V(\xbm)-L(\zerobm)-V(\zerobm)-\displaystyle\frac{\alpha_L+\alpha_V}{2}\|\xbm\|^2\\[0.5ex]
    &\geq\displaystyle\frac{\kappa-\alpha_L-\alpha_V}{2}\|\xbm\|^2-b-|L(\zerobm)+V(\zerobm)|.
\end{array}
\end{equation}
Then,
\begin{equation}\label{eq: ideal drift dissipativity}
    \langle \xbm,\nabla L(\xbm)+\nabla V(\xbm)\rangle
    \geq a\|\xbm\|^2-b_0, \ a\defn\frac{\kappa-\alpha_L-\alpha_V}{2}>0,\ b_0\defn b+|L(\zerobm)+V(\zerobm)|.
\end{equation}
We next transfer this dissipativity property to the proximal drift $\Tsfit_\eta$.
By Lemma~\ref{lemma: technical lemma drift},
\begin{equation}
    \left\|\Tsfit_\eta(\xbm)-\big[\nabla L(\xbm)+\nabla V(\xbm)\big]\right\|^2\le c_{\mathsf{score}}+c_{\mathsf{split}}\|\nabla L(\xbm)+\nabla V(\xbm)\|^2.
\end{equation}
Since $\nabla L+\nabla V$ is $(\beta_L+\beta_V)$-Lipschitz,
\begin{equation}
    \|\nabla L(\xbm)+\nabla V(\xbm)\|^2\leq 2\|\nabla L(\zerobm)+\nabla V(\zerobm)\|^2+2(\beta_L+\beta_V)^2\|\xbm\|^2.
\end{equation}
Therefore,
\begin{equation}\label{eq: approximation linear growth}
    \left\|
    \Tsfit_\eta(\xbm)
    -\big[\nabla L(\xbm)+\nabla V(\xbm)\big]\right\|^2 \leq A_\eta+B_\eta\|\xbm\|^2,
\end{equation}
where
\begin{equation}
    A_\eta \defn c_{\mathsf{score}}+2c_{\mathsf{split}}\|\nabla L(\zerobm)+\nabla V(\zerobm)\|^2, \qquad B_\eta \defn 2c_{\mathsf{split}}(\beta_L+\beta_V)^2.
\end{equation}
Since $c_{\mathsf{split}}=O(\eta^2)$, we can choose $\eta_0>0$ sufficiently small such that
$B_\eta<a^2$ whenever $0<\eta\leq\eta_0$.
Using \eqref{eq: ideal drift dissipativity},
\eqref{eq: approximation linear growth}, and Young's inequality, we obtain
\begin{equation}
\begin{array}{rl}
    \langle \xbm,\Tsfit_\eta(\xbm)\rangle=&\langle \xbm,\nabla L(\xbm)+\nabla V(\xbm)\rangle+\left\langle\xbm,\Tsfit_\eta(\xbm)-\big[\nabla L(\xbm)+\nabla V(\xbm)\big]\right\rangle\\[0.5ex]
    \geq&a\|\xbm\|^2-b_0-\|\xbm\|\left\|\Tsfit_\eta(\xbm)-\big[\nabla L(\xbm)+\nabla V(\xbm)\big]\right\|\\[0.5ex]
    \geq&
    a\|\xbm\|^2-b_0-\displaystyle\frac{a}{2}\|\xbm\|^2-\displaystyle\frac{1}{2a}\left\|\Tsfit_\eta(\xbm)-\big[\nabla L(\xbm)+\nabla V(\xbm)\big]\right\|^2\\[0.5ex]
    \geq&\displaystyle\frac{a^2-B_\eta}{2a}\|\xbm\|^2-\left(b_0+\frac{A_\eta}{2a}\right).
\end{array}
\end{equation}
Hence, we have
\begin{equation}\label{eq: proximal drift dissipativity}
    \langle \xbm,\Tsfit_\eta(\xbm)\rangle
    \geq a_\eta\|\xbm\|^2-b_\eta, \ a_\eta\defn\frac{a^2-B_\eta}{2a}>0, \ b_\eta\defn b_0+\frac{A_\eta}{2a}.
\end{equation}

We also note that the Lipschitz continuity of $\Tsfit_\eta$ directly implies a linear-growth bound.
Indeed, by Lemma~\ref{lemma: lipschitz drifts},
\begin{equation}\label{eq: proximal drift linear growth}
    \|\Tsfit_\eta(\xbm)\|^2\leq2\|\Tsfit_\eta(\zerobm)\|^2+2\Lambda_\eta^2\|\xbm\|^2.
\end{equation}

We now consider one block of multi-block {\proposed}.
Fix $m\geq0$, and define a sequential reference trajectory
$\{\widetilde{\xbm}_{m,i}\}_{i=0}^{N}$ using the same Brownian increments as the $m$-th Picard block.
Set $\widetilde{\xbm}_{m,0}=\xbm_{mN}=\xbm_{mN}^{(0)}$, and, for $i=0,\ldots,N-1$,
\begin{equation}
    \widetilde{\xbm}_{m,i+1}=\widetilde{\xbm}_{m,i}-\gamma\Tsfit_\eta(\widetilde{\xbm}_{m,i})+\sqrt{2}\left(\Bbm_{(mN+i+1)\gamma}-\Bbm_{(mN+i)\gamma}\right).
\end{equation}
This is precisely the sequential X-MC trajectory over the current block.

Since the Brownian increment over
$[(mN+i)\gamma,(mN+i+1)\gamma]$
is independent of $\widetilde{\xbm}_{m,i}$ and has zero mean, by
\eqref{eq: proximal drift dissipativity} and
\eqref{eq: proximal drift linear growth},
\begin{equation}
    \mathbb{E}\|\widetilde{\xbm}_{m,i+1}\|^2\leq\left(1-2a_\eta\gamma+2\gamma^2\Lambda_\eta^2\right)\mathbb{E}\|\widetilde{\xbm}_{m,i}\|^2+\gamma\left(2b_\eta+2\gamma\|\Tsfit_\eta(\zerobm)\|^2+2d\right).
\end{equation}
Choose $\gamma_0>0$ sufficiently small such that, for
$0<\gamma\leq\gamma_0$, define
\begin{equation}
    r_{\eta,\gamma}\defn1-2a_\eta\gamma+2\gamma^2\Lambda_\eta^2\in(0,1), \ C_{\mathsf{seq}}\defn\frac{\gamma\left(2b_\eta+2\gamma\|\Tsfit_\eta(\zerobm)\|^2+2d\right)}{1-r_{\eta,\gamma}}.
\end{equation}
Iterating the preceding inequality gives, for every $i=0,\ldots,N$,
\begin{equation}\label{eq: sequential reference moment}
    \mathbb{E}\|\widetilde{\xbm}_{m,i}\|^2\leq r_{\eta,\gamma}^{\,i}\mathbb{E}\|\xbm_{mN}\|^2+C_{\mathsf{seq}}\leq\mathbb{E}\|\xbm_{mN}\|^2
    +C_{\mathsf{seq}}.
\end{equation}

It remains to control the finite-Picard approximation of this sequential block.
For convenience, set
$\xbm_{mN}^{(k)}=\xbm_{mN}$ for all $k=0,\ldots,K$, and define
\begin{equation}
    D_{m,k}\defn\max_{0\leq i\leq N}\mathbb{E}\left\|\xbm_{mN+i}^{(k)}-\widetilde{\xbm}_{m,i}\right\|^2.
\end{equation}
The sequential reference trajectory satisfies the fixed-point representation
\begin{equation}
    \widetilde{\xbm}_{m,i}=\xbm_{mN}-\gamma\sum_{j=0}^{i-1}\Tsfit_\eta(\widetilde{\xbm}_{m,j})+\sqrt{2}\left(\Bbm_{(mN+i)\gamma}-\Bbm_{mN\gamma}\right).
\end{equation}
Comparing this identity with the Picard update in
Algorithm~\ref{alg: pix multi block}, we obtain, for $i=1,\ldots,N$,
\begin{equation}
\begin{array}{rl}
    \mathbb{E}\left\|\xbm_{mN+i}^{(k+1)}-\widetilde{\xbm}_{m,i}\right\|^2\leq&i\gamma^2\displaystyle\sum_{j=0}^{i-1}\mathbb{E}\left\|\Tsfit_\eta(\xbm_{mN+j}^{(k)})-\Tsfit_\eta(\widetilde{\xbm}_{m,j})\right\|^2\\[0.5ex]
    \leq&i\gamma^2\Lambda_\eta^2\displaystyle\sum_{j=0}^{i-1}\mathbb{E}\left\|\xbm_{mN+j}^{(k)}-\widetilde{\xbm}_{m,j}\right\|^2
    \leq\tau^2\Lambda_\eta^2D_{m,k}=q_{\mathsf{mult}}^2D_{m,k}.
\end{array}
\end{equation}
Taking the maximum over $i$ yields the following contraction that comes directly from the Lipschitz continuity of $\Tsfit_\eta$:
\begin{equation}\label{eq: fixed point picard contraction}
    D_{m,k+1}\leq q_{\mathsf{mult}}^2D_{m,k}.
\end{equation}
Since the zeroth Picard trajectory satisfies
$\xbm_{mN+i}^{(0)}=\xbm_{mN}$,
\eqref{eq: sequential reference moment} gives
\begin{equation}
    D_{m,0}\leq 2\mathbb{E}\|\xbm_{mN}\|^2+2\displaystyle\max_{0\leq i\leq N}\mathbb{E}\|\widetilde{\xbm}_{m,i}\|^2\leq 4\mathbb{E}\|\xbm_{mN}\|^2+2C_{\mathsf{seq}}.
\end{equation}
Therefore, by \eqref{eq: fixed point picard contraction},
\begin{equation}\label{eq: finite picard fixed point error}
    D_{m,K}\leq q_{\mathsf{mult}}^{2K}\left(4\mathbb{E}\|\xbm_{mN}\|^2+2C_{\mathsf{seq}}\right).
\end{equation}
Finally, Algorithm~\ref{alg: pix multi block} sets
$\xbm_{(m+1)N}=\xbm_{(m+1)N}^{(K)}$.
For any $\upsilon>0$, Young's inequality, together with
\eqref{eq: sequential reference moment} for $i=N$ and
\eqref{eq: finite picard fixed point error}, gives
\begin{equation}
\begin{array}{rl}
    \mathbb{E}\|\xbm_{(m+1)N}\|^2\leq&(1+\upsilon)\mathbb{E}\|\widetilde{\xbm}_{m,N}\|^2+(1+\upsilon^{-1})\mathbb{E}\left\|\xbm_{(m+1)N}^{(K)}-\widetilde{\xbm}_{m,N}\right\|^2\\[0.5ex]
    \leq&\left[(1+\upsilon)r_{\eta,\gamma}^{\,N}+4(1+\upsilon^{-1})q_{\mathsf{mult}}^{2K}\right]\mathbb{E}\|\xbm_{mN}\|^2\\
    &+ \left[(1+\upsilon)+2(1+\upsilon^{-1})q_{\mathsf{mult}}^{2K}\right]C_{\mathsf{seq}}.
\end{array}
\end{equation}
Since $r_{\eta,\gamma}^{\,N}<1$, choose $\upsilon>0$ sufficiently small such that $(1+\upsilon)r_{\eta,\gamma}^{\,N}<1$.
Since $q_{\mathsf{mult}}<1$, there exists a finite $K_0$ such that $(1+\upsilon)r_{\eta,\gamma}^{\,N}+4(1+\upsilon^{-1})q_{\mathsf{mult}}^{2K_0}<1$.
Hence, for every $K\geq K_0$, there exist constants
$\chi\in(0,1)$ and $C<\infty$, independent of the block index $m$, such that
\begin{equation}\label{eq: block moment contraction}
    \mathbb{E}\|\xbm_{(m+1)N}\|^2\leq\chi\mathbb{E}\|\xbm_{mN}\|^2+C.
\end{equation}
Iterating \eqref{eq: block moment contraction} over the block index gives
\begin{equation}
    \mathbb{E}\|\xbm_{mN}\|^2\leq\chi^m\mathbb{E}\|\xbm_0\|^2+C\displaystyle\sum_{j=0}^{m-1}\chi^j\leq\mathbb{E}\|\xbm_0\|^2+\displaystyle\frac{C}{1-\chi}.
\end{equation}
Therefore,
\begin{equation}
    \sup_{m\geq0}\mathbb{E}\|\xbm_{mN}\|^2\leq\mathbb{E}\|\xbm_0\|^2+\frac{C}{1-\chi}\defn C_\xbm<\infty.
\end{equation}
The constant $C_\xbm$ does not depend on $M$ or $T=M\tau$.
This proves Assumption \ref{ass: multi-block stability}.
\end{proof}

\section{Proof of multi-block PiX-MC}\label{app:C}
We present the convergence result of multi-block PiX-MC.
Before that, we define some essential constants.
Recall the constants $\rho,A_1,A_4$, and $A_5$ defined in Appendix \ref{app:thm1}. 
Under Assumption \ref{ass: multi-block stability}, define 
\begin{equation}
    G^2\defn2\|\nabla L(\zerobm)+\nabla V(\zerobm)\|^2+2(\beta_L+\beta_V)^2 C_{\xbm}.
\end{equation}
Since $\nabla L+\nabla V$ is $(\beta_L+\beta_V)$-Lipschitz, Assumption \ref{ass: multi-block stability} yields
\begin{equation}
\sup\limits_{0 \leq m \leq M-1} \mathbb{E}\left[\left\|\nabla L\left(\xbm_{m N}\right)+\nabla V\left(\xbm_{m N}\right)\right\|^2\right] \leq G^2 .
\end{equation}
We define the constant $D$.
\begin{equation}
    D= 3\rho A_1 q_{\mathsf{mult}}^{2K-2}R+A_5+2A_4 d(\beta_L+\beta_V).
\end{equation}
Recall that $q_{\mathsf{mult}}=\tau\Lambda_\eta$ is the in-block contraction factor introduced in Section \ref{sec: 3.3 multi}. 
\begin{theorem}[Detailed version of Theorem \ref{thm: main no LSI simple}]\label{thm: main no LSI}
Let $(\mu_t)_{t\geq0}$ denote the law of the continuous interpolation of $\{\Xn^{(K)}\}_{n=0}^{N}$ generated by multi-block {\proposed}.
Suppose Assumptions \ref{ass: L and V 1}-\ref{ass: multi-block stability} hold.
Let the parameters $\eta,\gamma$ be chosen sufficiently small such that
\begin{equation}
    1-\eta \alpha_L>0, 1-3\eta^2\beta_L^2>0,
     1-\frac{18 \gamma^2\left(\beta_L+\beta_V\right)^2}{1-3 \eta^2 \beta_L^2}>0, 
    q_{\mathsf{mult}}<1, A_4<0.75.
\end{equation}
Then,
\begin{equation}
    \left(0.75-A_4\right)
    \frac{1}{T}\int_{0}^{T}\FI(\mu_t\|\pi)dt
    \le
    \frac{\KL(\mu_{0}\|\pi)}{T}
    +
    D.
\end{equation}
\end{theorem}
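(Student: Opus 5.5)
The plan is to repeat the interpolation argument from the proof of Theorem~\ref{thm: main no block} block by block. The only new ingredient is the bound on the final Picard residual, which will come from Lemma~\ref{lemma: error decay no LSI maintext} instead of Lemma~\ref{lemma: error decay no block}.

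\textbf{Interpolation and entropy dissipation.} Write each global node as $n=mN+i$ with $0\le i\le N-1$. On $[n\gamma,(n+1)\gamma]$ we define the continuous interpolation
\[
\Xbm_t=\xbm_n^{(K)}-(t-n\gamma)\Tsfit_\eta(\xbm_n^{(K-1)})+\sqrt{2}(\Bbm_t-\Bbm_{n\gamma}),
\]
with the convention $\xbm_{mN}^{(k)}=\xbm_{mN}$ for all $k$ at block starts. Two consistency checks are needed. First, $\Xbm_{(n+1)\gamma}=\xbm_{n+1}^{(K)}$ inside a block, by the in-block Picard update. Second, at a block boundary the value $\xbm_{(m+1)N}^{(K)}$ is exactly the next block's starting point, so the interpolation is continuous across blocks. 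The drift is $\mathcal F_{n\gamma}$-measurable and constant on each subinterval. The entropy-dissipation identity and Young's inequality therefore give, exactly as in \eqref{eq: temp* no block},
\[
\partial_t\KL(\mu_t\|\pi)\le -\tfrac34\FI(\mu_t\|\pi)+\E\|\nabla L(\Xbm_t)+\nabla V(\Xbm_t)-\Tsfit_\eta(\xbm_n^{(K-1)})\|^2 .
\]

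\textbf{Bounding the drift mismatch.} The chain of estimates in the single-block proof never used that the trajectory started from a common point. It used only four ingredients: Lemma~\ref{lemma: technical lemma drift}, Lemma~\ref{lemma: technical lemma FBS}, the $(\beta_L+\beta_V)$-Lipschitzness of $\nabla L+\nabla V$, and Lemma~\ref{lemma: Lipschitz and FI}. These apply verbatim here and yield
\[
\E\|\cdots\|^2\le 3\rho A_1\E\|\xbm_n^{(K)}-\xbm_n^{(K-1)}\|^2+A_4\FI(\mu_t\|\pi)+A_5+2A_4d(\beta_L+\beta_V).
\]
For $i\ge1$, Lemma~\ref{lemma: error decay no LSI maintext} gives $\E\|\xbm_n^{(K)}-\xbm_n^{(K-1)}\|^2\le\varepsilon_{m,K}\le q_{\mathsf{mult}}^{2K-2}R$. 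For $i=0$ the difference vanishes by the convention above. Substituting, we get
\[
\partial_t\KL(\mu_t\|\pi)\le (A_4-0.75)\FI(\mu_t\|\pi)+D .
\]
Integrating over $[0,T]$ and dropping $\KL(\mu_T\|\pi)\ge0$ gives the claim.

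\textbf{Main obstacle.} The substantive work is the proof of Lemma~\ref{lemma: error decay no LSI maintext}, in particular the initial residual $\varepsilon_{m,1}$. The contraction step $\varepsilon_{m,k+1}\le q_{\mathsf{mult}}^2\varepsilon_{m,k}$ is the same Cauchy--Schwarz/Lipschitz computation as before, with $n\gamma\le\tau$.

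For the initial residual we have $\xbm_{mN+i}^{(1)}-\xbm_{mN}=-i\gamma\Tsfit_\eta(\xbm_{mN})+\sqrt2(\Bbm_{(mN+i)\gamma}-\Bbm_{mN\gamma})$. The Brownian increment is independent of $\xbm_{mN}$, so the cross term vanishes and
\[
\varepsilon_{m,1}\le\tau^2\E\|\Tsfit_\eta(\xbm_{mN})\|^2+2\tau d .
\]
We cannot use $\FI(\mu_{m\tau}\|\pi)$ here, since it is not controlled uniformly in $m$. Instead we apply Lemma~\ref{lemma: technical lemma FBS} and then Assumption~\ref{ass: multi-block stability} through the Lipschitz bound $\E\|\nabla L(\xbm_{mN})+\nabla V(\xbm_{mN})\|^2\le G^2$. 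This gives
\[
\E\|\Tsfit_\eta(\xbm_{mN})\|^2\le \frac{3(\delta^2+G^2)}{1-3\eta^2\beta_L^2},
\]
hence $\varepsilon_{m,1}\le R$. Induction then gives $\varepsilon_{m,K}\le q_{\mathsf{mult}}^{2K-2}R$, uniformly in $m$ and independent of $T$, which closes the argument.
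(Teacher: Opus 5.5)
Your proposal is correct and follows the paper's proof essentially step for step. It uses the same node-wise interpolation with the convention $\xbm_{mN}^{(k)}=\xbm_{mN}$ and reuses the single-block local drift-mismatch estimate unchanged. The residual bound $q_{\mathsf{mult}}^{2K-2}R$ comes from Lemma~\ref{lemma: error decay no LSI}, including its initial-residual bound via Assumption~\ref{ass: multi-block stability} and $G^2$ as you derive it, before you integrate and drop $\KL(\mu_T\|\pi)\ge 0$.
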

The conditions involving $\rho$ and $A_4$ are identical to those in Theorem \ref{thm: main no block}.
Before proving Theorem \ref{thm: main no LSI}, we establish the following detailed version of Lemma \ref{lemma: error decay no LSI maintext}, which shows that the error decays geometrically fast in the parallel refinement.
\begin{lemma}[Detailed version of Lemma \ref{lemma: error decay no LSI maintext}]\label{lemma: error decay no LSI}
    Let Assumptions \ref{ass: L and V 1}-\ref{ass: multi-block stability} hold.
    Choose the parameter $\eta,\gamma>0$ sufficiently small such that 
    \begin{equation}
        1-\eta\alpha_L>0, \quad 1-3\eta^2\beta_L^2>0, \quad q_{\mathsf{mult}}=\tau \Lambda_\eta <1.
    \end{equation}
    For each block $m=0,\ldots,M-1$ and each Picard refinement $k=1,\ldots,K$, define
    \begin{equation}
        \varepsilon_{m,k}\defn \max\limits_{1\le i\le N}\mathbb{E}\left\| \xbm_{mN+i}^{(k)} - \xbm_{mN+i}^{(k-1)}\right\|^2.
    \end{equation}
    Then for $k=1,\ldots,K-1$, $\varepsilon_{m,k+1}\le q_{\mathsf{mult}}^2 \varepsilon_{m,k}$. 
    Moreover
    \begin{equation}
    \varepsilon_{m, K} \leq \displaystyle q_{\mathsf{mult}}^{2 K-2}\left[2 \tau d+\frac{3 \tau^2\left(\delta^2+G^2\right)}{1-3 \eta^2 \beta_L^2}\right]=q_{\mathsf{mult}}^{2 K-2} R.
    \end{equation}
    Consequently, 
    \begin{equation}
    \max\limits_{\substack{0 \leq m \leq M-1 \\ 1 \leq i \leq N}} \mathbb{E}\left\|\xbm_{m N+i}^{(K)}-\xbm_{m N+i}^{(K-1)}\right\|^2 \leq q_{\mathsf{mult}}^{2K-2}R.
    \end{equation}
\end{lemma}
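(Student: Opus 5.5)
The argument follows the single-block proof of Lemma~\ref{lemma: error decay no block}, applied block by block. The one change is the initial residual: it is now controlled by Assumption~\ref{ass: multi-block stability} through $G^2$, not by $\FI(\mu_0\|\pi)$.

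\emph{Contraction step.} Fix a block $m$ and write $\xbm^{(k)}_{mN}=\xbm_{mN}$ for all $k$. For $k\ge1$ and $i=1,\ldots,N$, subtract consecutive Picard updates from Algorithm~\ref{alg: pix multi block}. The starting point $\xbm_{mN}$ and the Brownian increment $\Bbm_{(mN+i)\gamma}-\Bbm_{mN\gamma}$ are shared, so they cancel, leaving
\begin{equation}
\xbm^{(k+1)}_{mN+i}-\xbm^{(k)}_{mN+i}=-\gamma\sum_{j=0}^{i-1}\bigl[\Tsfit_\eta(\xbm^{(k)}_{mN+j})-\Tsfit_\eta(\xbm^{(k-1)}_{mN+j})\bigr].
\end{equation}
The $j=0$ term vanishes. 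Apply Cauchy--Schwarz, then the $\Lambda_\eta$-Lipschitz bound of Lemma~\ref{lemma: lipschitz drifts}(i). This gives $\E\|\cdot\|^2\le i^2\gamma^2\Lambda_\eta^2\varepsilon_{m,k}\le(N\gamma\Lambda_\eta)^2\varepsilon_{m,k}=q_{\mathsf{mult}}^2\varepsilon_{m,k}$. Taking the maximum over $i$ yields $\varepsilon_{m,k+1}\le q_{\mathsf{mult}}^2\varepsilon_{m,k}$, and induction gives $\varepsilon_{m,K}\le q_{\mathsf{mult}}^{2K-2}\varepsilon_{m,1}$.

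\emph{Initial residual.} Since $\xbm^{(0)}_{mN+i}=\xbm_{mN}$, we have
\begin{equation}
\xbm^{(1)}_{mN+i}-\xbm^{(0)}_{mN+i}=-i\gamma\Tsfit_\eta(\xbm_{mN})+\sqrt2(\Bbm_{(mN+i)\gamma}-\Bbm_{mN\gamma}).
\end{equation}
The point $\xbm_{mN}$ is measurable with respect to the noise up to time $mN\gamma$ and $\xbm_0$. The in-block increments are independent of this and have mean zero, so the cross term vanishes. This gives $\varepsilon_{m,1}\le\tau^2\E\|\Tsfit_\eta(\xbm_{mN})\|^2+2\tau d$. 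Lemma~\ref{lemma: technical lemma FBS} then bounds $\E\|\Tsfit_\eta(\xbm_{mN})\|^2\le\frac{3\delta^2+3\E\|\nabla L(\xbm_{mN})+\nabla V(\xbm_{mN})\|^2}{1-3\eta^2\beta_L^2}$.

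\emph{Using the stability assumption.} This is the only real departure from the single-block case. We bound $\E\|\nabla L(\xbm_{mN})+\nabla V(\xbm_{mN})\|^2$ by $G^2$ using the Lipschitz property of $\nabla L+\nabla V$: $\|\nabla L(\xbm)+\nabla V(\xbm)\|^2\le2\|\nabla L(\zerobm)+\nabla V(\zerobm)\|^2+2(\beta_L+\beta_V)^2\|\xbm\|^2$, combined with Assumption~\ref{ass: multi-block stability}. Substituting gives $\varepsilon_{m,1}\le R$, hence $\varepsilon_{m,K}\le q_{\mathsf{mult}}^{2K-2}R$ uniformly in $m$. The final ``consequently'' statement follows by taking the maximum over $m$. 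I do not expect a serious obstacle. The one point to check is that the cross term really vanishes, i.e.\ that $\xbm_{mN}$ is independent of the current block's Brownian increments, which Algorithm~\ref{alg: pix multi block} guarantees by construction.
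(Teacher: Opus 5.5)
Your proposal is correct and follows the same argument as the paper. You get the in-block contraction $\varepsilon_{m,k+1}\le q_{\mathsf{mult}}^2\varepsilon_{m,k}$ from Cauchy--Schwarz and the $\Lambda_\eta$-Lipschitz bound, with the $j=0$ term vanishing since $\xbm^{(k)}_{mN}=\xbm_{mN}$. You get the initial bound $\varepsilon_{m,1}\le R$ from the vanishing cross term, Lemma~\ref{lemma: technical lemma FBS}, and the $G^2$ bound obtained from Assumption~\ref{ass: multi-block stability} via the Lipschitz property of $\nabla L+\nabla V$; your explicit check that $\xbm_{mN}$ is independent of the in-block Brownian increments is the one point the paper leaves implicit in this proof.
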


\begin{proof}
    Fix a block index $m\in \{0,1,\dots,M-1\}$. 
    For any $k=1,\ldots, K-1$ and $i=1,\ldots,N$, multi-block PiX-MC gives
    \begin{equation}
        \begin{array}{rl}
             \mathbb{E}\|\xbm^{(k+1)}_{mN+i}-\xbm^{(k)}_{mN+i}\|^2
             
             =& \mathbb{E}\left\|\gamma\sum\limits_{j=0}^{i-1}\left[\Tsfit_\eta(\xbm^{(k)}_{mN+j})-\Tsfit_\eta(\xbm^{(k-1)}_{mN+j})\right]\right\|^2  \\
             \le & i\gamma^2 \sum\limits_{j=0}^{i-1}\mathbb{E}\|\Tsfit_\eta(\xbm^{(k)}_{mN+j})-\Tsfit_\eta(\xbm^{(k-1)}_{mN+j})\|^2\\
             \le & i\gamma^2 \Lambda_\eta^2\sum\limits_{j=0}^{i-1}\mathbb{E}\|\xbm^{(k)}_{mN+j}-\xbm^{(k-1)}_{mN+j}\|^2\\
             \le & \tau^2\Lambda_\eta^2 \varepsilon_{m,k}=q_{\mathsf{mult}}^2\varepsilon_{m,k}.
        \end{array}
    \end{equation}
    Thus, $\varepsilon_{m,k+1}\le q_{\mathsf{mult}}^2 \varepsilon_{m,k}$.
    It remains to bound the initial Picard error $\epsilon_{m,1}$. 
    Since the zeroth Picard trajectory is initialized at the block starting point, $\xbm_{mN+i}^{(0)}=\xbm_{mN}$ for $i=0,\ldots,N$, we have, for $i=1,\ldots,N$,
    \begin{equation}
    \xbm_{m N+i}^{(1)}-\xbm_{m N+i}^{(0)}=-\gamma \sum\limits_{j=0}^{i-1} \Tsfit_\eta\left(\xbm_{m N}\right)+\sqrt{2}\left(\Bbm_{(m N+i) \gamma}-\Bbm_{m N \gamma}\right).
    \end{equation}
    Note that 
    \begin{equation}
        \mathbb{E}\left\|\xbm_{m N+i}^{(1)}-\xbm_{m N+i}^{(0)}\right\|^2  =i^2 \gamma^2 \mathbb{E}\left\|\Tsfit_\eta\left(\xbm_{m N}\right)\right\|^2+2 i \gamma d\leq \tau^2 \mathbb{E}\left\|\Tsfit_\eta\left(\xbm_{m N}\right)\right\|^2+2 \tau d .
    \end{equation}
    By Lemma \ref{lemma: technical lemma FBS}, we have
    \begin{equation}
\mathbb{E}\left\|\Tsfit_\eta\left(\xbm_{m N}\right)\right\|^2 \leq \frac{3 \delta^2}{1-3 \eta^2 \beta_L^2}+\frac{3}{1-3 \eta^2 \beta_L^2} \mathbb{E}\left\|\nabla L\left(\xbm_{m N}\right)+\nabla V\left(\xbm_{m N}\right)\right\|^2,
\end{equation}
and that $
\mathbb{E}\left\|\nabla L\left(\xbm_{m N}\right)+\nabla V\left(\xbm_{m N}\right)\right\|^2 \leq G^2$.
    As a result, 
    \begin{equation}
    \mathbb{E}\left\|\Tsfit_\eta\left(\xbm_{m N}\right)\right\|^2 \leq \frac{3\left(\delta^2+G^2\right)}{1-3 \eta^2 \beta_L^2} .
    \end{equation}
    Thus $\epsilon_{m,1}\le 2\tau d+\frac{3\tau^2(\delta^2+G^2)}{1-3\eta^2\beta_L^2}=R$.
    We finally obtain
    \begin{equation}
        \epsilon_{m,K}\le q_{\mathsf{mult}}^{2K-2}\epsilon_{m,1}\le q_{\mathsf{mult}}^{2K-2}R.
    \end{equation}
    Since $R$ is independent of the total time horizon $T$ and the bound is uniform over $m=0,\ldots,M-1$, the global estimate follows.
    This completes the proof.
\end{proof}

\noindent
Now we are ready to prove Theorem \ref{thm: main no LSI}.
\begin{proof}
    Let $m\in \{0,\ldots,M-1\}$ and $i\in\{0,\ldots,N-1\}$. For $t\in \left[(mN+i)\gamma, (mN+i+1)\gamma\right]$, define the continuous-time interpolation by
    \begin{equation}
        \Xbm_t=\xbm^{(K)}_{mN+i}-\left[t-(mN+i)\gamma\right]\Tsfit_\eta\left(\xbm^{(K-1)}_{mN+i}\right)+\sqrt{2}(\Bbm_t-\Bbm_{(mN+i)\gamma}).
    \end{equation}
    Let $\mu_t=\mathsf{Law}(\Xbm_t)$.
    By multi-block PiX-MC, this interpolation satisfies $\Xbm_{(mN+i)\gamma}=\xbm^{(K)}_{mN+i}$ and $\Xbm_{(mN+i+1)\gamma}=\xbm^{(K)}_{mN+i+1}$.

    Applying the same conditional-drift argument as in the proof of Theorem \ref{thm: main no block}, we obtain, for almost every $t\in \left[(mN+i)\gamma, (mN+i+1)\gamma\right]$, 
    \begin{equation}
        \partial_t \KL(\mu_t\|\pi) 
        \le 
        -\frac{3}{4}\FI(\mu_t\|\pi) 
        + 
        \mathbb{E}\left\|\nabla L(\Xbm_t) + \nabla V(\Xbm_t)-\Tsfit_\eta(\xbm^{(K-1)}_{mN+i})\right\|^2.
    \end{equation}
    The local interpolation estimate established in the proof of Theorem \ref{thm: main no block} also applies here without modification, and yields
    \begin{equation}
    \begin{array}{rl}
         &\mathbb{E}\left\|\nabla L(\Xbm_t)+\nabla V(\Xbm_t)-\Tsfit_\eta \left(\xbm^{(K-1)}_{mN+i}\right)\right\|^2 \vspace{1ex} \\
         \le & 3\rho A_1 \mathbb{E}\left\|\xbm^{(K)}_{mN+i}-\xbm^{(K-1)}_{mN+i}\right\|^2+A_4\mathbb{E}\|\nabla L(\Xbm_t)+\nabla V(\Xbm_t)\|^2 + A_5.
    \end{array}
    \end{equation}    
    By Lemma \ref{lemma: Lipschitz and FI}, we have
    \begin{equation}
        \mathbb{E}\|\nabla L(\Xbm_t)+\nabla V(\Xbm_t)\|^2 \le \FI(\mu_t\|\pi)+2d(\beta_L+\beta_V).
    \end{equation}
    For $i=0$, we have $\xbm_{mN}^{(K)} = \xbm_{mN}^{(K-1)}=\xbm_{mN}$.
    Hence for $i=0,\ldots, N-1$, together with Lemma \ref{lemma: error decay no LSI}, we obtain
        \begin{equation}
            \mathbb{E}\left\| \xbm_{mN+i}^{(K)} -\xbm_{mN+i}^{(K-1)}\right\|^2\le q_{\mathsf{mult}}^{2K-2} R.
        \end{equation}  
    Substituting this back, we arrive at
    \begin{equation}\label{eq: temp no LSI closed 2}
         \partial_t \KL(\mu_t\|\pi) \le \left(A_4-0.75\right)\FI(\mu_t\|\pi)+ 3\rho A_1 q_{\mathsf{mult}}^{2K-2} R+A_5 + 2A_4 d (\beta_L+\beta_V).
    \end{equation}
    By the definition of $D$, 
    \begin{equation}\label{eq: temp no LSI closed 3}
         \partial_t \KL(\mu_t\|\pi) \le \left(A_4-0.75\right)\FI(\mu_t\|\pi)+ D.
    \end{equation}
    Integrating \eqref{eq: temp no LSI closed 3} over each interval
    $[(mN+i)\gamma,(mN+i+1)\gamma]$ and summing over $m=0,\ldots,M-1$ and $i=0,\ldots,N-1$, we obtain
    \begin{equation}
        \KL(\mu_T\|\pi)-\KL(\mu_0\|\pi)
        \le 
        \left(A_4-0.75\right)\int_0^T\FI(\mu_t\|\pi)dt
        +
        DT.
    \end{equation}
    Since $\KL(\mu_T\|\pi)\ge 0$, rearranging and dividing by $T$ gives
    \begin{equation}
        \left(0.75-A_4\right)\frac{1}{T}\int_0^T\FI(\mu_t\|\pi)dt 
        \le 
        \frac{\KL(\mu_0\|\pi)}{T} + D.
    \end{equation}
    This completes the proof.
\end{proof}

\section{Proof of multi-block APiX-MC}\label{app:D}
We present the detailed convergence result for multi-block APiX-MC with weighted annealing.
We first introduce several quantities specific to the annealed analysis. Recall the constants $\rho$, $A_1$, and $A_4$ defined in Appendix \ref{app:thm1}.
Define the maximal and averaged score error by
\begin{equation}
    \delta_{\mathsf{ann}}\defn \max\limits_{0\le n\le MN-1} \{\delta_n+(\alpha_n-1)R_\Ssfit\}, \quad \bar{\delta}_{\mathsf{ann}}^2\defn \displaystyle\frac{1}{MN} \sum\limits_{n=0}^{MN-1} [\delta_n+(\alpha_n-1)R_\Ssfit]^2.
\end{equation}
Assumption \ref{ass:4ann} gives
\begin{equation}
     \|\alpha_n \Ssfit_\theta (\xbm;\sigma_n)-\nabla V(\xbm)\|\le \|\Ssfit_\theta (\xbm;\sigma_n)-\nabla V(\xbm)\|+(\alpha_n-1)\|\Ssfit_\theta (\xbm;\sigma_n)\| 
     \le  \delta_n+(\alpha_n-1)R_\Ssfit.
\end{equation}
Since $\nabla L+\nabla V$ is $(\beta_L+\beta_V)$-Lipschitz, Assumption \ref{ass: multi-block stability} gives
\begin{equation}\label{eq:dear 149}
\sup\limits_{0\leq m\leq M-1} \mathbb{E}\left[\left\|\nabla L\left(\xbm_{m N}\right)+\nabla V\left(\xbm_{m N}\right)\right\|^2\right] \leq G^2.
\end{equation}
Since $\{\alpha_n\}_{n=0}^{MN-1}$ is non-increasing, define
\begin{equation}
    \Lambda_{\eta,\mathsf{ann}}\defn \displaystyle \alpha_0 L_\Ssfit + \beta_L \frac{1+\eta \alpha_0 L_\Ssfit}{1-\eta \alpha_L}, \quad q_{\mathsf{ann}}\defn \tau \Lambda_{\eta,\mathsf{ann}},\quad
    R_{\mathsf{ann}} \defn 2\tau d+ \displaystyle\frac{3\tau^2(\delta_{\mathsf{ann}}^2+G^2)}{1-3\eta^2\beta_L^2},
\end{equation}
where $G$ is defined above through the uniform moment bound in Assumption \ref{ass: multi-block stability}.
Finally define the constant
\begin{equation}
    D_{\mathsf{ann}}= 3\rho A_1 q_{\mathsf{ann}}^{2K-2}R_{\mathsf{ann}}+
    \displaystyle\left(\frac{4+9\rho A_1 \gamma^2 }{1-3\eta^2\beta_L^2}\right)\bar{\delta}_{\mathsf{ann}}^2 + 6\rho A_1 \gamma d
    +2A_4 d(\beta_L+\beta_V).
\end{equation}

\begin{theorem}[Detailed version of Theorem \ref{thm: multi-block annealing simple}]\label{thm: multi-block annealing}
Let $(\mu_t)_{t\geq0}$ denote the law of the continuous interpolation of $\{\Xn^{(K)}\}_{n=0}^{N}$ generated by
multi-block {A\proposed}.
Suppose Assumptions \ref{ass: L and V 1}, \ref{ass: multi-block stability}, and \ref{ass:4ann} hold. 
Choose parameters $\eta,\gamma>0$ sufficiently small such that
\begin{equation}
    1-\eta\alpha_L>0,\quad 1-3\eta^2\beta_L^2 > 0,
    \quad 
     1-\frac{18 \gamma^2\left(\beta_L+\beta_V\right)^2}{1-3 \eta^2 \beta_L^2}>0, \quad  q_{\mathsf{ann}}< 1, \quad A_4<0.75.
\end{equation}
Then, 
\begin{equation}
    (0.75-A_4)\frac{1}{T}\int_0^T \FI(\mu_t\|\pi)dt 
    \le 
    \frac{\KL(\mu_0\|\pi)}{T} + D_{\mathsf{ann}}.
\end{equation}
\end{theorem}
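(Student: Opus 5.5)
The plan is to repeat the proof of Theorem~\ref{thm: main no LSI} almost verbatim. The only changes are that the drift now depends on the node, $\Tsfit_n\defn\Tsfit^{\alpha_n}_{\eta,\sigma_n}$, and that the constant score error $\delta$ is replaced by the node-dependent bound $\delta_n^{\mathsf{w}}\defn\delta_n+(\alpha_n-1)R_\Ssfit$. Since $\alpha_n\Ssfit_\theta(\cdot;\sigma_n)$ is a valid "score network" with accuracy $\delta_n^{\mathsf{w}}$ relative to $\nabla V$, Lemma~\ref{lemma: technical lemma FBS} and Lemma~\ref{lemma: technical lemma drift} apply node by node. Their proofs never use Lipschitz continuity of the score, only the accuracy bound. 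They give
\[
\|\Tsfit_n(\xbm)\|^2\le\frac{3(\delta_n^{\mathsf{w}})^2+3\|\nabla L(\xbm)+\nabla V(\xbm)\|^2}{1-3\eta^2\beta_L^2}
\]
and the corresponding splitting bound with $c_{\mathsf{score}}$ replaced by $2(\delta_n^{\mathsf{w}})^2/(1-3\eta^2\beta_L^2)$. Lemma~\ref{lemma: lipschitz drifts}(ii) supplies the uniform Lipschitz constant $\Lambda_{\eta,\mathsf{ann}}$.

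\textbf{Picard error.} I would first establish the annealed analogue of Lemma~\ref{lemma: error decay no LSI}. Fix a block $m$. For $k\ge1$, the difference of successive Picard iterates is $\gamma\sum_{j<i}[\Tsfit_{mN+j}(\xbm^{(k)}_{mN+j})-\Tsfit_{mN+j}(\xbm^{(k-1)}_{mN+j})]$. The same drift index appears in both terms, so Cauchy--Schwarz and the uniform Lipschitz bound give $\varepsilon_{m,k+1}\le q_{\mathsf{ann}}^2\varepsilon_{m,k}$. For the initial error, $\xbm^{(1)}_{mN+i}-\xbm^{(0)}_{mN+i}=-\gamma\sum_{j<i}\Tsfit_{mN+j}(\xbm_{mN})+\sqrt2(\Bbm_{(mN+i)\gamma}-\Bbm_{mN\gamma})$. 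Here the drifts differ across $j$, so I will use $\|\sum_{j<i}\cdot\|^2\le i\sum_{j<i}\|\cdot\|^2$. Combining this with the node-wise bound above, with $(\delta_n^{\mathsf{w}})^2\le\delta_{\mathsf{ann}}^2$, and with \eqref{eq:dear 149} gives $\varepsilon_{m,1}\le R_{\mathsf{ann}}$. The cross term vanishes because $\xbm_{mN}$ is independent of the future Brownian increments. This yields $\varepsilon_{m,K}\le q_{\mathsf{ann}}^{2K-2}R_{\mathsf{ann}}$.

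\textbf{Entropy dissipation.} Next I would use the same interpolation as before, with drift $\Tsfit_n(\xbm^{(K-1)}_n)$ on $[n\gamma,(n+1)\gamma]$, $n=mN+i$. The conditional-drift argument then gives
\[
\partial_t\KL\le-\tfrac34\FI(\mu_t\|\pi)+\E\|\nabla L(\Xbm_t)+\nabla V(\Xbm_t)-\Tsfit_n(\xbm^{(K-1)}_n)\|^2 .
\]
The local interpolation estimate from the proof of Theorem~\ref{thm: main no block} goes through with $\delta$ replaced by $\delta_n^{\mathsf{w}}$. The coefficients $\rho,A_1,A_4$ do not involve $\delta$, so they are unchanged. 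The constant term becomes
\[
A_5^{(n)}=\frac{4(\delta_n^{\mathsf{w}})^2}{1-3\eta^2\beta_L^2}+\rho A_1\Big(\frac{9\gamma^2(\delta_n^{\mathsf{w}})^2}{1-3\eta^2\beta_L^2}+6\gamma d\Big),
\]
which is exactly the per-node version of the $\bar\delta_{\mathsf{ann}}^2$ term in $D_{\mathsf{ann}}$. Inserting Lemma~\ref{lemma: Lipschitz and FI} and the Picard bound, where the $i=0$ node has zero error, I get
\[
\partial_t\KL\le(A_4-0.75)\FI(\mu_t\|\pi)+3\rho A_1q_{\mathsf{ann}}^{2K-2}R_{\mathsf{ann}}+A_5^{(n)}+2A_4d(\beta_L+\beta_V).
\]

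\textbf{Integration.} Finally I would integrate over each subinterval of length $\gamma$ and sum over all $n=0,\ldots,MN-1$. The only new point, and the step needing care, is that the $n$-dependent constant sums to $\gamma\sum_nA_5^{(n)}$. Dividing by $T=MN\gamma$ converts $\frac1{MN}\sum_n(\delta_n^{\mathsf{w}})^2$ into $\bar\delta_{\mathsf{ann}}^2$, which produces $D_{\mathsf{ann}}$. Dropping $\KL(\mu_T\|\pi)\ge0$ concludes the proof.

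The main obstacle is bookkeeping rather than a new idea. One has to check that each place where the earlier proofs used a single $\delta$ either needs only the node-local error, which is harmless, or needs a uniform bound; the latter occurs only in the Picard initialization, where $\delta_{\mathsf{ann}}$ enters through $R_{\mathsf{ann}}$. One also has to make sure the averaged $\bar\delta_{\mathsf{ann}}$, and not $\delta_{\mathsf{ann}}$, survives in the dissipation term.
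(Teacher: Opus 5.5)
Your proposal is correct and follows essentially the same route as the paper. The paper also first proves the annealed Picard-error lemma. It gets the contraction from the uniform Lipschitz constant $\Lambda_{\eta,\mathsf{ann}}$, and it bounds $\varepsilon_{m,1}\le R_{\mathsf{ann}}$ via Cauchy--Schwarz together with the worst-case error $\delta_{\mathsf{ann}}$ and $G^2$. It then repeats the node-wise interpolation estimate with $\delta$ replaced by $\delta_n+(\alpha_n-1)R_\Ssfit$, which gives exactly your constant $\frac{4+9\rho A_1\gamma^2}{1-3\eta^2\beta_L^2}\,[\delta_n+(\alpha_n-1)R_\Ssfit]^2+6\rho A_1\gamma d$. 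Finally, summing over $n$ and using $\gamma\sum_n[\delta_n+(\alpha_n-1)R_\Ssfit]^2=T\bar{\delta}_{\mathsf{ann}}^2$ yields $D_{\mathsf{ann}}$, as in your last step.
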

The effective discrepancy satisfies
\begin{equation}
\bar{\delta}_{\mathsf{ann}}^2 \leq \frac{2}{M N} \sum_{n=0}^{M N-1} \delta_n^2+\frac{2 R_\Ssfit^2}{MN} \sum_{n=0}^{MN-1}\left(\alpha_n-1\right)^2 .
\end{equation}
Hence, the first term on the right captures the score approximation error, while the second summation represents the transient error introduced by weighted annealing.
In particular, this annealing error vanishes as the post-annealing sampling horizon increases, provided that $\alpha_n=1$ after a finite number of iterations.

Before proving Theorem \ref{thm: multi-block annealing}, we establish the following annealed counterpart of Lemma \ref{lemma: error decay no LSI}.
It shows that the in-block Picard refinement error continues to decay geometrically.

\begin{lemma}\label{lemma: error decay multi-block annealing}
Let Assumptions \ref{ass: L and V 1}, \ref{ass: multi-block stability}, and \ref{ass:4ann} hold.
Choose $\eta,\gamma$ small enough such that 
\begin{equation}
1-\eta\alpha_L>0, \quad 1-3\eta^2\beta_L^2>0, \quad
q_{\mathsf{ann}}=\tau \Lambda_{\eta,\mathsf{ann}}<1 .
\end{equation}
For each block $m=0,\ldots,M-1$ and each Picard refinement $k=1,\ldots,K$, define
\begin{equation}
\varepsilon_{m, k}:=\max\limits_{1 \leq i \leq N} \mathbb{E}\left\|\xbm_{m N+i}^{(k)}-\xbm_{m N+i}^{(k-1)}\right\|^2.
\end{equation}
Then, for $k=1,\ldots,K-1$, $\varepsilon_{m, k+1} \leq q_{\mathsf{ann}}^2 \varepsilon_{m, k}$. Moreover,
\begin{equation}
\varepsilon_{m, K} \leq q_{\mathsf{ann}}^{2 K-2}\left[2 \tau d+\frac{3 \tau^2\left(\delta_{\mathsf{ann}}^2+G^2\right)}{1-3 \eta^2 \beta_L^2}\right]=q_{\mathsf{ann}}^{2 K-2} R_{\mathsf{ann}} .
\end{equation}
Consequently,
\begin{equation}
\max\limits_{\substack{0 \leq m \leq M-1 \\ 1 \leq i \leq N}} \mathbb{E}\left\|\xbm_{m N+i}^{(K)}-\xbm_{m N+i}^{(K-1)}\right\|^2 \leq q_{\mathsf{ann}}^{2 K-2} R_{\mathsf{ann}}.
\end{equation}
At each block starting point, $\xbm_{mN}^{(K)}=\xbm_{mN}^{(K-1)}=\xbm_{mN}$.
Hence, the same bound also holds over all global nodes $n=0,\ldots,MN-1$.
\begin{equation}
\max\limits_{0 \leq n \leq M N-1} \mathbb{E}\left\|\xbm_n^{(K)}-\xbm_n^{(K-1)}\right\|^2 \leq q_{\mathsf{ann}}^{2K-2} R_{\mathsf{ann}}.
\end{equation}
\end{lemma}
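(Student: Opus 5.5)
The plan is to mirror the proof of Lemma~\ref{lemma: error decay no LSI} block by block. The only changes are that the drift at node $n$ is now the node-dependent map $\Tsfit^{\alpha_n}_{\eta,\sigma_n}$, and that the single score error $\delta$ is replaced by the uniform bound $\delta_{\mathsf{ann}}$.

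Fix a block $m$. For $k\ge1$ and $i=1,\ldots,N$, subtracting two consecutive Picard updates in Algorithm~\ref{alg: pix annealing multi-block} gives
\begin{equation}
\xbm^{(k+1)}_{mN+i}-\xbm^{(k)}_{mN+i}=-\gamma\sum_{j=0}^{i-1}\Bigl[\Tsfit^{\alpha_{mN+j}}_{\eta,\sigma_{mN+j}}(\xbm^{(k)}_{mN+j})-\Tsfit^{\alpha_{mN+j}}_{\eta,\sigma_{mN+j}}(\xbm^{(k-1)}_{mN+j})\Bigr].
\end{equation}
The Brownian terms and the block starting point cancel. Note that the drift paired with node $mN+j$ is the same map in both iterates, so only its Lipschitz constant enters. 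I will apply Cauchy--Schwarz to the sum, and then use Lemma~\ref{lemma: lipschitz drifts}(ii), which says every weighted drift is $\Lambda_{\eta,\mathsf{ann}}$-Lipschitz. Together with $i\gamma\le\tau$, this gives $\varepsilon_{m,k+1}\le\tau^2\Lambda_{\eta,\mathsf{ann}}^2\varepsilon_{m,k}=q_{\mathsf{ann}}^2\varepsilon_{m,k}$.

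Next I bound the initial error. Since $\xbm^{(0)}_{mN+i}=\xbm_{mN}$, the first Picard increment is $-\gamma\sum_{j<i}\Tsfit^{\alpha_{mN+j}}_{\eta,\sigma_{mN+j}}(\xbm_{mN})$ plus an independent, mean-zero Brownian increment with second moment $2i\gamma d$. The cross term vanishes, and Cauchy--Schwarz on the drift sum gives
\begin{equation}
\varepsilon_{m,1}\le 2\tau d+\tau^2\max_j\E\|\Tsfit^{\alpha_{mN+j}}_{\eta,\sigma_{mN+j}}(\xbm_{mN})\|^2 .
\end{equation}
To control each term I will rerun the proof of Lemma~\ref{lemma: technical lemma FBS} with $\Ssfit_\theta$ replaced by $\alpha_n\Ssfit_\theta(\cdot;\sigma_n)$. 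That proof only uses the bound $\|\alpha_n\Ssfit_\theta(\xbm;\sigma_n)-\nabla V(\xbm)\|\le\delta_n+(\alpha_n-1)R_\Ssfit\le\delta_{\mathsf{ann}}$, which follows from Assumption~\ref{ass:4ann}(i),(iii). This yields
\begin{equation}
\|\Tsfit^{\alpha_n}_{\eta,\sigma_n}(\xbm)\|^2\le\frac{3\delta_{\mathsf{ann}}^2+3\|\nabla L(\xbm)+\nabla V(\xbm)\|^2}{1-3\eta^2\beta_L^2}.
\end{equation}
Taking expectations and using \eqref{eq:dear 149}, which relies on Assumption~\ref{ass: multi-block stability}, gives $\varepsilon_{m,1}\le R_{\mathsf{ann}}$.

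Iterating the contraction $K-1$ times gives $\varepsilon_{m,K}\le q_{\mathsf{ann}}^{2K-2}R_{\mathsf{ann}}$. This bound is uniform in $m$, because $R_{\mathsf{ann}}$ depends only on $\tau$, $d$, $\delta_{\mathsf{ann}}$ and $G$, none of which depend on $m$. Taking the maximum over $m$ and $i$ gives the stated global bound. The extension to all nodes $n=0,\ldots,MN-1$ holds because the block starting points satisfy $\xbm^{(K)}_{mN}=\xbm^{(K-1)}_{mN}=\xbm_{mN}$ by construction, so their contribution is zero.

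The only delicate step is the modified version of Lemma~\ref{lemma: technical lemma FBS}. I need to check that the proximal identity $\Tsfit^{\alpha}_{\eta,\sigma}(\xbm)=\nabla L(\zbm)+\alpha\Ssfit_\theta(\xbm;\sigma)$ still holds and that the $(1-3\eta^2\beta_L^2)$ absorption goes through with the weighted score. I expect both to be direct substitutions.
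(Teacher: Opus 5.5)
Your proposal is correct and follows the paper's proof essentially step for step. The contraction uses Cauchy--Schwarz, the uniform $\Lambda_{\eta,\mathsf{ann}}$-Lipschitz bound and $i\gamma\le\tau$; the initial-error bound uses the independence of the in-block Brownian increments from $\xbm_{mN}$, the weighted rerun of Lemma~\ref{lemma: technical lemma FBS} with $\delta_{\mathsf{ann}}$, and the moment bound \eqref{eq:dear 149}. One detail you leave implicit is that the $j=0$ term in the contraction sum vanishes because $\xbm^{(k)}_{mN}=\xbm^{(k-1)}_{mN}=\xbm_{mN}$, which is what lets you bound the sum by $\varepsilon_{m,k}$ even though $\varepsilon_{m,k}$ only ranges over $i\ge 1$.
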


\begin{proof}
Fix a block index $m\in\{0,\ldots,M-1\}$. For any $k=1,\ldots,K-1$ and $i=1,\ldots,N$, multi-block APiX-MC gives
\begin{equation}
 \xbm_{m N+i}^{(k+1)}-\xbm_{m N+i}^{(k)} 
 =-\gamma \sum\limits_{j=0}^{i-1}\left[\Tsfit_{\eta, \sigma_{m N+j}}^{\alpha_{m N+j}}\left(\xbm_{m N+j}^{(k)}\right)-\Tsfit_{\eta, \sigma_{m N+j}}^{\alpha_{m N+j}}\left(\xbm_{m N+j}^{(k-1)}\right)\right].
\end{equation}
By the Cauchy-Schwarz inequality and the uniform Lipschitz bound of the weighted drifts,
\begin{equation}
\begin{array}{rl}
     \mathbb{E}\left\|\xbm_{m N+i}^{(k+1)}-\xbm_{m N+i}^{(k)}\right\|^2  \leq & i \gamma^2 \sum\limits_{j=0}^{i-1} \mathbb{E}\left\|\Tsfit_{\eta, \sigma_{m N+j}}^{\alpha_{m N+j}}\left(\xbm_{m N+j}^{(k)}\right)-\Tsfit_{\eta, \sigma_{m N+j}}^{\alpha_{m N+j}}\left(\xbm_{m N+j}^{(k-1)}\right)\right\|^2 \\
\leq &i \gamma^2 \Lambda_{\eta, \mathsf{ann}}^2 \sum\limits_{j=0}^{i-1} \mathbb{E}\left\|\xbm_{m N+j}^{(k)}-\xbm_{m N+j}^{(k-1)}\right\|^2.
\end{array}
\end{equation}
Since $\xbm_{mN}^{(k)}=\xbm_{mN}$ holds for all $k$, 
\begin{equation}
 \mathbb{E}\left\|\xbm_{m N+i}^{(k+1)}-\xbm_{m N+i}^{(k)}\right\|^2 
 \leq i^2 \gamma^2 \Lambda_{\eta, \mathsf{ann}}^2 \varepsilon_{m, k} 
 \leq \tau^2 \Lambda_{\eta, \mathsf{ann}}^2 \varepsilon_{m, k} 
 =q_{\mathsf{ann}}^2 \varepsilon_{m,k} .
\end{equation}
Taking the maximum over $i=1,\ldots,N$ yields $\varepsilon_{m, k+1} \leq q_{\mathsf{ann}}^2 \varepsilon_{m, k}$.
It remains to bound $\varepsilon_{m,1}$. Since the zeroth Picard trajectory is initialized by $\xbm_{mN+i}^{(0)}=\xbm_{mN}$ for $i=0,\ldots,N$, we have
\begin{equation}
\xbm_{m N+i}^{(1)}-\xbm_{m N+i}^{(0)}=-\gamma \sum_{j=0}^{i-1} \Tsfit_{\eta, \sigma_{m N+j}}^{\alpha_{m N+j}}\left(\xbm_{m N}\right)+\sqrt{2}\left(\Bbm_{(m N+i) \gamma}-\Bbm_{m N \gamma}\right).
\end{equation}
Since $\xbm_{mN}$ is independent of the Brownian increments within the current block, the cross term vanishes. Hence,
\begin{equation}
 \mathbb{E}\left\|\xbm_{m N+i}^{(1)}-\xbm_{m N+i}^{(0)}\right\|^2 =\gamma^2 \mathbb{E}\left\|\sum\limits_{j=0}^{i-1} \Tsfit_{\eta, \sigma_{m N+j}}^{\alpha_{m N+j}}\left(\xbm_{m N}\right)\right\|^2+2i\gamma d.
\end{equation}
Applying Cauchy-Schwarz inequality gives
\begin{equation}
\gamma^2 \mathbb{E}\left\|\sum\limits_{j=0}^{i-1} \Tsfit_{\eta, \sigma_{m N+j}}^{\alpha_{m N+j}}\left(\xbm_{m N}\right)\right\|^2\leq i \gamma^2 \sum\limits_{j=0}^{i-1} \mathbb{E}\left\|\Tsfit_{\eta, \sigma_{m N+j}}^{\alpha_{m N+j}}\left(\xbm_{m N}\right)\right\|^2.
\end{equation}
Applying the argument of Lemma \ref{lemma: technical lemma FBS} to the weighted score $\alpha_{mN+j}\Ssfit_\theta(\cdot;\sigma_{mN+j})$, we obtain
\begin{equation}
 \mathbb{E}\left\|\Tsfit_{\eta, \sigma_{m N+j}}^{\alpha_{m N+j}}(\xbm_{m N})\right\|^2 \leq \frac{3 \delta_{\mathsf{ann}}^2}{1-3 \eta^2 \beta_L^2}+\frac{3}{1-3 \eta^2 \beta_L^2} \mathbb{E}\left\|\nabla L(\xbm_{mN})+\nabla V(\xbm_{mN})\right\|^2.
\end{equation}
By \eqref{eq:dear 149}
\begin{equation}
    \mathbb{E}\left\|\nabla L\left(\xbm_{mN}\right)+\nabla V\left(\xbm_{mN}\right)\right\|^2\le G^2.
\end{equation}
Thus 
\begin{equation}
\displaystyle
\mathbb{E}\left\|\Tsfit_{\eta, \sigma_{m N+j}}^{\alpha_{m N+j}}\left(\xbm_{m N}\right)\right\|^2 \leq \frac{3\left(\delta_{\mathsf{ann}}^2+G^2\right)}{1-3 \eta^2\beta_L^2}.
\end{equation}
Consequently, since $i\gamma\leq\tau$,
\begin{equation}
\varepsilon_{m,1} \leq 2 \tau d+\frac{3 \tau^2\left(\delta_{\mathsf{ann}}^2+G^2\right)}{1-3 \eta^2 \beta_L^2}=R_{\mathsf{ann}}.
\end{equation}
Combining this estimate with the geometric recursion gives
\begin{equation}
\varepsilon_{m, K} \leq q_{\mathsf{ann}}^{2 K-2} \varepsilon_{m, 1} \leq q_{\mathsf{ann}}^{2 K-2} R_{\mathsf{ann}}.
\end{equation}
Since the bound is uniform over $m=0,\ldots,M-1$, the global estimate follows.
This completes the proof.
\end{proof}

\noindent
Now we are ready to prove Theorem \ref{thm: multi-block annealing}.
\begin{proof}
For each global time node $n=0,\ldots,MN-1$ and $t\in[n\gamma,(n+1)\gamma]$, define the continuous-time interpolation by 
\begin{equation}
\Xbm_t=\xbm_n^{(K)}-(t-n \gamma) \Tsfit_{\eta, \sigma_n}^{\alpha_n}\left(\xbm_n^{(K-1)}\right)+\sqrt{2}\left(\Bbm_t-\Bbm_{n \gamma}\right).
\end{equation}
Let $\mu_t=\mathsf{Law}(\Xbm_t)$. By multi-block APiX-MC, $\Xbm_{n\gamma}=\xbm_{n}^{(K)}$, $\Xbm_{(n+1)\gamma}=\xbm_{n+1}^{(K)}$.
Applying the same conditional-drift argument as in the proofs of Theorems \ref{thm: main no block} and \ref{thm: main no LSI}, we obtain, for almost every $t\in[n\gamma,(n+1)\gamma]$,
\begin{equation}
    \partial_t\KL(\mu_t\|\pi) \le -\frac{3}{4}\FI(\mu_t\|\pi) + \mathbb{E}\left\|\nabla L(\Xbm_t)+\nabla V(\Xbm_t) - \Tsfit^{\alpha_n}_{\eta,\sigma_n}(\xbm^{(K-1)}_{n})\right\|^2.
\end{equation}
At the $n$-th node, Assumption \ref{ass:4ann} gives
\begin{equation}
\left\|\alpha_n \Ssfit_\theta\left(\xbm ; \sigma_n\right)-\nabla V(\xbm)\right\| \leq \delta_n+\left(\alpha_n-1\right) R_\Ssfit.
\end{equation}
Therefore, repeating the local interpolation estimate in the proof of Theorem \ref{thm: main no block}, with $\delta$ replaced by $\delta_n+\left(\alpha_n-1\right) R_\Ssfit$ gives us
\begin{equation}
\begin{array}{rl}
&\mathbb{E}\left\|\nabla L\left(\Xbm_t\right)+\nabla V\left(\Xbm_t\right)-\Tsfit_{\eta, \sigma_n}^{\alpha_n}\left(\xbm_n^{(K-1)}\right)\right\|^2 \\
\leq & 3 \rho A_1 \mathbb{E}\left\|\xbm_n^{(K)}-\xbm_n^{(K-1)}\right\|^2 +A_4 \mathbb{E}\left\|\nabla L\left(\Xbm_t\right)+\nabla V\left(\Xbm_t\right)\right\|^2 \\
& +\displaystyle\left(\frac{4+9 \rho A_1 \gamma^2}{1-3 \eta^2 \beta_L^2}\right)\left[\delta_n+\left(\alpha_n-1\right) R_\Ssfit\right]^2 +6 \rho A_1 \gamma d.
\end{array}
\end{equation}
By Lemma \ref{lemma: Lipschitz and FI} and Lemma \ref{lemma: error decay multi-block annealing},
\begin{equation}
\begin{array}{rl}
     \partial_t\KL(\mu_t\|\pi) 
     \le &\left(A_4-\frac{3}{4}\right)\FI(\mu_t\|\pi)+3 \rho A_1 q_{\mathsf{ann}}^{2 K-2} R_{\mathsf{ann}} \\
& +\displaystyle\left(\frac{4+9 \rho A_1 \gamma^2}{1-3 \eta^2 \beta_L^2}\right)\left[\delta_n+\left(\alpha_n-1\right) R_\Ssfit\right]^2+6 \rho A_1 \gamma d+2 A_4 d\left(\beta_L+\beta_V\right)  
\end{array}
\end{equation}
Integrating over $[n\gamma,(n+1)\gamma]$, and summing over $n=0,\ldots,MN-1$ yields
\begin{equation}
\begin{array}{rl}
&\KL\left(\mu_T \| \pi\right)-\KL\left(\mu_0 \| \pi\right) \\
\leq & \left(A_4-\frac{3}{4}\right) \int_0^T \FI\left(\mu_t \| \pi\right) dt +3 \rho A_1 q_{\mathsf{ann}}^{2 K-2} R_{\mathsf{ann}} T \vspace{0.5ex} \\
&\displaystyle +\left(\frac{4+9 \rho A_1 \gamma^2}{1-3 \eta^2 \beta_L^2}\right) \gamma \sum\limits_{n=0}^{M N-1}\left[\delta_n+\left(\alpha_n-1\right) R_\Ssfit\right]^2  +\left[6 \rho A_1 \gamma d+2 A_4 d\left(\beta_L+\beta_V\right)\right] T.
\end{array}
\end{equation}
By the definition of $\bar{\delta}_{\mathsf{ann}}^2$, we have $\gamma \sum\limits_{n=0}^{M N-1}\left[\delta_n+\left(\alpha_n-1\right) R_\Ssfit\right]^2=T \bar{\delta}_{\mathsf{ann}}^2$. 
Further, by the definition of $D_{\mathsf{ann}}$,
\begin{equation}
\KL\left(\mu_T \| \pi\right)-\KL\left(\mu_0 \| \pi\right) \leq\left(A_4-\frac{3}{4}\right) \int_0^T \FI\left(\mu_t \| \pi\right)  dt+T D_{\mathsf{ann}} .
\end{equation}
Finally by the non-negativity of $\KL$, the proof is complete.
\end{proof}

\section{Additional experimental details}\label{app:ex}

\subsection{Common Implementation Details and Hyperparameter Selections}\label{app:parameters}
In all experiments, the images are scaled to $[0,1]$. 
Before evaluating a score network, we rescale the input image to $[-1,1]$ and map the network output back to $[0,1]$ accordingly.
The pretrained models used for natural images, MRI, and CT adopt different output parameterizations, whose model-specific implementations are described below.

\paragraph{Score-network implementation.}
For the natural-image experiments, we employ a pretrained diffusion model as in DDPM \citep{ho2020denoising}.
Let $\bar{\alpha}_t$ denote the cumulative diffusion coefficient.
The connection between the score network $\Ssfit_\theta$ and noise predictor $\epsilon_\theta$ in DDPM can be formulated approximately as
\begin{equation}
    \Ssfit_\theta(\xbm;\sigma)=\displaystyle\frac{2}{\sigma}\epsilon_\theta\left(\frac{2\xbm-\onebm }{\sqrt{1+\sigma^2}};\sigma\right),
\end{equation}
 where $\sigma=\displaystyle\sqrt{\frac{1-\bar{\alpha}_t}{\bar{\alpha}_t}}$.
For MRI reconstruction, we employ the pretrained score network from PMC \citep{sun2024provable}.
Denote its output in the network coordinates by $\Ssfit^{\mathsf{MRI}}_\theta(2\xbm-\onebm;\sigma)$. 
Since the network directly estimates the positive score, the negative-score function used in our formulation is
\begin{equation}
    \Ssfit_\theta(\xbm;\sigma)=-2\Ssfit^{\mathsf{MRI}}_\theta(2\xbm-\onebm;\sigma).
\end{equation}
For CT images, we train an MMSE denoiser $\Dsfit_\theta$ using the standard EDM framework \citep{karras2022elucidating}. 
The relation between $\Ssfit_\theta$ and $\Dsfit_\theta$ is
\begin{equation}
    \Ssfit_\theta(\xbm;\sigma)=\displaystyle\frac{2}{\sigma^2}\left[2\xbm-\onebm - \Dsfit_\theta(2\xbm-\onebm;\sigma)\right].
\end{equation}

\paragraph{Balancing parameter for likelihood.}
To account for the different numerical scales of the likelihood and score networks, we introduce a task-dependent balancing parameter $\lambda>0$ for the likelihood.
For example, MRI, deblurring, and CT, the likelihood is 
\begin{equation}
    L(\xbm)=\displaystyle\frac{\lambda}{2}\|\Abm\xbm-\ybm\|^2,
\end{equation}
where $\Abm$ is a task-specific forward operator. 

\paragraph{Annealing Schedule.}
Let $n=mN+i$ denote the global time node index, where $m=0,\ldots,M-1$ is the block index and $i=0,\ldots,N-1$ is the local node index.
Recall that we use the geometric schedule
\begin{equation}
\sigma_n=\max\{\sigma_{\max}\xi^n,\sigma_{\min}\},
\end{equation}
where $0<\xi<1$ is the annealing parameter.
For the score network weight schedule $\alpha_n$, we also use a corresponding geometric sequence 
\begin{equation}
    \alpha_n=(\sigma_n/\sigma_{\min})^a
\end{equation}
This schedule is non-increasing and becomes one once $\sigma_n$ reaches $\sigma_{\min}$.
For the MRI and CT experiments, we set $a=2$; for the deblurring and Rician denoising experiments, we set $a=5/2$.

\paragraph{Picard stopping criteria.} 
For deblurring and CT experiments, we employ adaptive Picard refinement. We compute the empirical in-block residual
\begin{equation}
\varepsilon_{m,k}=\max\limits_{1 \leq i \leq N}\left\|\xbm_{mN+i}^{(k)}- \xbm_{m N+i}^{(k-1)}\right\| .
\end{equation}
The refinement within a block is terminated when either $k$ reaches $K=20$ or the normalized residual $\varepsilon_{m,k}/d<3\times 10^{-4}$.

\paragraph{Hyperparameter selection.}
The hyperparameters are tuned separately for each method and imaging task to obtain the best reconstruction results.
The gradient-based and proximal variants use different drift parameterizations and are therefore assigned method-specific stepsizes.
In particular, the implicit treatment of the likelihood in the proximal variants may improve numerical stability and permit larger stepsizes in practice.
For each paired sequential and Picard-based method, we use the same Langevin stepsize, score schedule, likelihood weight, and total number of time nodes.
The complete hyperparameter settings are reported in Table~\ref{tab:algorithmic_hyperparameters}.

\begin{table}[ht]
\centering
\scriptsize
\setlength{\tabcolsep}{3.5pt}
\renewcommand{\arraystretch}{1.08}
\begin{tabular}{llccccccccc}
\toprule
Inverse problem
& Method
& $\lambda$
& $\gamma$
& $\eta$
& $\xi$
& $\sigma_{\min}$
& $\sigma_{\max}$
& $M$
& $N$
& $K$ \\
\midrule

\multirow{4}{*}{MRI}
& PiL-MC  & $1.6\times 10^3$ & $6.25\times10^{-5}$ & -- & $1$ & $0.05$ & $0.05$ & $10$ & $75$ & $100$ \\
& APiL-MC & $1.6\times 10^3$ & $6.25\times10^{-5}$ & -- &$0.98$ & $0.05$ & $0.2$& $10$ & $75$ & $100$ \\
& PiX-MC  & $3.5\times10^3$ & $8.75\times10^{-5}$ & $4.375\times 10^{-4}$ & $1$& $0.05$ & $0.05$ & $10$ & $75$ & $100$ \\
& APiX-MC & $3.5\times10^3$ & $8.75\times10^{-5}$ & $4.375\times 10^{-4}$ & $0.98$ & $0.05$ & $0.2$&  $10$ & $75$ & $100$ \\
\midrule

\multirow{4}{*}{Rician}
& PiL-MC  & $2.2$ & $1\times 10^{-4}$ & -- & $1$ & $0.1$ & $0.1$ & $5$ & $100$ & $50$ \\
& APiL-MC & $2.2$ & $1\times 10^{-4}$ & -- & $0.98$ & $0.1$ & $0.4$ & $5$ & $100$ & $50$ \\
& PiX-MC  & $2.2$ & $1\times 10^{-4}$ & $1\times 10^{-3}$ & $1$& $0.1$ &$0.1$ &  $5$ & $100$ & $50$ \\
& APiX-MC & $2.2$ & $1\times 10^{-4}$ & $1\times 10^{-3}$ & $0.98$& $0.1$& $0.4$& $5$ & $100$ & $50$ \\
\midrule

\multirow{4}{*}{Deblurring}
& PiL-MC  & $800$ & $1\times 10^{-4}$ & -- & $1$ & $0.1$ & $0.1$ & $125$ &$8$ &  $20$ \\
& APiL-MC & $1100$ & $1\times 10^{-4}$ & -- & $0.98$ & $0.1$ & $2.0$ & $125$ &$8$ &  $20$ \\
& PiX-MC  & $900$ & $1\times 10^{-4}$ & $1\times 10^{-3}$ & $1$ & $0.1$ & $0.1$& $125$ & $8$ & $20$ \\
& APiX-MC & $1150$ & $1\times 10^{-4}$ & $1\times 10^{-3}$ & $0.98$ & $0.1$ & $2.0$ & $125$& $8$ & $20$ \\
\midrule

\multirow{4}{*}{3D CT}
& PiL-MC  & $1.11$ & $1\times 10^{-5}$ & -- & $1$ & $0.03$ &$ 0.03$ & $3875$ & $8 $ & $20$ \\
& APiL-MC & $1.11$ & $1\times 10^{-5}$ & -- & $0.995$ &$0.03$ & $0.5$ & $3875$ & $8 $ & $20$  \\
& PiX-MC  & $2.22$ & $4.5\times 10^{-5}$ & $9\times 10^{-4}$ & $1$ & $0.085$ & $0.085$ & $3875$ & $8 $ & $20$  \\
& APiX-MC & $2.22$ & $4.5\times 10^{-5}$ & $9\times 10^{-4}$ & $0.999$ & $0.085$ & $0.5$ & $3875$ & $8 $ & $20$ \\
\bottomrule
\end{tabular}
\caption{Algorithmic hyperparameter settings for the real-world imaging experiments.}
\label{tab:algorithmic_hyperparameters}
\end{table}

\subsection{Likelihood and Proximal Implementations}\label{app:E2 likelihood}
We next provide the task-specific implementations of the likelihood gradient and proximal operator.
The datasets, score networks, and hyperparameters are reported in \$ \ref{sec:experiments} and Appendix \ref{app:parameters}.

\paragraph{MRI reconstruction.}
The forward operator $\Abm=\Mbm\mathcal{F}$ consists of the discrete Fourier transform $\mathcal{F}$ followed by a radial subsampling mask. The likelihood and its gradient are 
\begin{equation}
    L(\xbm)=\displaystyle\frac{\lambda}{2}\|\Mbm\mathcal{F}\xbm - \ybm\|^2, \quad \nabla L(\xbm) = \lambda\mathcal{F}^\mathrm{H} \Mbm^\mathrm{T} \left( \Mbm \mathcal{F} \xbm-\ybm\right).
\end{equation}
The proximal operator has the closed form
\begin{equation}
    \prox_{\eta L}(\zbm) = \left(\eta\lambda \mathcal{F}^\mathrm{H}\Mbm^\mathrm{T}\Mbm \mathcal{F} + I\right)^{-1}\left(\zbm+\eta\lambda \mathcal{F}^\mathrm{H} \Mbm^\mathrm{T} \ybm\right).
\end{equation}
Since $\mathcal{F}$ is unitary and $\Mbm^\mathrm{T}\Mbm$ is diagonal in the Fourier domain, the inverse above can be evaluated elementwise in k-space, without the need to solve a large-scale linear system.

\paragraph{Rician denoising.}
Let $\ebm_{\mathsf{real}}$ and $\ebm_{\mathsf{imag}}$ be two i.i.d. Gaussian noise vectors with zero mean and standard deviation $\sigma_{\ybm}$. The Rician-noise-corrupted observation $\ybm$ is obtained by:
\begin{equation}
    \ybm = \sqrt{(\xbm+\ebm_{\mathsf{real}})^2+\ebm_{\mathsf{imag}}^2},
\end{equation}
where theoperations are elementwise.
According to \citep{gudbjartsson1995rician}, the conditional probability density function (PDF) of $\ybm$ is
\begin{equation}
    p(\ybm|\xbm)=\prod_{r=1}^d\frac{\ybm_r}{\sigma_{\ybm}^2}\exp\left(-\frac{\xbm_r^2+\ybm_r^2}{2\sigma_{\ybm}^2}\right)I_0\left(\frac{\xbm_r \ybm_r}{\sigma_{\ybm}^2}\right),
\end{equation}
which is known as the \textit{Rice} or \textit{Rician} distribution. Here $I_0$ is the modified Bessel function of the first kind of order zero.
Consider the modified Bessel's differential equations:
\begin{align*}
x^2 y'' +xy' -(x^2 + n^2)y=0, \quad n\ge 0.
\end{align*}
The solutions of the previous equation $I_n(x)=y(x)$ continuous in zero are called the modified Bessel function of the first kind of order $n$. 

After omitting terms independent of $\xbm$, the negative log-likelihood $L$ for Rician noise removal is 
\begin{equation}
    L(\xbm) = \lambda \displaystyle \left\langle \onebm, \frac{\xbm^2}{2\sigma^2_{\ybm}}-\log I_0 \left( \frac{\xbm\odot\ybm}{\sigma^2_{\ybm}}\right) \right\rangle.
\end{equation}
Note that according to Lemma 12 in \citep{renaud2026provably}, $L$ is Lipschitz smooth, thus satisfies Assumption \ref{ass: L and V 1}.
Following \citep{wei2023nonconvex}, 
its gradient is evaluated using the ratio $B(\xbm)=I_1(\xbm)/I_0(\xbm)$, and the pixel-wise proximal subproblem is solved using the iterative reweighted $\ell_1$ (IRL1) method. 
The IRL1 algorithm for solving $\prox_{\eta L}(\bm{z})$ is given as follows:
\begin{equation}
\begin{array}{rl}
\bm{x}^{t+\frac{1}{2}} &=\displaystyle\frac{\bm{y}}{\sigma_y^2}B\left(\frac{\bm{x}^t\bm{ \odot y}}{\sigma_y^2}\right)  \\
\bm{x}^{t+1} &= \displaystyle\arg\min\limits_{\xbm} \eta\lambda\left\langle \textbf{1}, \frac{\bm{x}^2}{2\sigma_{\ybm}^2}-\frac{\bm{x\odot y}}{\sigma_{\ybm}^2}\right\rangle+\frac{1}{2}\|\bm{x-z}\|^2-\eta\lambda\langle \bm{x}^{t+\frac{1}{2}}, \bm{x}\rangle\vspace{0.5ex}\vspace{0.5ex}\\
&=\displaystyle\frac{ \displaystyle\bm{z}+\frac{\eta\lambda}{\sigma_{\ybm}^2}\bm{y}+\eta\lambda \bm{x}^{t+\frac{1}{2}}}{1+\displaystyle\frac{\eta\lambda}{\sigma_{\ybm}^2}}.
\end{array}
\end{equation}
In experiments, we set the maximum inner iteration number for IRL1 to be $10$ because it typically solves the proximal sub-problem within $5$ iterations.

\paragraph{Image deblurring.}
Following \citep{renaud2026provably}, we model the forward operator as a circular convolution, which is diagonalized by the discrete Fourier transform. 
The likelihood is $L(\xbm)=\frac{\lambda}{2}\|\Kbm\xbm-\ybm\|^2$, where $\Kbm$ is the circular convolution operator associated with a blur kernel $\kbm$. 
The likelihood gradient and proximal operator are
\begin{equation}
    \nabla L(\xbm)=\lambda \Kbm^{\mathrm{T}}(\Kbm\xbm-\ybm), \quad \prox_{\eta L}(\zbm)=\left(\eta\lambda\Kbm^{\mathrm{T}}\Kbm + I\right)^{-1} (\zbm+\eta\lambda \Kbm^{\mathrm{T}}\ybm).
\end{equation}
The proximal operator can be evaluated efficiently in the Fourier domain through elementwise multiplication and division:
\begin{equation}
        \prox_{\eta L}(\zbm) = \displaystyle\mathcal{F}^{\mathrm{H}}\left(
        \frac{\mathcal{F}(\zbm+\eta\lambda \Kbm^{\mathrm{T}}\ybm)}{\mathcal{F}(\eta\lambda\Kbm^\mathrm{T}\Kbm + \onebm)}\right).
\end{equation}

\paragraph{3D CT reconstruction.} 
The quadratic likelihood potential and its gradient are given by
\begin{equation}
    L(\xbm)=\frac{\lambda}{2}\|\Abm\xbm-\ybm\|^2, \quad \nabla L(\xbm) = \lambda \Abm^\mathrm{T}(\Abm\xbm-\ybm).
\end{equation}
The proximal operator is evaluated by solving the following linear system:
\begin{equation}
    \xbm=\prox_{\eta L}(\zbm)\iff (I+\eta\lambda \Abm^\mathrm{T}\Abm) \xbm = \zbm + \eta\lambda \Abm^\mathrm{T} \ybm.
\end{equation}
We solve this linear system using five conjugate gradient iterations \citep{shewchuk1994introduction}.
To promote consistency between adjacent slices, we additionally use a one-dimensional inter-slice Huber-TV (HTV) regularizer.
For $\xbm\in\R^{512\times512\times80}$, HTV regularizer $\Hbm(\xbm)$ with smoothing parameter $\delta_{\textsf{H}}>0$ is
\begin{equation}
\Hbm(\xbm)=\sum\limits_{i=1}^{512}\sum\limits_{j=1}^{512}\sum\limits_{s=1}^{79}H_{\delta_{\textsf{H}}}( \xbm(i,j,s+1)-\xbm(i,j,s)),
\end{equation}
where $\xbm(i,j,s)$ denotes the element located at $i$-th row, $j$-th column in $s$-th slice.
For any $x\in \R$, $H_{\delta_{\textsf{H}}}(x)$ is defined as
\begin{equation}
    H_{\delta_{\textsf{H}}}(x):=\left\{
    \begin{array}{lll}
           &\displaystyle \frac{x^2}{2\delta_{\textsf{H}}}, & \text{if } |x|\le\delta_{\textsf{H}};
           \\ 
         &|x| -\displaystyle\frac{\delta_{\textsf{H}}}{2},& \text{if }|x|>\delta_{\textsf{H}}.
    \end{array}\right.
\end{equation}
We set $\delta_{\textsf{H}}=10^{-8}$ in the experiments. 
We view HTV as an inter-slice prior for CT volumes characterizing correlations between adjacent slices.
Accordingly, the proximal drift $\Tsfit_{\eta,\sigma}$ is
\begin{equation}
    \Tsfit_{\eta,\sigma}\defn\frac{1}{\eta}\Big(I-\prox_{\eta L}\circ\big(I-\eta \Ssfit_\theta(\cdot;\sigma)-\eta\beta_H\nabla \Hbm\big)\Big),
\end{equation}
where $\beta_H>0$ is a balancing parameter for $\Hbm$. We set $\beta_H=0.28$ for all methods.

\vskip 0.2in
\bibliographystyle{plain}
\bibliography{references}

@article{yang2022convergence,
  title={Convergence of the inexact Langevin algorithm and score-based generative models in KL divergence},
  author={Yang, Kaylee Yingxi and Wibisono, Andre},
  journal={arXiv preprint arXiv:2211.01512},
  year={2022}
}

@article{chen2023probability,
  title={The probability flow ode is provably fast},
  author={Chen, Sitan and Chewi, Sinho and Lee, Holden and Li, Yuanzhi and Lu, Jianfeng and Salim, Adil},
  journal={Advances in Neural Information Processing Systems},
  volume={36},
  pages={68552--68575},
  year={2023}
}

@article{lee2022convergence,
  title={Convergence for score-based generative modeling with polynomial complexity},
  author={Lee, Holden and Lu, Jianfeng and Tan, Yixin},
  journal={Advances in Neural Information Processing Systems},
  volume={35},
  pages={22870--22882},
  year={2022}
}

@inproceedings{raginsky2017non,
  title={Non-convex learning via stochastic gradient langevin dynamics: a nonasymptotic analysis},
  author={Raginsky, Maxim and Rakhlin, Alexander and Telgarsky, Matus},
  booktitle={Conference on Learning Theory},
  pages={1674--1703},
  year={2017},
  organization={PMLR}
}

@misc{shewchuk1994introduction,
  title={An introduction to the conjugate gradient method without the agonizing pain},
  author={Shewchuk, Jonathan Richard and others},
  year={1994},
  publisher={Carnegie-Mellon University. Department of Computer Science Pittsburgh}
}

@article{wei2023nonconvex,
  title={Nonconvex Rician noise removal via convergent plug-and-play framework},
  author={Wei, Deliang and Weng, Shiyang and Li, Fang},
  journal={Applied Mathematical Modelling},
  volume={123},
  pages={197--212},
  year={2023},
  publisher={Elsevier}
}

@inproceedings{jayasumana2024rethinking,
  title={Rethinking fid: Towards a better evaluation metric for image generation},
  author={Jayasumana, Sadeep and Ramalingam, Srikumar and Veit, Andreas and Glasner, Daniel and Chakrabarti, Ayan and Kumar, Sanjiv},
  booktitle={Proceedings of the IEEE/CVF conference on computer vision and pattern recognition},
  pages={9307--9315},
  year={2024}
}

@inproceedings{karras2019style,
  title={A style-based generator architecture for generative adversarial networks},
  author={Karras, Tero and Laine, Samuli and Aila, Timo},
  booktitle={Proceedings of the IEEE/CVF conference on computer vision and pattern recognition},
  pages={4401--4410},
  year={2019}
}

@article{pan2016l_0,
  title={$ l\_0 $-regularized intensity and gradient prior for deblurring text images and beyond},
  author={Pan, Jinshan and Hu, Zhe and Su, Zhixun and Yang, Ming-Hsuan},
  journal={IEEE transactions on pattern analysis and machine intelligence},
  volume={39},
  number={2},
  pages={342--355},
  year={2016},
  publisher={IEEE}
}

@inproceedings{kim2023differentiable,
  title={Differentiable Forward Projector for X-ray Computed Tomography},
  author={Kim, Hyojin and Champley, Kyle},
  booktitle={ICML 2023 Workshop on Differentiable Almost Everything: Differentiable Relaxations, Algorithms, Operators, and Simulators}
}

@inproceedings{levin2009understanding,
  title={Understanding and evaluating blind deconvolution algorithms},
  author={Levin, Anat and Weiss, Yair and Durand, Fredo and Freeman, William T},
  booktitle={2009 IEEE conference on computer vision and pattern recognition},
  pages={1964--1971},
  year={2009},
  organization={IEEE}
}

@InProceedings{Agustsson_2017_CVPR_Workshops,
	author = {Agustsson, Eirikur and Timofte, Radu},
	title = {NTIRE 2017 Challenge on Single Image Super-Resolution: Dataset and Study},
	booktitle = {The IEEE Conference on Computer Vision and Pattern Recognition (CVPR) Workshops},
	month = {July},
	year = {2017}
}

@inproceedings{martin2001database,
  title={A database of human segmented natural images and its application to evaluating segmentation algorithms and measuring ecological statistics},
  author={Martin, David and Fowlkes, Charless and Tal, Doron and Malik, Jitendra},
  booktitle={Proceedings eighth IEEE international conference on computer vision. ICCV 2001},
  volume={2},
  pages={416--423},
  year={2001},
  organization={Ieee}
}

@article{gudbjartsson1995rician,
  title={The Rician distribution of noisy MRI data},
  author={Gudbjartsson, H{\'a}kon and Patz, Samuel},
  journal={Magnetic resonance in medicine},
  volume={34},
  number={6},
  pages={910--914},
  year={1995},
  publisher={Wiley Online Library}
}

@article{zbontar2018fastmri,
  title={fastMRI: An open dataset and benchmarks for accelerated MRI},
  author={Zbontar, Jure and Knoll, Florian and Sriram, Anuroop and Murrell, Tullie and Huang, Zhengnan and Muckley, Matthew J and Defazio, Aaron and Stern, Ruben and Johnson, Patricia and Bruno, Mary and others},
  journal={arXiv preprint arXiv:1811.08839},
  year={2018}
}

@article{karras2022elucidating,
  title={Elucidating the design space of diffusion-based generative models},
  author={Karras, Tero and Aittala, Miika and Aila, Timo and Laine, Samuli},
  journal={Advances in neural information processing systems},
  volume={35},
  pages={26565--26577},
  year={2022}
}

@book{nesterov2004,
	author = {Y. Nesterov},
	publisher = {Kluwer Academic Publishers},
	title = {Introductory Lectures on Convex Optimization: A Basic Course},
	year = {2004}}

@article{vincent2011a,
	author = {Vincent, Pascal},
	journal = {Neural Comput.},
	number = {7},
	pages = {1661--1674},
	title = {A Connection between Score Matching and Denoising Autoencoders},
	volume = {23},
	year = {2011}}

@inbook{beck.teboulle2009convex,
	author = {A. Beck and M. Teboulle},
	chapter = {Gradient-Based Algorithms with Applications to Signal Recovery Problems},
	pages = {42--88},
	publisher = {Cambridge},
	title = {Convex Optimization in Signal Processing and Communications},
	year = {2009}}

@article{afonso2010fast,
	author = {M. V. Afonso and J. M.Bioucas-Dias and M. A. T. Figueiredo},
	journal = {IEEE Trans. Image Process.},
	month = {Sep.},
	number = {9},
	pages = {2345--2356},
	title = {Fast Image Recovery Using Variable Splitting and Constrained Optimization},
	volume = {19},
	year = {2010}}

@inproceedings{song2019generative,
	author = {Song, Yang and Ermon, Stefano},
	booktitle = {Advances in Neural Information Processing Systems},
	title = {Generative Modeling by Estimating Gradients of the Data Distribution},
	volume = {32},
	year = {2019}}

@article{beck.teboulle2009fast,
	author = {A. Beck and M. Teboulle},
	journal = {IEEE Trans. Image Process.},
	month = {November},
	number = {11},
	pages = {2419--2434},
	title = {Fast Gradient-Based Algorithm for Constrained Total Variation Image Denoising and Deblurring Problems},
	volume = {18},
	year = {2009}}

@article{boyd2011,
	author = {S. Boyd and N. Parikh and E. Chu and B. Peleato and J. Eckstein},
	journal = {Found. Trends Mach. Learn.},
	number = {1},
	pages = {1--122},
	title = {Distributed optimization and statistical learning via the alternating direction method of multipliers},
	volume = {3},
	year = {2011}}

@article{donoho2006compressed,
	author = {D. L. Donoho},
	journal = {IEEE Trans. Inf. Theory},
	month = {April},
	number = {4},
	pages = {1289--1306},
	title = {Compressed Sensing},
	volume = {52},
	year = {2006}}

@article{candes2006robust,
	author = {E. J. Cand{\`e}s and J. Romberg and T. Tao},
	journal = {IEEE Trans. Inf. Theory},
	month = {February},
	number = {2},
	pages = {489--509},
	title = {Robust Uncertainty Principles: Exact Signal Reconstruction From Highly Incomplete Frequency Information},
	volume = {52},
	year = {2006}}

@article{neal2001,
	author = {Neal, Radford M.},
	journal = {Statistics and Computing},
	number = {2},
	pages = {125--139},
	title = {Annealed importance sampling},
	volume = {11},
	year = {2001}}

@article{parisi1981correlation,
  title={Correlation functions and computer simulations},
  author={Parisi, Giorgio},
  journal={Nuclear Physics B},
  volume={180},
  number={3},
  pages={378--384},
  year={1981},
  publisher={Elsevier}
}

@inproceedings{zhou2018distributed,
  title={Distributed asynchronous optimization with unbounded delays: How slow can you go?},
  author={Zhou, Zhengyuan and Mertikopoulos, Panayotis and Bambos, Nicholas and Glynn, Peter and Ye, Yinyu and Li, Li-Jia and Fei-Fei, Li},
  booktitle={International Conference on Machine Learning},
  pages={5970--5979},
  year={2018},
  organization={PMLR}
}

@inproceedings{hannah2019a2bcd,
  title={A2BCD: Asynchronous acceleration with optimal complexity},
  author={Hannah, Robert and Feng, Fei and Yin, Wotao},
  booktitle={International Conference on Learning Representations},
  year={2019}
}

@article{richtarik2016parallel,
  title={Parallel coordinate descent methods for big data optimization},
  author={Richt{\'a}rik, Peter and Tak{\'a}{\v{c}}, Martin},
  journal={Mathematical Programming},
  volume={156},
  number={1},
  pages={433--484},
  year={2016},
  publisher={Springer}
}

@article{sun2017asynchronous,
  title={Asynchronous coordinate descent under more realistic assumptions},
  author={Sun, Tao and Hannah, Robert and Yin, Wotao},
  journal={Advances in Neural Information Processing Systems},
  volume={30},
  year={2017}
}

@article{lian2015asynchronous,
  title={Asynchronous parallel stochastic gradient for nonconvex optimization},
  author={Lian, Xiangru and Huang, Yijun and Li, Yuncheng and Liu, Ji},
  journal={Advances in neural information processing systems},
  volume={28},
  year={2015}
}

@article{lian2017can,
  title={Can decentralized algorithms outperform centralized algorithms? a case study for decentralized parallel stochastic gradient descent},
  author={Lian, Xiangru and Zhang, Ce and Zhang, Huan and Hsieh, Cho-Jui and Zhang, Wei and Liu, Ji},
  journal={Advances in neural information processing systems},
  volume={30},
  year={2017}
}

@article{fercoq2015accelerated,
  title={Accelerated, parallel, and proximal coordinate descent},
  author={Fercoq, Olivier and Richt{\'a}rik, Peter},
  journal={SIAM Journal on Optimization},
  volume={25},
  number={4},
  pages={1997--2023},
  year={2015},
  publisher={SIAM}
}

@article{peng2016arock,
  title={Arock: an algorithmic framework for asynchronous parallel coordinate updates},
  author={Peng, Zhimin and Xu, Yangyang and Yan, Ming and Yin, Wotao},
  journal={SIAM Journal on Scientific Computing},
  volume={38},
  number={5},
  pages={A2851--A2879},
  year={2016},
  publisher={SIAM}
}

@article{recht2011hogwild,
  title={Hogwild!: A lock-free approach to parallelizing stochastic gradient descent},
  author={Recht, Benjamin and Re, Christopher and Wright, Stephen and Niu, Feng},
  journal={Advances in neural information processing systems},
  volume={24},
  year={2011}
}

@inproceedings{sun2021asyncred,
title={{Async-RED}: A Provably Convergent Asynchronous Block Parallel Stochastic Method using Deep Denoising Priors},
author={Yu Sun and Jiaming Liu and Yiran Sun and Brendt Wohlberg and Ulugbek Kamilov},
booktitle={International Conference on Learning Representations},
year={2021}
}

@book{brooks2011handbook,
	author = {Brooks, Steve and Gelman, Andrew and Jones, Galin and Meng, Xiao-Li},
	publisher = {CRC press},
	title = {Handbook of markov chain monte carlo},
	year = {2011}}

@article{jalal2021robust,
  title={Robust compressed sensing MRI with deep generative priors},
  author={Jalal, Ajil and Arvinte, Marius and Daras, Giannis and Price, Eric and Dimakis, Alexandros G and Tamir, Jon},
  journal={Advances in neural information processing systems},
  volume={34},
  pages={14938--14954},
  year={2021}
}

@article{melidonis2023efficient,
  title={Efficient Bayesian computation for low-photon imaging problems},
  author={Melidonis, Savvas and Dobson, Paul and Altmann, Yoann and Pereyra, Marcelo and Zygalakis, Konstantinos},
  journal={SIAM Journal on Imaging Sciences},
  volume={16},
  number={3},
  pages={1195--1234},
  year={2023},
  publisher={SIAM}
}

@inproceedings{spagnoletti2025latino,
  title={Latino-pro: Latent consistency inverse solver with prompt optimization},
  author={Spagnoletti, Alessio and Prost, Jean and Almansa, Andr{\'e}s and Papadakis, Nicolas and Pereyra, Marcelo},
  booktitle={Proceedings of the IEEE/CVF International Conference on Computer Vision},
  pages={19597--19607},
  year={2025}
}

@article{coeurdoux2024normalizing,
  title={Normalizing flow sampling with Langevin dynamics in the latent space},
  author={Coeurdoux, Florentin and Dobigeon, Nicolas and Chainais, Pierre},
  journal={Machine Learning},
  volume={113},
  number={11},
  pages={8301--8326},
  year={2024},
  publisher={Springer}
}

@article{holden2022bayesian,
  title={Bayesian imaging with data-driven priors encoded by neural networks},
  author={Holden, Matthew and Pereyra, Marcelo and Zygalakis, Konstantinos C},
  journal={SIAM Journal on Imaging Sciences},
  volume={15},
  number={2},
  pages={892--924},
  year={2022},
  publisher={SIAM}
}

@article{durmus2019analysis,
  title={Analysis of Langevin Monte Carlo via convex optimization},
  author={Durmus, Alain and Majewski, Szymon and Miasojedow, B{\l}a{\.z}ej},
  journal={Journal of Machine Learning Research},
  volume={20},
  number={73},
  pages={1--46},
  year={2019}
}

@inproceedings{lau2022non,
  title={Non-log-concave and nonsmooth sampling via Langevin Monte Carlo algorithms},
  author={Lau, Tim Tsz-Kit and Liu, Han and Pock, Thomas},
  booktitle={INdAM Workshop: Advanced Techniques in Optimization for Machine learning and Imaging},
  pages={83--149},
  year={2022},
  organization={Springer}
}

@article{durmus2018efficient,
  title={Efficient bayesian computation by proximal markov chain monte carlo: when langevin meets moreau},
  author={Durmus, Alain and Moulines, Eric and Pereyra, Marcelo},
  journal={SIAM Journal on Imaging Sciences},
  volume={11},
  number={1},
  pages={473--506},
  year={2018},
  publisher={SIAM}
}

@inproceedings{ding2021random,
  title={Random coordinate langevin monte carlo},
  author={Ding, Zhiyan and Li, Qin and Lu, Jianfeng and Wright, Stephen J},
  booktitle={Conference on learning theory},
  pages={1683--1710},
  year={2021},
  organization={PMLR}
}

@article{salim2020primal,
  title={Primal dual interpretation of the proximal stochastic gradient Langevin algorithm},
  author={Salim, Adil and Richtarik, Peter},
  journal={Advances in Neural Information Processing Systems},
  volume={33},
  pages={3786--3796},
  year={2020}
}

@article{salim2019stochastic,
  title={Stochastic proximal Langevin algorithm: Potential splitting and nonasymptotic rates},
  author={Salim, Adil and Kovalev, Dmitry and Richt{\'a}rik, Peter},
  journal={Advances in Neural Information Processing Systems},
  volume={32},
  year={2019}
}

@article{wibisono2019proximal,
  title={Proximal langevin algorithm: Rapid convergence under isoperimetry},
  author={Wibisono, Andre},
  journal={arXiv preprint arXiv:1911.01469},
  year={2019}
}

@article{renaud2026stability,
  title={From stability of Langevin diffusion to convergence of proximal MCMC for non-log-concave sampling},
  author={Renaud, Marien and De Bortoli, Valentin and Leclaire, Arthur and Papadakis, Nicolas},
  journal={Advances in Neural Information Processing Systems},
  volume={38},
  pages={115709--115773},
  year={2026}
}

@article{gross1975logarithmic,
  title={Logarithmic sobolev inequalities},
  author={Gross, Leonard},
  journal={American Journal of Mathematics},
  volume={97},
  number={4},
  pages={1061--1083},
  year={1975},
  publisher={JSTOR}
}

@inproceedings{
renaud2026provably,
title={Provably Accelerated Imaging with Restarted Inertia and Score-based Image Priors},
author={Marien Renaud and Julien Hermant and Deliang Wei and Yu Sun},
booktitle={The Fourteenth International Conference on Learning Representations},
year={2026},
url={https://openreview.net/forum?id=8pQsiFyTQi}
}

@inproceedings{so2025pcm,
  title={PCM: Picard Consistency Model for Fast Parallel Sampling of Diffusion Models},
  author={So, Junhyuk and Shin, Jiwoong and Jang, Chaeyeon and Park, Eunhyeok},
  booktitle={Proceedings of the Computer Vision and Pattern Recognition Conference},
  pages={23313--23322},
  year={2025}
}

@misc{mahajan2025fast,
  title         = {Fast and Efficient Parallel Sampling Using Higher Order Langevin Dynamics},
  author        = {Mahajan, Jaideep and Zhang, Kaihong and Liang, Feng and Liu, Jingbo},
  year          = {2025},
  eprint        = {2510.18242},
  archivePrefix = {arXiv},
  primaryClass  = {math.ST},
  doi           = {10.48550/arXiv.2510.18242}
}

@misc{
zhou2025parallel,
title={Parallel simulation for sampling under isoperimetry and score-based diffusion models},
author={Huanjian Zhou and Masashi Sugiyama},
year={2025},
url={https://openreview.net/forum?id=6Gb7VfTKY7}
}

@article{xu2024provably,
  title={Provably robust score-based diffusion posterior sampling for plug-and-play image reconstruction},
  author={Xu, Xingyu and Chi, Yuejie},
  journal={Advances in Neural Information Processing Systems},
  volume={37},
  pages={36148--36184},
  year={2024}
}

@inproceedings{bouman2023generative,
  title={Generative plug and play: Posterior sampling for inverse problems},
  author={Bouman, Charles A and Buzzard, Gregery T},
  booktitle={2023 59th annual Allerton conference on communication, control, and computing (Allerton)},
  pages={1--7},
  year={2023},
  organization={IEEE}
}

@article{coeurdoux2024plug,
  title={Plug-and-play split gibbs sampler: embedding deep generative priors in bayesian inference},
  author={Coeurdoux, Florentin and Dobigeon, Nicolas and Chainais, Pierre},
  journal={IEEE Transactions on Image Processing},
  volume={33},
  pages={3496--3507},
  year={2024},
  publisher={IEEE}
}

@article{vono2019split,
  title={Split-and-augmented Gibbs sampler—Application to large-scale inference problems},
  author={Vono, Maxime and Dobigeon, Nicolas and Chainais, Pierre},
  journal={IEEE Transactions on Signal Processing},
  volume={67},
  number={6},
  pages={1648--1661},
  year={2019},
  publisher={IEEE}
}

@article{laumont2022bayesian,
  title={Bayesian imaging using plug \& play priors: when langevin meets tweedie},
  author={Laumont, R{\'e}mi and Bortoli, Valentin De and Almansa, Andr{\'e}s and Delon, Julie and Durmus, Alain and Pereyra, Marcelo},
  journal={SIAM Journal on Imaging Sciences},
  volume={15},
  number={2},
  pages={701--737},
  year={2022},
  publisher={SIAM}
}

@inproceedings{zhang2025improving,
  title={Improving diffusion inverse problem solving with decoupled noise annealing},
  author={Zhang, Bingliang and Chu, Wenda and Berner, Julius and Meng, Chenlin and Anandkumar, Anima and Song, Yang},
  booktitle={Proceedings of the Computer Vision and Pattern Recognition Conference},
  pages={20895--20905},
  year={2025}
}

@article{chung2022score,
  title={Score-based diffusion models for accelerated MRI},
  author={Chung, Hyungjin and Ye, Jong Chul},
  journal={Medical image analysis},
  volume={80},
  pages={102479},
  year={2022},
  publisher={Elsevier}
}

@inproceedings{hu2026prism,
  title={PRISM: Probabilistic and Robust Inverse Solver with Measurement-Conditioned Diffusion Prior for Blind Inverse Problems},
  author={Hu, Yuanyun and Bell, Evan and Wang, Guijin and Sun, Yu},
  booktitle={ICASSP 2026-2026 IEEE International Conference on Acoustics, Speech and Signal Processing (ICASSP)},
  pages={11432--11436},
  year={2026},
  organization={IEEE}
}

@inproceedings{chung2023parallel,
  title={Parallel diffusion models of operator and image for blind inverse problems},
  author={Chung, Hyungjin and Kim, Jeongsol and Kim, Sehui and Ye, Jong Chul},
  booktitle={Proceedings of the IEEE/CVF conference on computer vision and pattern recognition},
  pages={6059--6069},
  year={2023}
}

@inproceedings{feng2023score,
  title={Score-based diffusion models as principled priors for inverse imaging},
  author={Feng, Berthy T and Smith, Jamie and Rubinstein, Michael and Chang, Huiwen and Bouman, Katherine L and Freeman, William T},
  booktitle={Proceedings of the IEEE/CVF international conference on computer vision},
  pages={10520--10531},
  year={2023}
}

@article{wu2024principled,
  title={Principled probabilistic imaging using diffusion models as plug-and-play priors},
  author={Wu, Zihui and Sun, Yu and Chen, Yifan and Zhang, Bingliang and Yue, Yisong and Bouman, Katherine L},
  journal={Advances in Neural Information Processing Systems},
  volume={37},
  pages={118389--118427},
  year={2024}
}

@article{sun2024provable,
  title={Provable probabilistic imaging using score-based generative priors},
  author={Sun, Yu and Wu, Zihui and Chen, Yifan and Feng, Berthy T and Bouman, Katherine L},
  journal={IEEE Transactions on Computational Imaging},
  volume={10},
  pages={1290--1305},
  year={2024},
  publisher={IEEE}
}

@article{chung2022diffusion,
  title={Diffusion posterior sampling for general noisy inverse problems},
  author={Chung, Hyungjin and Kim, Jeongsol and Mccann, Michael T and Klasky, Marc L and Ye, Jong Chul},
  journal={arXiv preprint arXiv:2209.14687},
  year={2022}
}

@article{moen2021low,
  title={Low-dose {CT} image and projection dataset},
  author={Moen, Taylor R and Chen, Baiyu and Holmes III, David R and Duan, Xinhui and Yu, Zhicong and Yu, Lifeng and Leng, Shuai and Fletcher, Joel G and McCollough, Cynthia H},
  journal={Medical physics},
  volume={48},
  number={2},
  pages={902--911},
  year={2021},
  publisher={Wiley Online Library}
}

@article{song2020score,
  title={Score-based generative modeling through stochastic differential equations},
  author={Song, Yang and Sohl-Dickstein, Jascha and Kingma, Diederik P and Kumar, Abhishek and Ermon, Stefano and Poole, Ben},
  journal={arXiv preprint arXiv:2011.13456},
  year={2020}
}

@inproceedings{dhariwal2021diffusion,
	author = {Prafulla Dhariwal and Alexander Quinn Nichol},
	booktitle = {Advances in Neural Information Processing Systems},
	title = {Diffusion Models Beat {GAN}s on Image Synthesis},
	year = {2021}}

@inproceedings{song2020improved,
	author = {Yang Song and Stefano Ermon},
	booktitle = {Advances in Neural Information Processing Systems},
	title = {Improved Techniques for Training Score-Based Generative Models},
	year = {2020}}

@inproceedings{liu2023dolce,
  title={{DOLCE}: A model-based probabilistic diffusion framework for limited-angle CT reconstruction},
  author={Liu, Jiaming and Anirudh, Rushil and Thiagarajan, Jayaraman J and He, Stewart and Mohan, K Aditya and Kamilov, Ulugbek S and Kim, Hyojin},
  booktitle={Proceedings of the IEEE/CVF international conference on computer vision},
  pages={10498--10508},
  year={2023}
}

@article{dorjsembe2024conditional,
  title={Conditional diffusion models for semantic 3D brain MRI synthesis},
  author={Dorjsembe, Zolnamar and Pao, Hsing-Kuo and Odonchimed, Sodtavilan and Xiao, Furen},
  journal={IEEE Journal of Biomedical and Health Informatics},
  volume={28},
  number={7},
  pages={4084--4093},
  year={2024},
  publisher={IEEE}
}

@article{ho2020denoising,
  title={Denoising diffusion probabilistic models},
  author={Ho, Jonathan and Jain, Ajay and Abbeel, Pieter},
  journal={Advances in neural information processing systems},
  volume={33},
  pages={6840--6851},
  year={2020}
}

@inproceedings{gu2014weighted,
  title={Weighted nuclear norm minimization with application to image denoising},
  author={Gu, Shuhang and Zhang, Lei and Zuo, Wangmeng and Feng, Xiangchu},
  booktitle={Proceedings of the IEEE conference on computer vision and pattern recognition},
  pages={2862--2869},
  year={2014}
}

@article{sauer1992bayesian,
  title={Bayesian estimation of transmission tomograms using segmentation based optimization},
  author={Sauer, Ken and Bouman, Charles},
  journal={IEEE Transactions on Nuclear Science},
  volume={39},
  number={4},
  pages={1144--1152},
  year={1992}
}

@article{rudin1992nonlinear,
  title={Nonlinear total variation based noise removal algorithms},
  author={Rudin, Leonid I and Osher, Stanley and Fatemi, Emad},
  journal={Physica D: nonlinear phenomena},
  volume={60},
  number={1-4},
  pages={259--268},
  year={1992},
  publisher={Elsevier}
}

@inproceedings{balasubramanian2022towards,
  title={Towards a theory of non-log-concave sampling: first-order stationarity guarantees for Langevin Monte Carlo},
  author={Balasubramanian, Krishna and Chewi, Sinho and Erdogdu, Murat A and Salim, Adil and Zhang, Shunshi},
  booktitle={Conference on Learning Theory},
  pages={2896--2923},
  year={2022},
  organization={PMLR}
}

@article{chen2024accelerating,
  title={Accelerating diffusion models with parallel sampling: Inference at sub-linear time complexity},
  author={Chen, Haoxuan and Ren, Yinuo and Ying, Lexing and Rotskoff, Grant M},
  journal={Advances in Neural Information Processing Systems},
  volume={37},
  pages={133661--133709},
  year={2024}
}

@inproceedings{anari2024fast,
  title={Fast parallel sampling under isoperimetry},
  author={Anari, Nima and Chewi, Sinho and Vuong, Thuy-Duong},
  booktitle={The Thirty Seventh Annual Conference on Learning Theory},
  pages={161--185},
  year={2024},
  organization={PMLR}
}

@article{shih2023parallel,
  title={Parallel sampling of diffusion models},
  author={Shih, Andy and Belkhale, Suneel and Ermon, Stefano and Sadigh, Dorsa and Anari, Nima},
  journal={Advances in Neural Information Processing Systems},
  volume={36},
  pages={4263--4276},
  year={2023}
}

@article{efron2011tweedie,
  title={Tweedie’s formula and selection bias},
  author={Efron, Bradley},
  journal={Journal of the American Statistical Association},
  volume={106},
  number={496},
  pages={1602--1614},
  year={2011},
  publisher={Taylor \& Francis}
}

@inproceedings{song2024solving,
  title={Solving inverse problems with latent diffusion models via hard data consistency},
  author={Song, Bowen and Kwon, Soo Min and Zhang, Zecheng and Hu, Xinyu and Qu, Qing and Shen, Liyue},
  booktitle={International Conference on Learning Representations},
  volume={2024},
  pages={7624--7654},
  year={2024}
}

@article{he2025diffusion,
  title={Diffusion models in low-level vision: A survey},
  author={He, Chunming and Shen, Yuqi and Fang, Chengyu and Xiao, Fengyang and Tang, Longxiang and Zhang, Yulun and Zuo, Wangmeng and Guo, Zhenhua and Li, Xiu},
  journal={IEEE Transactions on Pattern Analysis and Machine Intelligence},
  year={2025},
  publisher={IEEE}
}

@ARTICLE{yang2025ct,
  author={Yang, Liutao and Huang, Jiahao and Yang, Guang and Zhang, Daoqiang},
  journal={IEEE Transactions on Medical Imaging}, 
  title={CT-SDM: A Sampling Diffusion Model for Sparse-View CT Reconstruction Across Various Sampling Rates}, 
  year={2025},
  volume={44},
  number={6},
  pages={2581-2593}
}

@article{kazerouni2023diffusion,
  title={Diffusion models in medical imaging: A comprehensive survey},
  author={Kazerouni, Amirhossein and Aghdam, Ehsan Khodapanah and Heidari, Moein and Azad, Reza and Fayyaz, Mohsen and Hacihaliloglu, Ilker and Merhof, Dorit},
  journal={Medical image analysis},
  volume={88},
  pages={102846},
  year={2023},
  publisher={Elsevier}
}

@inproceedings{kawar2022denoising,
    author = {Bahjat Kawar and Michael Elad and Stefano Ermon and Jiaming Song},
    booktitle = {Advances in Neural Information Processing Systems},
    title = {Denoising Diffusion Restoration Models},
    year = {2022}
}

@inproceedings{rout2023solving,
    title={Solving Linear Inverse Problems Provably via Posterior Sampling with Latent Diffusion Models},
    author={Litu Rout and Negin Raoof and Giannis Daras and Constantine Caramanis and Alex Dimakis and Sanjay Shakkottai},
    booktitle={Thirty-seventh Conference on Neural Information Processing Systems},
    year={2023},
    url={https://openreview.net/forum?id=XKBFdYwfRo}
}

@inproceedings{ho2021classifierfree,
title={Classifier-Free Diffusion Guidance},
author={Jonathan Ho and Tim Salimans},
booktitle={NeurIPS 2021 Workshop on Deep Generative Models and Downstream Applications},
year={2021},
}

@incollection{kam2019uncertainty,
	author = {Kampourakis, Kostas and McCain, Kevin},
	booktitle = {Uncertainty: How It Makes Science Advance},
	month = {12},
	publisher = {Oxford University Press},
	title = {How Uncertainty Makes Science Advance},
	year = {2019}
}

@article{begoli2019the,
	author = {Begoli, Edmon and Bhattacharya, Tanmoy and Kusnezov, Dimitri},
	journal = {Nature Machine Intelligence},
	number = {1},
	pages = {20--23},
	title = {The need for uncertainty quantification in machine-assisted medical decision making},
	volume = {1},
	year = {2019}
}

@article{faye2024regularization,
  title={Regularization by denoising: Bayesian model and Langevin-within-split Gibbs sampling},
  author={Faye, Elhadji C and Fall, Mame Diarra and Dobigeon, Nicolas},
  journal={IEEE Transactions on Image Processing},
  volume={34},
  pages={221--234},
  year={2024},
  publisher={IEEE}
}

@article{pereyra2023split,
  title={The split Gibbs sampler revisited: improvements to its algorithmic structure and augmented target distribution},
  author={Pereyra, Marcelo and Vargas-Mieles, Luis A and Zygalakis, Konstantinos C},
  journal={SIAM Journal on Imaging Sciences},
  volume={16},
  number={4},
  pages={2040--2071},
  year={2023},
  publisher={SIAM}
}

@article{chewi2025analysis,
  title={Analysis of langevin monte carlo from poincare to log-sobolev},
  author={Chewi, Sinho and Erdogdu, Murat A and Li, Mufan and Shen, Ruoqi and Zhang, Matthew S},
  journal={Foundations of Computational Mathematics},
  volume={25},
  number={4},
  pages={1345--1395},
  year={2025},
  publisher={Springer}
}

@article{gribonval2020characterization,
  title={A characterization of proximity operators},
  author={Gribonval, R{\'e}mi and Nikolova, Mila},
  journal={Journal of Mathematical Imaging and Vision},
  volume={62},
  number={6},
  pages={773--789},
  year={2020},
  publisher={Springer}
}

@article{vempala2019rapid,
  title={Rapid convergence of the unadjusted langevin algorithm: Isoperimetry suffices},
  author={Vempala, Santosh and Wibisono, Andre},
  journal={Advances in neural information processing systems},
  volume={32},
  year={2019}
}

@article{ji2022amos,
  title={{AMOS}: A large-scale abdominal multi-organ benchmark for versatile medical image segmentation},
  author={Ji, Yuanfeng and Bai, Haotian and Ge, Chongjian and Yang, Jie and Zhu, Ye and Zhang, Ruimao and Li, Zhen and Zhanng, Lingyan and Ma, Wanling and Wan, Xiang and others},
  journal={Advances in neural information processing systems},
  volume={35},
  pages={36722--36732},
  year={2022}
}

\end{document}